\DeclareMathOperator*{\E}{\mathbb{E}}
\let\Pr\relax
\DeclareMathOperator*{\Pr}{\mathbb{P}}
\DeclareMathOperator{\tr}{tr}
\newcommand{\R}{\mathbb{R}}
\DeclareMathOperator*{\argmin}{arg\,min}
\DeclareMathOperator*{\argmax}{arg\,max}
\newcommand{\eqdef}{\mathbin{\stackrel{\rm def}{=}}}
\newcommand{\norm}[1]{\|#1\|}
\newcommand{\bs}[1]{\boldsymbol{#1}}
\newcommand{\bv}[1]{\mathbf{#1}}
\newcommand{\algoname}[1]{\textnormal{\textsc{#1}}}
\newcommand{\specialcell}[2][c]{%
  \begin{tabular}[#1]{@{}c@{}}#2\end{tabular}}
\newtheorem*{rep@theorem}{\rep@title}
\newcommand{\newreptheorem}[2]{%
\newenvironment{rep#1}[1]{%
 \def\rep@title{#2 \ref{##1}}%
 \begin{rep@theorem}}%
 {\end{rep@theorem}}}
\newtheorem{theorem}{Theorem}
\newtheorem{corollary}[theorem]{Corollary}
\newtheorem{lemma}[theorem]{Lemma}
\newtheorem*{lemma*}{Lemma}
\newtheorem{fact}[theorem]{Fact}
\newtheorem{definition}[theorem]{Definition}
\title{Recursive Sampling for the Nystr{\"o}m Method}
\author{
Cameron Musco\\MIT\\ \texttt{cnmusco@mit.edu}
\and
Christopher Musco\\MIT\\ \texttt{cpmusco@mit.edu}
}
\begin{document}
\maketitle

\begin{abstract}
We give the first algorithm for kernel Nystr{\"o}m approximation that runs in \emph{linear time in
the number of training points} and is provably accurate for all kernel matrices, without dependence on regularity or incoherence conditions. The algorithm projects the kernel onto a set of $s$ landmark points sampled by their \emph{ridge leverage scores}, requiring just $O(ns)$ kernel evaluations and $O(ns^2)$ additional runtime.
While leverage score sampling has long been known to give strong theoretical guarantees for Nystr{\"o}m approximation, by employing a fast recursive sampling scheme, our algorithm is the first to make the approach scalable. Empirically we show that it finds more accurate, lower rank kernel approximations in less time than popular techniques such as uniformly sampled Nystr{\"o}m approximation and the random Fourier features method.

\end{abstract}

\thispagestyle{empty}
\clearpage
\setcounter{page}{1}

\section{Introduction}
The kernel method is a powerful tool for applying linear learning algorithms (SVMs, linear regression, etc.) to nonlinear problems. 
The key idea is to map data to a higher dimensional \emph{kernel feature space}, where linear relationships correspond to nonlinear relationships in the original data.

Typically this mapping is implicit. 
A \emph{kernel function} is used to compute inner products in the high-dimensional kernel space, without ever actually mapping original data points to the space. Given $n$ data points $\bv{x}_1,\ldots,\bv{x}_n$, the $n \times n$ kernel matrix $\bv{K}$ is formed where $\bv{K}_{i,j}$ contains the high-dimensional inner product between $\bv{x}_i$ and $\bv{x}_j$, as computed by the kernel function.
All computations required by a linear learning method are performed using the inner product information in $\bv{K}$.

Unfortunately, the transition from linear to nonlinear comes at a high cost. Just generating the entries of $\bv{K}$ requires $\Theta(n^2)$ time, which is prohibitive for large datasets.

\subsection{Kernel approximation}

A large body of work seeks to accelerate kernel methods by finding a compressed, often low-rank, approximation $\bv{\tilde K}$ to the true kernel matrix $\bv{K}$. 
Techniques include random sampling and embedding \cite{ams01,Balcan:2006,avron2014subspace}, \emph{random Fourier feature} methods for shift invariant kernels \cite{rahimi2007random,rahimi2009weighted,le2013fastfood}, and incomplete Cholesky factorization \cite{fine2002efficient,bach2002kernel}. 

One of the most popular techniques is the \emph{Nystr{\" o}m method}, which constructs $\bv{\tilde K}$ using a subset of ``landmark'' data points \cite{williams2001using}. 
Once $s$ data points are selected, $\bv{\tilde K}$ (in factored form) takes just $O(ns)$ kernel evaluations and $O(s^3)$ additional time to compute, requires $O(ns)$ space to store, and can be manipulated quickly in downstream applications. E.g., inverting $\bv{\tilde K}$ takes $O(ns^2)$ time.

The Nystr{\" o}m method performs well in practice \cite{NIPS2012_4588,gittens2013revisiting,stephentu}, is widely implemented \cite{hall2009weka,scikit-learn,libskylark}, and is used in many applications under different names such as ``landmark isomap'' \cite{de2003global} and ``landmark MDS'' \cite{platt2005fastmap}.
 In the classic variant, landmark points are selected uniformly at random. However, significant research seeks to improve performance via data-dependent sampling that selects landmarks which more closely approximate the full kernel matrix than uniformly sampled ones 
\cite{smola2000sparse,drineas2005nystrom,Zhang:2008,belabassNystrom1,kumar2012sampling,wang2013improving,gittens2013revisiting,chengtaodpp}. 

Theoretical work has converged on \emph{leverage score} based approaches, as they give the strongest provable guarantees for both kernel approximation \cite{drineas2008relative,gittens2013revisiting} and statistical performance in downstream applications \cite{alaoui2015fast,rudi2015less,wang2016column}. Leverage scores
capture how important an individual data point is in composing the span of the kernel matrix.

Unfortunately, these scores are prohibitively expensive to compute. All known approximation schemes require $\Omega(n^2)$ time or assume strong conditions on $\bv{K}$ -- e.g. good conditioning or  data ``incoherence'' \cite{drineas2012fast,gittens2013revisiting,alaoui2015fast,Calandriello:2016:ANM:3020948.3020956}. Hence, leverage score-based approaches remain largely in the domain of theory, with limited practical impact \cite{kumar2012sampling,li2015large,yang2015randomized}.

%On the other hand, popular heuristics based on uniform sampling or so called ``norm sampling'' can obtain samples very quickly, but scale poorly because, with low quality samples, they require many data points to obtain an accurate Nystr{\"o}m approximation $\bv{\tilde K}$. 

\subsection{Our contributions}
In this work, we close the gap between strong approximation bounds and efficiency: we present a new Nystr{\"o}m algorithm based on \emph{recursive leverage score sampling} which
achieves the ``best of both worlds'':
it produces kernel approximations provably matching the accuracy of leverage score methods while only requiring $O(ns)$ kernel evaluations and $O(n s^2)$ runtime for $s$ landmark points.

Theoretically, this runtime is surprising. In the typical case when $s \ll n$, the algorithm evaluates just a small subset of $\bv{K}$, ignoring most of the kernel space inner products. Yet its performance guarantees hold for general kernels, requiring \emph{no assumptions on coherence or regularity}.

Empirically, the runtime's linear dependence on $n$ means that our method is the first leverage score algorithm that can compete with the most commonly implemented techniques, including the classic uniform sampling Nystr\"{o}m method and random Fourier features sampling \cite{rahimi2007random}.
Since our algorithm obtains higher quality samples, we show experimentally that it outperforms these methods on benchmark datasets -- it can obtain as accurate a kernel approximation in significantly less time. As a bonus, our approximations have lower rank, so they can be stored in less space and processed more quickly in downstream learning tasks.

\subsection{Paper outline}
Our recursive sampling algorithm is built on top of a Nystr{\"o}m scheme of Alaoui and Mahoney that samples landmark points based on their  \emph{ridge leverage scores} \cite{alaoui2015fast}. After reviewing preliminaries in Section \ref{sec:2}, in Section \ref{sec:leverage_sampling} we analyze this scheme, which we refer to as \emph{RLS-Nystr{\" o}m}. 
To simplify prior work, which studies the statistical performance of RLS-Nystr{\" o}m for specific kernel learning tasks \cite{alaoui2015fast,rudi2015less,wang2016column}, we prove a strong, application independent approximation guarantee:
 for any $\lambda$, if $\bv{\tilde K}$ is constructed with $s = \Theta(d_{\text{eff}}^{\lambda} \log d_{\text{eff}}^{\lambda})$ samples\footnote{$O(d_{\text{eff}}^{\lambda} \log d_{\text{eff}}^{\lambda})$ samples is within a log factor of the best possible for any low-rank approximation with error $\lambda$.}, where $d_{\text{eff}}^{\lambda} =  \tr(\bv{K}(\bv{K} + \lambda\bv{I})^{-1})$ is the so-called ``$\lambda$-effective dimensionality'' of $\bv{K}$, then with high probability,
$ \|\bv{K} - \bv{\tilde K}\|_2 \leq \lambda.$

 In Appendix \ref{sec:apps}, we show that this guarantee implies prior results on the statistical performance of {RLS-Nystr{\" o}m for kernel ridge regression and canonical correlation analysis. We also use it to prove new results on the performance of RLS-Nystr{\" o}m for kernel rank-$k$ PCA and $k$-means clustering -- in both cases just $O(k\log k)$ samples are required to give a solution with good accuracy.
  
 After affirming the favorable theoretical properties of RLS-Nystr{\" o}m, in Section \ref{sec:algos} we show that its runtime can be significantly improved using a recursive sampling approach.  Intuitively our algorithm is simple. We show how to approximate the kernel ridge leverage scores using a \emph{uniform} sample of $\frac{1}{2}$ of our input points. While the subsampled kernel matrix still has a prohibitive $n^2/4$ entries, we can \emph{recursively approximate} it, using our same sampling algorithm. If our final Nystr{\"o}m approximation will use $s$ landmarks, the recursive approximation only needs rank $O(s)$, which lets us estimate the ridge leverage scores of the original kernel matrix in just $O(ns^2)$ time. Since $n$ is cut in half at each level of recursion, our total runtime is $O \left (ns^2 + \frac{ns^2}{2} + \frac{ns^2}{4} + ...\right) = O(ns^2)$, significantly improving upon the method of \cite{alaoui2015fast}, which takes $\Theta(n^3)$ time in the worst case.
 
Our approach builds on recent work on iterative sampling methods for approximate linear algebra \cite{cohen2015uniform,cohen2015ridge}. 
%Adapting these methods to kernels is more involved since all operations need to access the kernelized dataset implicitly through dot products. 
 %
While the analysis in the kernel setting is technical, our final algorithm is simple and easy to implement. 
%We present and test a parameter-free variation of the method in Section \ref{sec:experiments}, demonstrating superior performance in kernel approximation tasks. We also confirm that Recursive RLS-Nystr{\"o}m scales well to large downstream regression and classification problems.
%
We present and test a parameter-free variation of Recursive RLS-Nystr{\"o}m in Section \ref{sec:experiments}, confirming superior performance compared to existing methods. 
%We also confirm that Recursive RLS-Nystr{\"o}m scales well to large downstream regression and classification problems.
%It far outperforms standard Nystr{\"o}m sampling and random Fourier features methods in terms of kernel approximation error. Furthermore, we show that this advantage transfers over to learning performance -- for benchmark kernel regression and classification problems, recursive Nystr{\"o}m sampling finds low error solutions with less time and fewer data samples. 

\section{Preliminaries}
\label{sec:2}
Consider an input space $\mathcal{X}$ and a positive semidefinite kernel function $K: \mathcal{X} \times \mathcal{X} \rightarrow \mathbb{R}$. Let $\mathcal{F}$ be an associated reproducing kernel Hilbert space and $\phi: \mathcal{X} \rightarrow \mathcal{F}$ be a (typically nonlinear) feature map such that for any $\bv{x}, \bv{y} \in \mathcal{X}$, $K(\bv{x},\bv{y}) = \langle \phi(\bv{x}), \phi(\bv{y}) \rangle_\mathcal{F}$.
% (the mapping is given by $\phi(x) = K(\cdot, x)$.
Given a set of $n$ input points $\bv{x}_1,\ldots,\bv{x}_n \in \mathcal{X}$, define the kernel matrix $\bv{K} \in \mathbb{R}^{n \times n}$ by $\bv{K}_{i,j} = K(\bv{x}_i,\bv{x}_j).$ %We will be interested in (approximately) solving a number of problems given this input.

It will often be natural to consider the kernelized data matrix that generates $\bv{K}$. Informally, let $\bv{\Phi} \in \mathbb{R}^{n \times d'}$ be the matrix containing $\phi(\bv{x}_1),...,\phi(\bv{x}_n)$ as its rows (note that $d'$ may be infinite). $\bv{K} = \bv{\Phi}\bv{\Phi}^T$. While we use $\bv{\Phi}$ for intuition, in our formal proofs we replace it with any matrix $\bv{B} \in \mathbb{R}^{n \times n}$ satisfying $\bv{B}\bv{B}^T = \bv{K}$ (e.g. a Cholesky factor).

We repeatedly use the singular value decomposition, which allows us to write any rank $r$ matrix $\bv{M} \in \mathbb{R}^{n \times d}$ as $\bv{M} = \bv{U}\bv{\Sigma}\bv{V^T}$,  where $\bv{U} \in \mathbb{R}^{n \times r}$ and $\bv{V} \in \mathbb{R}^{d \times r}$ have orthogonal columns (the left and right singular vectors of $\bv{M}$), and $\bv{\Sigma} \in \mathbb{R}^{r \times r}$ is a positive diagonal matrix containing the singular values: $\sigma_1(\bv{M}) \ge \sigma_2(\bv{M}) \ge \ldots \ge \sigma_r(\bv{M}) > 0$. $\bv{M}$'s pseudoinverse  is given by $\bv{M}^+ = \bv{V} \bv{\Sigma}^{-1} \bv{U}^T$.

\subsection{Nystr{\"o}m approximation}\label{nystromPrelim}

The Nystr{\" o}m method selects a subset of ``landmark'' points and uses them to construct a low-rank approximation to $\bv{K}$. Given a matrix $\bv{S} \in \mathbb{R}^{n \times s}$ that has a single entry in each column equal to $1$ so that $\bv{K}\bv{S}$ is a subset of $s$ columns from $\bv{K}$, the associated Nystr{\" o}m approximation is:
\begin{align}\label{nystrom}
\bv{\tilde K} = \bv{K S} (\bv{S}^T\bv{KS})^+ \bv{S}^T\bv{ K}.
\end{align}
%Essentially, $\bv{\tilde K}$ is a rank $s$ projection of $\bv{K}$ onto a sample of its columns.
$\bv{\tilde K}$ can be stored in $O(n s)$ space by separately storing $\bv{KS} \in \mathbb{R}^{n \times s}$ and $(\bv{S}^T\bv{KS})^+ \in \mathbb{R}^{s \times s}$. Furthermore, the factors can be computed using just $O(n s)$  evaluations of the kernel inner product to form $\bv{K}\bv{S}$ and $O(s^3)$ time to compute $(\bv{S}^T\bv{KS})^+$. Typically $s \ll n$ so these costs are significantly lower than the cost to form and store the full kernel matrix $\bv{K}$.

We view Nystr{\"o}m approximation as a low-rank approximation to the dataset in feature space. Recalling that $\bv{K} = \bv{\Phi}\bv{\Phi}^T$, $\bv{S}$ selects $s$ kernelized data points $\bv{S}^T\bv{\Phi}$ and we approximate $\bv{\Phi}$ using its projection onto these points.
Informally, let $\bv{P}_\bv{S} \in \mathbb{R}^{d' \times d'}$ be the orthogonal projection onto  the row span of $\bv{S}^T\bv{\Phi}$. We approximate $\bv{\Phi}$ by $\bv{\tilde \Phi} \eqdef \bv{\Phi}\bv{P}_\bv{S}$.
 We can write $\bv{P}_\bv{S} = \bv{\Phi}^T\bv{S} (\bv{S}^T\bv{\Phi} \bv{\Phi}^T \bv{S})^+ \bv{S}^T\bv{\Phi}$. 
 Since it is an orthogonal projection, $\bv{P}_\bv{S}\bv{P}_\bv{S}^T = \bv{P}_\bv{S}^2 = \bv{P}_\bv{S}$, and so we can write:
\begin{align*}
\bv{\tilde K} = \bv{\tilde \Phi} \bv{\tilde \Phi}^T = \bv{\Phi} \bv{P}_\bv{S}^2 \bv{\Phi}^T &= \bv{\Phi} \left (\bv{\Phi}^T\bv{S} (\bv{S}^T\bv{\Phi} \bv{\Phi}^T \bv{S})^+ \bv{S}^T\bv{\Phi}\right) \bv{\Phi}^T = \bv{K} \bv{S} (\bv{S}^T\bv{K}\bv{S})^+ \bv{S}^T \bv{K}.
\end{align*}

This recovers the standard Nystr{\"o}m approximation \eqref{nystrom}. %While we view the method as a low-rank approximation of $\bv{\Phi}$ we still compute $\bv{\tilde K}  $ \emph{without} working with $\bv{\Phi}$ directly, and \emph{without} computing all of $\bv{K}$
% -- we perform just $n\cdot s$ kernel evaluations to compute $\bv{KS}$ and $O(s^3)$ additional steps to compute $(\bv{S}\bv{K}\bv{S})^+$ (potentially less if using iterative methods or fast matrix multiplication for inversion). 
Note that we present the above for intuition and do not rigorously handle possibly infinite dimensional feature spaces. To formalize the argument, replace $\bv{\Phi}$ with any $\bv{B} \in \mathbb{R}^{n \times n}$ satisfying $\bv{B}\bv{B}^T = \bv{K}$.

\section{The RLS-Nystr{\"o}m method}
\label{sec:leverage_sampling}

We now introduce the RLS-Nystr{\" o}m method, which uses ridge leverage score sampling to select landmark data points, %Our main algorithmic contribution is showing how to perform this sampling in just $O(ns^2)$ time for $s$ samples. 
%Naively, RLS-Nystr{\" o}m runs in $\Omega(n^2)$ time -- our main contribution is showing how to accelerate its costly sampling step to run in $O(ns^2)$ time for $s$ samples. we 
and discuss its strong approximation guarantees for any kernel matrix $\bv{K}$. 

\subsection{Ridge leverage scores}
In classical Nystr{\"o}m approximation \eqref{nystrom}, $\bv{S}$ is formed by sampling data points uniformly at random. Uniform sampling can work in practice, but it only gives theoretical guarantees under strong regularity or incoherence assumptions on $\bv{K}$ \cite{gittens2011spectral}. It will fail for many natural kernel matrices where the relative ``importance'' of points is not uniform across the dataset 

For example, imagine a dataset where points fall into several clusters, but one of the clusters is much larger than the rest. Uniform sampling will tend to oversample landmarks from the large cluster while undersampling or possibly missing smaller but still important clusters. Approximation of $\bv{K}$ and learning performance (e.g. classification accuracy) will decline as a result.

\begin{figure}[h]
\centering
\begin{subfigure}{0.42\textwidth}
\centering
\includegraphics[width=.87\textwidth]{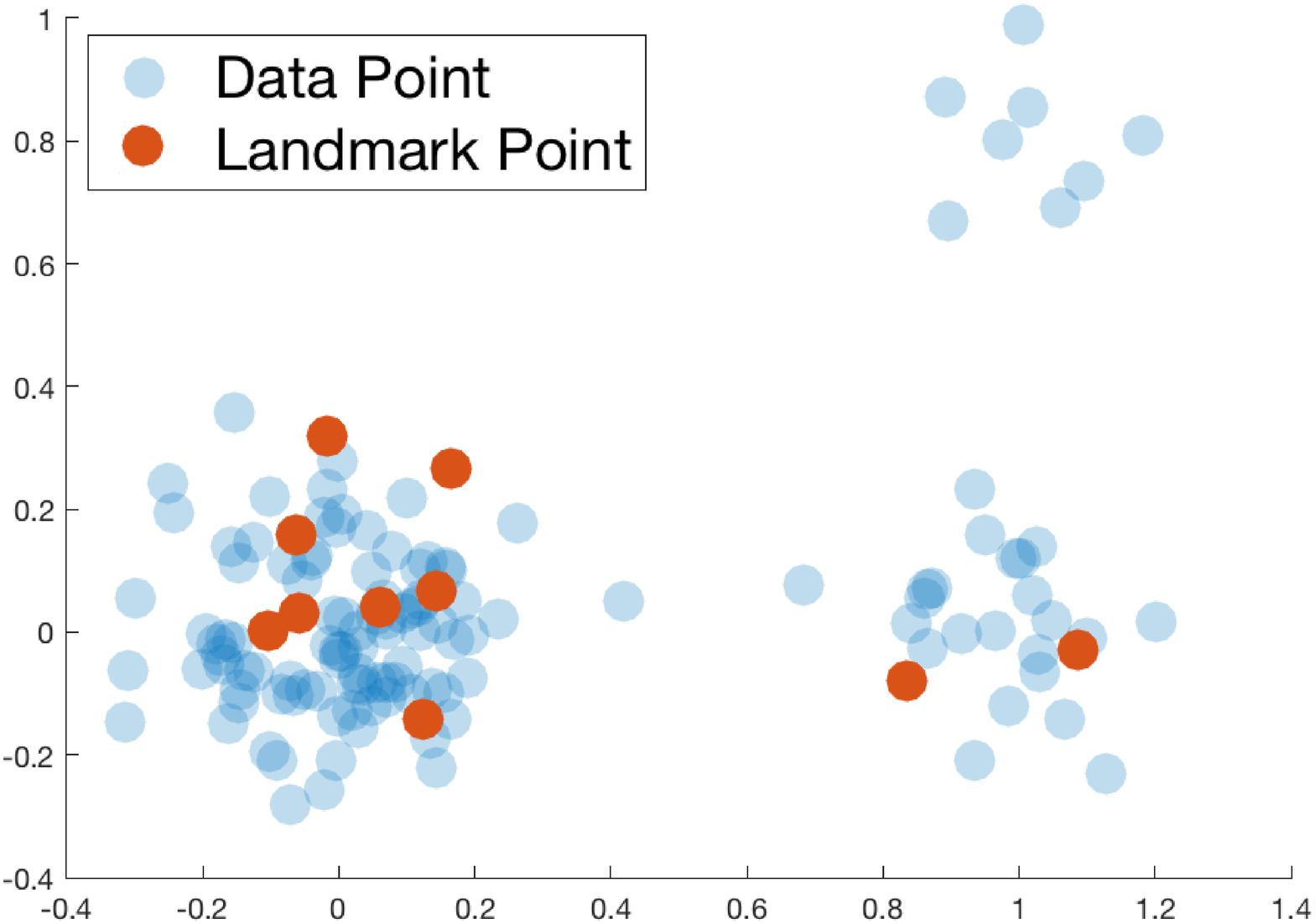} 
\caption{Uniform landmark sampling.}
\label{fig1}
\end{subfigure}%
\begin{subfigure}{0.42\textwidth}
\centering
\includegraphics[width=.87\textwidth]{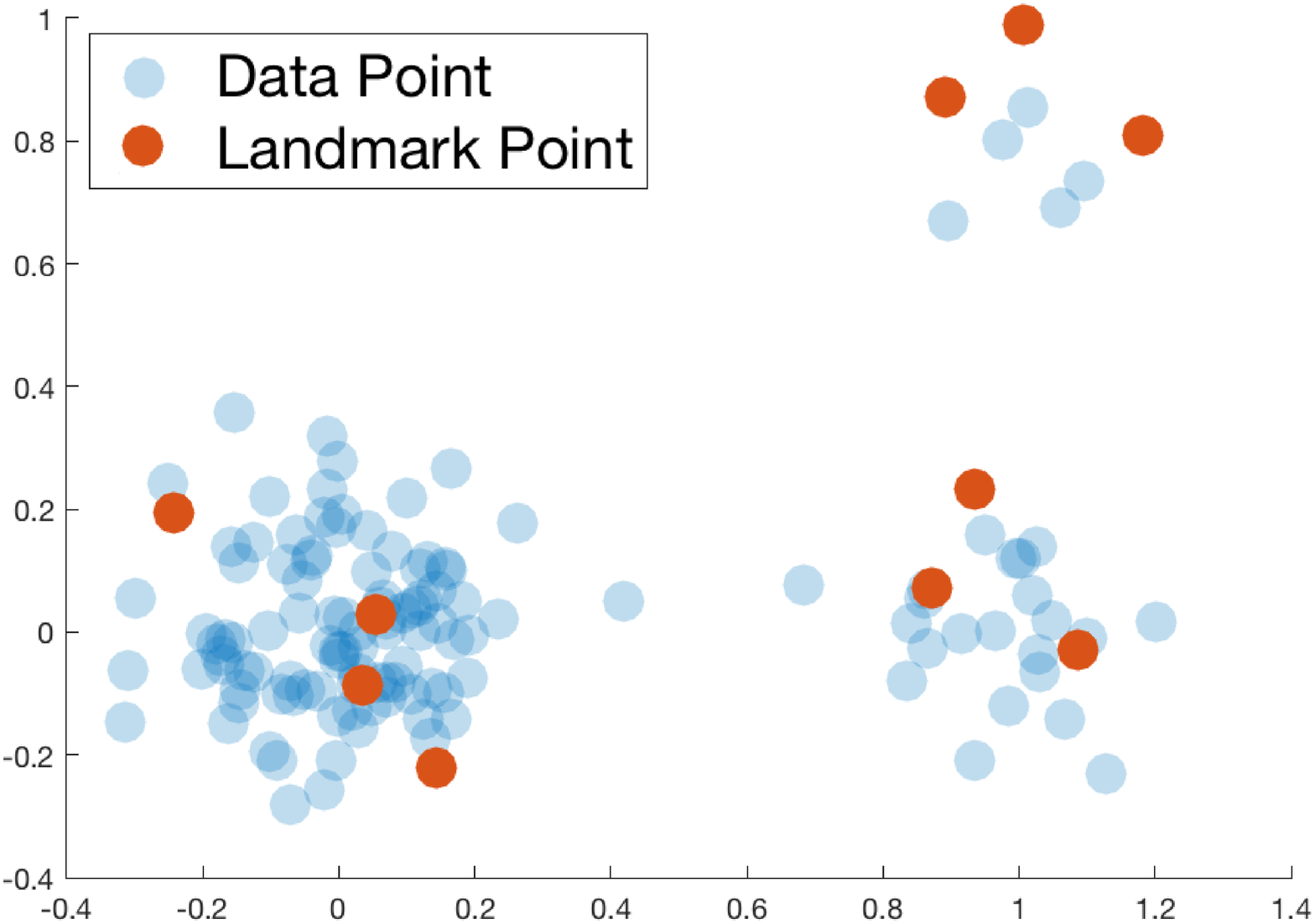}
\caption{Improved landmark sampling.}
\label{fig2}
\end{subfigure}
\caption{Uniform sampling for Nystr{\"o}m approximation can oversample from denser parts of the dataset. A better Nystr{\"o}m scheme will select points that more equally cover the relevant data.}
\label{examplefig}
\end{figure}

To combat this issue, alternative methods compute a measure of point importance that is used to select landmarks. For example, one heuristic applies $k$-means clustering to the input and takes the cluster centers as landmarks \cite{Zhang:2008}. A large body of theoretical work measures importance using variations on the \emph{statistical leverage scores}. 
One natural variation is the \emph{ridge leverage score}:
\begin{definition}[Ridge leverage scores \cite{alaoui2015fast}]\label{leverageDef}
For any $\lambda > 0$, the $\lambda$-ridge leverage score of data point $\bv{x}_i$ with respect to the kernel matrix $\bv{K}$ is defined as
\begin{align}
\label{plain_lev_def}
l_i^\lambda(\bv{K}) \eqdef \left (\bv{K}(\bv{K} + \lambda \bv{I})^{-1}\right )_{i,i},
\end{align}
where $\bv{I}$ is the $n\times n$ identity  matrix. For any $\bv{B} \in \mathbb{R}^{n \times n}$ satisfying $\bv{B}\bv{B}^T = \bv{K}$, we can also write
\begin{align}
\label{b_lev_def}
l_i^\lambda(\bv{K}) = \bv{b}_i^T (\bv{B}^T\bv{B} + \lambda \bv{I})^{-1} \bv{b}_i,
\end{align}
where $\bv{b}_i^T \in \mathbb{R}^{1 \times n}$ is the $i^{th}$ row of $\bv{B}$.
\end{definition}
%Above $\bv{I}$ refers to the $n \times n$ identity matrix.
For conciseness we write $l_i^\lambda(\bv{K})$ as $l_i^\lambda$ and include the argument only when referring to the ridge leverage scores of a kernel matrix other than $\bv{K}$. To check that \eqref{plain_lev_def} and \eqref{b_lev_def} are equivalent note that $ \bv{b}_i^T (\bv{B}^T\bv{B} + \lambda \bv{I})^{-1} \bv{b}_i =  \left(\bv{B} (\bv{B}^T\bv{B} + \lambda \bv{I})^{-1} \bv{B}^T\right)_{i,i}$. Using the SVD to write $\bv{B} = \bv{U}\bs{\Sigma}\bv{V}^T$ and accordingly $\bv{K} = \bv{U}\bs{\Sigma}^2\bv{U}^T$ confirms that $\bv{K}(\bv{K} + \lambda \bv{I})^{-1} = \bv{B} (\bv{B}^T\bv{B} + \lambda \bv{I})^{-1} \bv{B}^T  = \bv{U}\bs{\Sigma}^2\left(\bs{\Sigma}^2 + \lambda \bv{I}\right)^{-1}\bv{U}^T$.

It's not hard to check (see \cite{cohen2015uniform}) that the ridge scores can be defined alternatively as:
\begin{align}
\label{yintat_view}
l_i^\lambda &= \min_{\bv{y} \in \R^n} \frac{1}{\lambda}\|\bv{b}_i^T - \bv{y}^T\bv{B}\|_2^2 + \|\bv{y}\|_2^2.
\end{align}
This formulation provides better insight into the meaning of these scores.
Since $\bv{B}\bv{B}^T = \bv{K}$, any kernel learning algorithm effectively performs linear learning with $\bv{B}$'s rows as data points. So the ridge scores should reflect the relative importance or uniqueness of these rows. From \eqref{yintat_view} it's clear that $l_i^\lambda \leq 1$ since we can set $\bv{y}$ to the $i^\text{th}$ standard basis vector. A row $\bv{b}_i^T$ will have ridge score $\ll1$ (i.e. is less important) when it's possible to find a more ``spread out'' $\bv{y}$ that uses other rows in $\bv{B}$ to approximately reconstruct $\bv{b}_i^T$ -- in other words when the row is less unique.

\subsection{Sum of ridge leverage scores}
As is standard in leverage score methods, we don't directly select landmarks to be the points with the highest scores. Instead, we sample each point with probability proportional to $l_i^\lambda$. I.e. if a point has the highest possible ridge leverage score of 1, we will select it with probability 1 to be a landmark. If a point has leverage score $1/100$, we select it with probability $1/100$.\footnote{To ensure concentration in our sampling algorithm, we will actually take points with probability $ql_i^\lambda$ where $q$ is a small oversampling parameter.}

Accordingly, the number of landmarks selected, which controls $\bv{\tilde K}$'s rank, is a random variable with expectation equal to the \emph{sum of the $\lambda$-ridge leverage scores}. To ensure compact kernel approximations, we want this sum to be small. Immediately from Definition \ref{leverageDef}, we have:
\begin{fact}[Ridge leverage scores sum to the effective dimension]
\label{sum_equal_deff}
\begin{align}
\sum_{i=1}^n l_i^\lambda(\bv{K}) = \tr(\bv{K}(\bv{K} + \lambda\bv{I})^{-1}).
\end{align}
\end{fact}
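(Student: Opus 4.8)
The statement to prove, Fact \ref{sum_equal_deff}, asserts that $\sum_{i=1}^n l_i^\lambda(\bv{K}) = \tr(\bv{K}(\bv{K} + \lambda\bv{I})^{-1})$. This follows almost immediately from the definition, and the plan is simply to unwind the trace as a sum of diagonal entries.

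\textbf{Plan.} The key observation is that the trace of any square matrix equals the sum of its diagonal entries: $\tr(\bv{M}) = \sum_{i=1}^n \bv{M}_{i,i}$. Applying this to $\bv{M} = \bv{K}(\bv{K}+\lambda\bv{I})^{-1}$ gives $\tr(\bv{K}(\bv{K}+\lambda\bv{I})^{-1}) = \sum_{i=1}^n \left(\bv{K}(\bv{K}+\lambda\bv{I})^{-1}\right)_{i,i}$. By Definition \ref{leverageDef}, equation \eqref{plain_lev_def}, each summand $\left(\bv{K}(\bv{K}+\lambda\bv{I})^{-1}\right)_{i,i}$ is exactly $l_i^\lambda(\bv{K})$, so the sum equals $\sum_{i=1}^n l_i^\lambda(\bv{K})$, which is the claim. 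One minor point worth checking is that $\bv{K}+\lambda\bv{I}$ is invertible so the expression is well-defined: since $\bv{K}$ is positive semidefinite and $\lambda > 0$, all eigenvalues of $\bv{K}+\lambda\bv{I}$ are at least $\lambda > 0$, hence it is nonsingular.

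\textbf{Main obstacle.} There is essentially no obstacle here; the statement is a direct consequence of the definition of the trace together with Definition \ref{leverageDef}, which is why the excerpt introduces it with ``Immediately from Definition \ref{leverageDef}.'' If one wanted a slightly more illuminating derivation that connects to the effective dimension interpretation, one could use the SVD $\bv{K} = \bv{U}\bs{\Sigma}^2\bv{U}^T$ established in the text to write $\bv{K}(\bv{K}+\lambda\bv{I})^{-1} = \bv{U}\bs{\Sigma}^2(\bs{\Sigma}^2+\lambda\bv{I})^{-1}\bv{U}^T$, and then by cyclicity of the trace, $\tr(\bv{K}(\bv{K}+\lambda\bv{I})^{-1}) = \tr(\bs{\Sigma}^2(\bs{\Sigma}^2+\lambda\bv{I})^{-1}) = \sum_{j} \frac{\sigma_j^2}{\sigma_j^2+\lambda}$, exhibiting the sum as a soft count of the eigenvalues of $\bv{K}$ that exceed $\lambda$ — this is the ``effective dimension'' $d_{\text{eff}}^\lambda$. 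But for the stated Fact, the one-line diagonal-sum argument suffices.
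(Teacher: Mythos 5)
Your proof is correct and takes the same (essentially trivial) route the paper intends: the trace is the sum of diagonal entries, and each diagonal entry of $\bv{K}(\bv{K}+\lambda\bv{I})^{-1}$ is by definition $l_i^\lambda(\bv{K})$, which is exactly why the paper states the fact follows ``immediately from Definition \ref{leverageDef}'' without further argument. The added checks (invertibility of $\bv{K}+\lambda\bv{I}$ and the SVD view of $d_{\text{eff}}^\lambda$ as a soft rank) are fine but not needed.
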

\noindent$\tr(\bv{K}(\bv{K} + \lambda\bv{I})^{-1})$ is a natural quantity, called the ``effective dimension'' or ``degrees of freedom'' for a ridge regression problem on $\bv{K}$ with regularization $\lambda$ \cite{elements,zhang2006learning}. We use the notation:
\begin{align}
d_\text{eff}^\lambda \eqdef \tr(\bv{K}(\bv{K} + \lambda\bv{I})^{-1}).
\end{align}
$d_\text{eff}^\lambda$ increases monotonically as $\lambda$ decreases. For any fixed $\lambda$ it is essentially the smallest possible rank achievable for $\bv{\tilde K}$ satisfying the approximation guarantee given by RLS-Nystr{\"o}m: $\norm{\bv{K}-\bv{\tilde K}}_2 < \lambda$.

\subsection{The basic sampling algorithm}
We can now introduce the RLS-Nystr{\"o}m method of Alaoui and Mahoney as Algorithm \ref{alg:highlevel}. Our pseudocode allows sampling each point by \emph{any probability greater than $l_i^\lambda$}. This is useful later when we compute ridge leverage scores approximately. Naturally, oversampling landmarks can only improve $\bv{\tilde K}$'s accuracy. It could cause us to take more samples, but we will always ensure that the sum of our approximate ridge leverage scores is not much higher than that of the exact scores.
\begin{algorithm}[H]
\caption{\algoname{RLS-Nystr{\" o}m Sampling}}
{\bf input}: $\bv{x}_1,\ldots,\bv{x}_n \in \mathcal{X}$, kernel matrix $\bv{K}$, ridge parameter $\lambda >0$, failure probability $\delta \in (0,1/8)$\\
{\bf output}: kernel approximation $\bv{\tilde K}$
\begin{algorithmic}[1]
\State{Compute an over-approximation, $\tilde l_i^{\lambda} > l_i^{\lambda}$ for the $\lambda$-ridge leverage score of each $\bv{x}_1, \ldots, \bv{x}_n$}\label{step1}
%\State{Set oversampling parameter $q:= 16\log(\sum \tilde l_i^{\lambda}/\delta)$}.
%\State{Set $p_i := \min \left \{q\tilde l_i^{\lambda} , 1 \right \}$}.
\State{Set $p_i := \min \left \{1, \tilde l_i^{\lambda} \cdot 16\log(\sum \tilde l_i^{\lambda}/\delta) \right \}$}.
\State{Construct $\bv{S} \in \mathbb{R}^{n \times s}$ by sampling $\bv{x}_1,\ldots,\bv{x}_n$ each independently with probability $p_i$. In other words, for each $i$ add a column to $\bv{S}$ with a 1 in position $i$ with probability $p_i$}.\\
\Return{the Nystr{\" o}m factors $\bv{K}\bv{S} \in \R^{n\times s}$ and $(\bv{S}^T\bv{K}\bv{S})^+ \in \R^{s \times s}$}\label{step4}.
\end{algorithmic}
\label{alg:highlevel}
\end{algorithm}
\noindent Note that this implementation of RLS-Nystr{\" o}m Sampling does not form $\bv{\tilde K}$ explicitly in Step \ref{step4}, as this would take space and time quadratic in $n$. It simply returns the factors $\bv{K}\bv{S} \in \R^{n\times s}$ and $(\bv{S}^T\bv{K}\bv{S})^+ \in \R^{s \times s}$. Any kernel learning method can then access $\bv{\tilde K}$ implicitly. 
For example, the kernel method can be implemented as a linear method run on the $n \times s$ matrix $\bv{K}\bv{S}(\bv{S}^T\bv{K}\bv{S})^{+/2}$ whose rows serves as a compression of the data points in kernel space

\subsection{Accuracy bounds}
Like other leverage scores methods, RLS-Nystr{\"o}m sampling is appealing because it provably approximates any kernel matrix. In particular, we show that the algorithm produces a $\bv{\tilde K}$ which spectrally approximates $\bv{K}$ up to a small additive error. This is the strongest type of approximation offered by any known Nystr{\"o}m method \cite{gittens2013revisiting} and, importantly, it guarantees that $\bv{\tilde K}$ will provide provable accuracy when used in place of $\bv{K}$ in many downstream machine learning applications. 
\begin{theorem}[Spectral error approximation]
\label{additiveErrorThm} For any $\lambda > 0$ and $\delta \in (0,1/8)$, Algorithm \ref{alg:highlevel} returns an $\bv{S} \in \R^{n\times s}$ such that with probability $1-\delta$, $s \leq 2\sum_i p_i$ and $\bv{\tilde K} =  \bv{KS} (\bv{S}^T\bv{K}\bv{S})^+ \bv{S}^T\bv{K}$ satisfies:
\begin{align}
\label{eq:adderrror}
\bv{\tilde K} \preceq \bv{K} \preceq \bv{\tilde K} + \lambda \bv{I}.
\end{align}
When ridge scores are computed exactly, $\sum_i p_i = O \left(d_\text{eff}^\lambda\log (d_\text{eff}^\lambda/\delta) \right )$.
\end{theorem}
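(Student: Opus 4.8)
The plan is to reduce the spectral guarantee \eqref{eq:adderrror} to a one-sided spectral-approximation statement about the \emph{regularized} Gram matrix of $\bv B$ (where $\bv B\bv B^T=\bv K$), prove that reduction deterministically, and then establish the spectral statement with a matrix concentration inequality whose failure probability depends on $d_\text{eff}^\lambda$ rather than $n$. Two structural observations streamline everything. First, $\bv{\tilde K}\preceq\bv K$ always holds: writing $\bv{\tilde K}=\bv B\bv P_{\bv S}\bv B^T$ with $\bv P_{\bv S}$ the orthogonal projection onto $\mathrm{span}\{\bv b_i:i\text{ selected}\}$ (as in Section~\ref{nystromPrelim}), we get $\bv K-\bv{\tilde K}=\bv B(\bv I-\bv P_{\bv S})\bv B^T\succeq 0$, so only the upper inequality $\bv B(\bv I-\bv P_{\bv S})\bv B^T\preceq\lambda\bv I$ requires work. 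Second, $\bv{\tilde K}$ (hence $\bv P_{\bv S}$) is invariant under rescaling the columns of $\bv S$, so we may analyze the rescaled matrix whose $i$-th selected column carries weight $1/\sqrt{p_i}$ and set $\hat{\bv M}\eqdef\sum_i\frac{X_i}{p_i}\bv b_i\bv b_i^T$ with $X_i\sim\mathrm{Bernoulli}(p_i)$, noting $\E[\hat{\bv M}]=\bv B^T\bv B=:\bv M$ and $\mathrm{colspan}(\hat{\bv M})=\mathrm{span}\{\bv b_i:i\text{ selected}\}$.

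\emph{Deterministic step.} I would show: if $\hat{\bv M}+\lambda\bv I\succeq\tfrac12(\bv M+\lambda\bv I)$, then $\bv B(\bv I-\bv P_{\bv S})\bv B^T\preceq\lambda\bv I$. For unit $\bv v$, $\|(\bv I-\bv P_{\bv S})\bv B^T\bv v\|_2^2=\min\{\|\bv B^T\bv v-\bv w\|_2^2:\bv w\in\mathrm{colspan}(\hat{\bv M})\}$; plugging in $\bv w=\hat{\bv M}(\hat{\bv M}+\lambda\bv I)^{-1}\bv B^T\bv v$ yields the upper bound $\lambda^2\,\bv v^T\bv B(\hat{\bv M}+\lambda\bv I)^{-2}\bv B^T\bv v$. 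Writing $\bv B=\bv U\bs\Sigma\bv V^T$ and $\bv G\eqdef\bv U^T(\text{rescaled }\bv S\bv S^T)\bv U$, and restricting to $\mathrm{colspan}(\bv V)$ (on which $\hat{\bv M}+\lambda\bv I$ acts as $\bv A\eqdef\bs\Sigma\bv G\bs\Sigma+\lambda\bv I$ and $\bv M+\lambda\bv I$ as $\bs\Sigma^2+\lambda\bv I$), this is at most $\lambda^2\|\bs\Sigma\bv A^{-1}\|_2^2=\lambda^2\|\bv A^{-1}\bs\Sigma^2\bv A^{-1}\|_2$. The hypothesis gives $\bs\Sigma^2\preceq 2\bv A-\lambda\bv I$, so $\bv A^{-1}\bs\Sigma^2\bv A^{-1}\preceq 2\bv A^{-1}-\lambda\bv A^{-2}$; since $\bv A\succeq\lambda\bv I$ and the scalar map $a\mapsto 2/a-\lambda/a^2$ is decreasing on $[\lambda,\infty)$ with value $1/\lambda$ at $a=\lambda$, we get $2\bv A^{-1}-\lambda\bv A^{-2}\preceq\tfrac1\lambda\bv I$, hence the quantity is $\leq\lambda^2\cdot\tfrac1\lambda=\lambda$. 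One checks $\tfrac12$ is exactly the threshold constant, which is why the algorithm aims for a $1/2$-spectral bound.

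\emph{Probabilistic step.} It remains to prove $\hat{\bv M}+\lambda\bv I\succeq\tfrac12(\bv M+\lambda\bv I)$ with probability $\geq1-\delta$. Conjugating by $(\bv M+\lambda\bv I)^{-1/2}$, this is $\sum_i(X_i/p_i-1)\bv c_i\bv c_i^T\succeq-\tfrac12\bv I$ for $\bv c_i\eqdef(\bv M+\lambda\bv I)^{-1/2}\bv b_i$, where $\|\bv c_i\|_2^2=l_i^\lambda$ and $\sum_i\bv c_i\bv c_i^T=(\bv M+\lambda\bv I)^{-1/2}\bv M(\bv M+\lambda\bv I)^{-1/2}$ has trace $d_\text{eff}^\lambda$ and operator norm $<1$. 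I would apply the intrinsic-dimension (``effective dimension'') matrix Bernstein inequality to the independent mean-zero summands $(X_i/p_i-1)\bv c_i\bv c_i^T$: any term with $p_i<1$ has operator norm $\leq l_i^\lambda/p_i\leq 1/(16\log(\sum_j\tilde l_j^\lambda/\delta))$ (using $p_i\geq 16\,l_i^\lambda\log(\cdot)$ and $\tilde l_i^\lambda\geq l_i^\lambda$), any term with $p_i=1$ vanishes, and the matrix variance is $\preceq (16\log(\cdot))^{-1}\sum_i\bv c_i\bv c_i^T$, whose trace-to-norm ratio is $\leq d_\text{eff}^\lambda$. Thus the intrinsic dimension in the tail bound is $O(d_\text{eff}^\lambda)$, not $n$, and the constant $16$ forces the deviation $\leq\tfrac12$ except with probability $\leq\delta$. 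Separately, $s=\sum_iX_i$ has mean $\sum_ip_i$, so a Chernoff bound gives $s\leq 2\sum_ip_i$ except with probability $\leq\delta$; a union bound combines the two events. Finally, when the $\tilde l_i^\lambda$ are exact, $\sum_ip_i\leq 16\log(\sum_il_i^\lambda/\delta)\sum_il_i^\lambda=16\,d_\text{eff}^\lambda\log(d_\text{eff}^\lambda/\delta)$ by Fact~\ref{sum_equal_deff}, giving the last claim.

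\emph{Main obstacle.} The crux is the probabilistic step: a generic matrix Chernoff/Bernstein bound loses a factor of $n$ in the failure probability and would force $\Theta(n)$-type sample sizes; getting the bound to scale with $d_\text{eff}^\lambda$ needs the effective-dimension refinement of matrix concentration together with the observation that both the summand norms and the variance are controlled precisely by the ridge leverage scores being sampled. A secondary subtlety is obtaining the clean constant $\lambda$ (not $2\lambda$) in \eqref{eq:adderrror}: the naive estimate in the deterministic step loses a factor of $2$, and one must instead exploit the monotonicity of $a\mapsto 2/a-\lambda/a^2$ on $[\lambda,\infty)$. Handling the truncation $p_i=\min\{1,\cdot\}$ (truncated coordinates are deterministic, hence contribute no variance) and the degenerate regime $d_\text{eff}^\lambda=O(1)$ (where $\bv K\preceq\lambda\bv I$ outright) is routine.
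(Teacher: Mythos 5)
Your proposal is correct, and its overall architecture matches the paper's: the lower bound $\bv{\tilde K}\preceq\bv K$ via the projection identity, a matrix Bernstein bound with intrinsic dimension controlled by $d_\text{eff}^\lambda$ giving $\frac12(\bv B^T\bv B+\lambda\bv I)\preceq\bv B^T\bv S\bv S^T\bv B+\lambda\bv I$ (your probabilistic step is, after a change of basis, exactly the paper's Lemma \ref{bernstein}), and a Chernoff bound for $s$. Where you genuinely diverge is the deterministic reduction from that spectral bound to $\bv K-\bv{\tilde K}\preceq\lambda\bv I$. The paper first passes to the \emph{unweighted} sampling matrix via Corollary \ref{bernstein2}, obtaining $\bv B^T\bv B\preceq C\cdot\bv B^T\bv S\bv S^T\bv B+\lambda\bv I$ for some finite $C$, and then conjugates by the complementary projection $\bv{\bar P}_{\bv S}$, which annihilates the sampled term \emph{regardless of} $C$ and immediately yields $\bv{\bar P}_{\bv S}\bv B^T\bv B\bv{\bar P}_{\bv S}\preceq\lambda\bv I$; no constant-tracking is needed and the clean $\lambda$ falls out for free. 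You instead keep the weighted matrix $\hat{\bv M}$, use the variational characterization of the projection residual with the ridge candidate $\bv w=\hat{\bv M}(\hat{\bv M}+\lambda\bv I)^{-1}\bv B^T\bv v$, and recover the sharp constant via the monotonicity of $a\mapsto 2/a-\lambda/a^2$ on $[\lambda,\infty)$; this is also correct, and it has the virtue of quantifying exactly why the threshold constant $1/2$ suffices, at the cost of the scalar-calculus argument that the paper's conjugation trick sidesteps. One bookkeeping remark on your probabilistic step: the trace-to-norm ratio of $\sum_i\bv c_i\bv c_i^T$ equals $d_\text{eff}^\lambda\cdot(1+\lambda/\norm{\bv K}_2)$, not literally $\le d_\text{eff}^\lambda$; this is harmless (it is $\le 2d_\text{eff}^\lambda$ whenever $\lambda\le\norm{\bv K}_2$, and otherwise the theorem is vacuous, as you note), but the paper avoids even this case split by padding the variance proxy $\bv Z$ so that its top eigenvalue is $1$ and its trace is $1+d_\text{eff}^\lambda$.
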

$\preceq$ denotes the standard Loewner matrix ordering on positive semi-definite matrices\footnote{$\bv{M} \preceq \bv{N}$ means that $\bv{N} - \bv{M}$ is positive semidefinite.}.
Note that \eqref{eq:adderrror} immediately implies the well studied (see e.g \cite{gittens2013revisiting}) spectral norm guarantee, $\|\bv{K} - \bv{\tilde K}\|_2 \leq \lambda$. 
%\eqref{eq:adderrror} also appears directly in the streaming and sketching literature \cite{ghashami2015frequent, cohen2015ridge}.

Intuitively, Theorem \ref{additiveErrorThm} guarantees that the $\bv{\tilde K}$ produced by RLS-Nystr{\"o}m well approximates the top of $\bv{K}$'s spectrum (i.e. any eigenvalues $> \lambda$) while allowing it to lose information about smaller eigenvalues, which are less important for many learning tasks. 
%Note that above, each data point is sampled independently with probability proportional to its associated ridge leverage score. $s$ is not fixed but is bounded above with high probability. It is easy to show that the bound also holds when using sampling with replacement -- that is, $s$ is fixed and each sampled point is set to $\bv{x}_i$ with probability proportional to $p_i$.

\begin{proof}
It is clear from the view of Nystr{\"o}m approximation as a low-rank projection of the kernelized data (see Section \ref{nystromPrelim}) that $\bv{\tilde K} \preceq \bv{K}.$ Formally, for any $\bv{B} \in \mathbb{R}^{n \times n}$ with $\bv{BB}^T = \bv{K}$:
\begin{align*}
\bv{\tilde K} = \bv{KS}(\bv{S}^T\bv{K}\bv{S})^+ \bv{S}^T \bv{K} = \bv{B} \bv{P}_\bv{S} \bv{B}^T,
\end{align*}
where $\bv{P}_\bv{S} = \bv{B}^T \bv{S}(\bv{S}^T \bv{B}\bv{B}^T \bv{S})^+ \bv{S}^T \bv{B}$ is the orthogonal projection onto the row span of $\bv{S}^T \bv{B}$. Since $\bv{P}_\bv{S}$ is a projection $\norm{\bv{P}_\bv{S}}_2 \le 1$. So, for
any $\bv{x} \in \mathbb{R}^n$:
\begin{align*}
\bv{x}^T\bv{\tilde K} \bv{x} = \bv{x}^T \bv{B} \bv{P}_\bv{S} \bv{B} \bv{x} = \norm{\bv{P}_\bv{S} \bv{B}\bv{x}}_2^2 \le \norm{\bv{B} \bv{x}}_2^2 = \bv{x}^T \bv{K} \bv{x},
\end{align*}
which is equivalent to $\bv{\tilde K} \preceq \bv{K}$. It remains to show that $\bv{K} \preceq \bv{\tilde K} + \lambda \bv{I}$.

In Lemma \ref{bernstein}, Appendix \ref{sampling_proofs}, we apply a matrix Bernstein bound \cite{tropp2015introduction} to prove that, when $\bv{S}$'s columns are reweighted by the inverse of their sampling probabilities, with probability $1-\delta/2$: %$s \le  2\sum_i p_i$ and:
\begin{align*}
\frac{1}{2}\left(\bv{B}^T\bv{B} + \lambda \bv{I}\right) \preceq \bv{B}^T\bv{SS}^T\bv{B} + \lambda \bv{I}  \preceq \frac{3}{2}\left(\bv{B}^T\bv{B} + \lambda \bv{I}\right).
\end{align*}
It is not hard to show (Corollary \ref{bernstein2}, Appendix \ref{sampling_proofs}) that even if $\bv{S}$ is unweighted, as in Algorithm \ref{alg:highlevel}, this bound implies the existence of some finite scaling factor $C > 0$ such that:
\begin{align}\label{elambda_bound}
\bv{B}^T\bv{B} \preceq C \cdot \bv{B}^T\bv{S}\bv{S}^T\bv{B} + \lambda \bv{I}.
\end{align}
Let $\bv{\bar P}_\bv{S} = \bv{I} - \bv{P}_\bv{S}$ be the projection onto the complement of the row span of $\bv{S}^T \bv{B}$. By \eqref{elambda_bound}:
\begin{align}\label{elambda_bound2}
\bv{\bar P}_\bv{ S} \bv{B}^T\bv{B} \bv{\bar  P}_\bv{S} \preceq C \cdot \bv{\bar  P}_\bv{S} \bv{B}^T\bv{S}\bv{S}^T\bv{B}\bv{\bar  P}_\bv{S} + \lambda \bv{\bar P}_\bv{S} \bv{I} \bv{\bar  P}_\bv{S}.
\end{align}
Since $\bv{\bar P}_\bv{ S}$ projects to the complement of the row span of $\bv{S}^T\bv{B}$, $\bv{S}^T\bv{B} \bv{\bar P}_\bv{S} = \bv{0}$. So \eqref{elambda_bound2} gives:
\begin{align*}
\bv{\bar P}_\bv{S} \bv{B}^T\bv{B} \bv{\bar P}_\bv{S} &\preceq \bv{0} + \lambda \bv{\bar P}_\bv{S}\bv{I} \bv{\bar P}_\bv{ S} \preceq  \lambda \bv{I}.
\end{align*}
In other notation, $\norm{\bv{\bar P}_\bv{S} \bv{B}^T\bv{B} \bv{\bar P}_\bv{S}}_2 \le \lambda$. This in turn implies $\norm{\bv{B} \bv{\bar P}_\bv{S} \bv{B}^T}_2 \le \lambda$ and hence:
\begin{align*}
\bv{B}\bv{\bar P}_\bv{S}\bv{B}^T = \bv{B}(\bv{I} - \bv{P}_\bv{S})\bv{B}^T \preceq \lambda \bv{I}.
\end{align*}
Rearranging and using $\bv{K} = \bv{B}\bv{B}^T$ and $\bv{\tilde K} = \bv{B}\bv{P}_\bv{S}\bv{B}^T$ gives the result. A Chernoff bound (see Lemma \ref{bernstein}, Appendix \ref{sampling_proofs}), gives that with probability $1-\delta/2$, $s \le  2\sum_i p_i$, completing the theorem.
%We conclude by noting that, if exact ridge leverage scores are used in Algorithm \ref{alg:highlevel}, $\sum_i p_i =  \sum l_i^{\epsilon\lambda}\log(\sum l_i^{\epsilon\lambda}/\delta)$. Since the $\epsilon\lambda$ ridge leverage scores are the diagonal entries of $\bv{K}(\bv{K} + \epsilon\lambda\bv{I})^{-1}$, 
%\begin{align*}
%\sum_i l_i^{\epsilon\lambda} = \tr(\bv{K}(\bv{K} + \epsilon\lambda\bv{I})^{-1}) \leq \frac{1}{\epsilon}\tr(\bv{K}(\bv{K} + \lambda\bv{I})^{-1}).
%\end{align*}
%Accordingly, Theorem \ref{additiveErrorThm} guarantees that $\bv{S}$ contains just $O\left(\frac{d_\text{\normalfont eff}}{\epsilon}\log\frac{d_{\text{\normalfont eff}}}{\delta\epsilon}\right)$ samples.
\end{proof}

Often a regularization parameter $\lambda$ is specified for a learning task, and for near optimal performance on this task, we set the approximation factor in Theorem \ref{additiveErrorThm} to $\epsilon \lambda$. In this case we have:
\begin{corollary}[Tighter spectral error approximation]
\label{main_cor}
For any $\lambda > 0$ and $\delta \in (0,1/8)$, Algorithm \ref{alg:highlevel} run with ridge parameter $\epsilon \lambda$ returns $\bv{S} \in \R^{n\times s}$ such that with probability  $1-\delta$, $s = O\left(\frac{d^\lambda_\text{\normalfont eff}}{\epsilon}\log\frac{d^\lambda_{\text{\normalfont eff}}}{\delta\epsilon}\right)$ and $\bv{\tilde K} =  \bv{KS} (\bv{S}^T\bv{K}\bv{S})^+ \bv{S}^T\bv{K}$ satisfies
$\bv{\tilde K} \preceq \bv{K} \preceq \bv{\tilde K} + \epsilon\lambda \bv{I}.
$
\end{corollary}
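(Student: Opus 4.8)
The plan is to derive this directly from Theorem \ref{additiveErrorThm} by a change of variable in the ridge parameter. Run Algorithm \ref{alg:highlevel} with ridge parameter $\lambda' \eqdef \epsilon\lambda$ in place of $\lambda$. Theorem \ref{additiveErrorThm} applied with this $\lambda'$ immediately yields that with probability $1-\delta$ we have $s \le 2\sum_i p_i$ and $\bv{\tilde K} \preceq \bv{K} \preceq \bv{\tilde K} + \lambda'\bv{I} = \bv{\tilde K} + \epsilon\lambda\bv{I}$, which is exactly the claimed spectral bound. So the only real content is to bound the sample count $s$, i.e. to show $\sum_i p_i = O\!\left(\frac{d_\text{eff}^\lambda}{\epsilon}\log\frac{d_\text{eff}^\lambda}{\delta\epsilon}\right)$ when the scores are computed exactly at parameter $\lambda' = \epsilon\lambda$.

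For that, first I would invoke Fact \ref{sum_equal_deff} at ridge parameter $\lambda'$: when ridge scores are exact, $\sum_i p_i = O\!\left(d_\text{eff}^{\lambda'}\log(d_\text{eff}^{\lambda'}/\delta)\right)$ by the final line of Theorem \ref{additiveErrorThm}, where $d_\text{eff}^{\lambda'} = \tr(\bv{K}(\bv{K}+\epsilon\lambda\bv{I})^{-1})$. It then suffices to show the decrease of the ridge parameter from $\lambda$ to $\epsilon\lambda$ inflates the effective dimension by at most a factor of $1/\epsilon$, i.e. $d_\text{eff}^{\epsilon\lambda} \le \frac{1}{\epsilon}\, d_\text{eff}^{\lambda}$ for $\epsilon \le 1$ (and trivially $d_\text{eff}^{\epsilon\lambda} \le d_\text{eff}^{\lambda}$ for $\epsilon \ge 1$, in which case the bound is immediate). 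Writing $\bv{K} = \bv{U}\bs{\Sigma}^2\bv{U}^T$ via the SVD, this reduces to the scalar inequality $\frac{\sigma^2}{\sigma^2+\epsilon\lambda} \le \frac{1}{\epsilon}\cdot\frac{\sigma^2}{\sigma^2+\lambda}$ for every singular value $\sigma$, which holds because $\epsilon(\sigma^2+\lambda) = \epsilon\sigma^2 + \epsilon\lambda \le \sigma^2 + \epsilon\lambda$ when $\epsilon \le 1$. Summing over the eigenvalues gives $d_\text{eff}^{\epsilon\lambda} \le \frac{1}{\epsilon} d_\text{eff}^{\lambda}$.

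Combining, $\sum_i p_i = O\!\left(d_\text{eff}^{\epsilon\lambda}\log(d_\text{eff}^{\epsilon\lambda}/\delta)\right) = O\!\left(\tfrac{d_\text{eff}^{\lambda}}{\epsilon}\log\tfrac{d_\text{eff}^{\lambda}}{\delta\epsilon}\right)$, and since $s \le 2\sum_i p_i$ with probability $1-\delta$ this gives the stated sample bound. There is essentially no obstacle here: the corollary is a straightforward specialization of the theorem, and the one lemma-like step — that lowering the ridge parameter by a factor $\epsilon$ grows $d_\text{eff}$ by at most $1/\epsilon$ — is a one-line eigenvalue computation. The only thing to be mildly careful about is tracking the logarithmic factor correctly through the substitution, but $\log(d_\text{eff}^{\lambda}/(\epsilon\delta))$ absorbs everything.
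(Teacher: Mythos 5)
Your proposal is correct and follows essentially the same route as the paper: apply Theorem \ref{additiveErrorThm} with ridge parameter $\epsilon\lambda$ and observe that $d_\text{eff}^{\epsilon\lambda} \le d_\text{eff}^{\lambda}/\epsilon$, which the paper justifies via $(\bv{K}+\epsilon\lambda\bv{I})^{-1} \preceq \frac{1}{\epsilon}(\bv{K}+\lambda\bv{I})^{-1}$ and you justify by the equivalent per-eigenvalue computation. Nothing is missing.
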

\begin{proof}
This follows from Theorem \ref{additiveErrorThm} by noting $d_\text{eff}^{\epsilon\lambda} \leq d_\text{eff}^\lambda/\epsilon$ since $(\bv{K} + \epsilon\lambda I)^{-1} \preceq \frac{1}{\epsilon}(\bv{K} + \lambda I)^{-1}$. 
\end{proof}

Corollary \ref{main_cor} is sufficient to prove that $\bv{\tilde K}$ can be used in place of $\bv{K}$ without sacrificing performance on kernel ridge regression and canonical correlation tasks (see \cite{alaoui2015fast} and \cite{wang2016column}). 
We also use it to prove a \emph{projection-cost preservation} guarantee (Theorem \ref{pcpTheorem}, Appendix \ref{sec:pcp}). Specifically, we show that if $O((k\log k)/\epsilon)$ landmarks are sampled with an appropriately chosen ridge parameter $\lambda$, then for any rank-$k$ projection matrix $\bv{X}$, $\bv{\tilde K}$ will satisfy, for some fixed $c > 0$:
\begin{align}
\label{eq:pcp_short}
\tr(\bv{K} - \bv{X}\bv{K}\bv{X})\le \tr(\bv{\tilde K} - \bv{X}\bv{\tilde K}\bv{X}) + c \le (1+\epsilon)\tr(\bv{K} - \bv{X}\bv{K}\bv{X}).
\end{align}
\eqref{eq:pcp_short} allows us to prove approximation guarantees for kernel PCA and $k$-means clustering. 
Projection-cost preservation has proven a powerful concept in the matrix sketching literature \cite{feldman2013turning,cohen2015dimensionality,cohen2015ridge,boutsidis2016optimal,ClarksonW17}. We hope that an explicit guarantee for kernels will lead to applications of RLS-Nystr{\" o}m beyond those considered in this work.

Our results on downstream learning bounds that can be derived from Theorem \ref{additiveErrorThm} are summarized in Table \ref{tab:applications}. Details can be found in Appendices \ref{sec:pcp} and \ref{sec:apps}. 
%Table \ref{tab:applications}.

\begin{table}[h]
\small
\centering 
\begin{threeparttable}[b]
\begin{tabular}{||c||c|c|c|c||}
\hhline{|t:=:t:====:t|}
Application  & \specialcell{Downstream \\Guarantee} &\specialcell{Relevant \\Theorem} & \specialcell{Space to \\store $\bv{\tilde K}$} & \specialcell{Time to\\ compute $\bv{\tilde K}$} \\
\hhline{|:=::====:|}
\specialcell{Kernel Ridge \\Regression w/ \\Parameter $\lambda$} 
& \specialcell{$(1+\epsilon)$ relative error \\ risk bound}  
& Thm \ref{thm:app_ridge}
&$\tilde O(\frac{n d_{\text{eff}}^\lambda}{\epsilon})$ 
&  \specialcell{$\tilde O(\frac{n (d_{\text{eff}}^\lambda)^2}{\epsilon^2}) +$\\ $\tilde O(\frac{n d_{\text{eff}}^\lambda}{\epsilon})$ kernel evals.}
\\\hhline{||-||-|-|-|-||}
\specialcell{Kernel \\ $k$-means \\ Clustering} 
& \specialcell{$(1+\epsilon)$ relative error}  
& Thm \ref{thm:app_kmeans}
&$\tilde O(\frac{nk}{\epsilon})$ 
& $\tilde O(\frac{nk^2}{\epsilon^2}) + \tilde O(\frac{nk}{\epsilon})$ kernel evals.
\\\hhline{||-||-|-|-|-||}
\specialcell{Rank $k$ \\ Kernel PCA} 
& \specialcell{$(1+\epsilon)$ relative \\ Frobenius norm error}  
& Thm \ref{thm:app_pca}
&$\tilde O(\frac{nk}{\epsilon})$ 
& $\tilde O(\frac{nk^2}{\epsilon^2}) + \tilde O(\frac{nk}{\epsilon})$ kernel evals.
\\\hhline{||-||-|-|-|-||}
\specialcell{Kernel CCA w/ \\ Regularization \\Params $\lambda_x$, $\lambda_y$} 
& \specialcell{$\epsilon$ additive error to \\ canonical correlation}  
& Thm \ref{thm:app_cca}
&$\tilde O(\frac{nd_{\text{eff}}^{\lambda_x} + nd_{\text{eff}}^{\lambda_y}}{\epsilon})$ 
& \specialcell{$\tilde O(\frac{n(d_{\text{eff}}^{\lambda_x})^2+ n(d_{\text{eff}}^{\lambda_y})^2}{\epsilon^2}) +$\\ $\tilde O(\frac{nd_{\text{eff}}^{\lambda_x} + n d_{\text{eff}}^{\lambda_y}}{\epsilon})$ kernel evals.}
\\
\hhline{|b:=:b:====:b|}
\end{tabular}

\begin{tablenotes}
\item[$*$] For conciseness, $\tilde O(\cdot )$ hides log factors in the failure probability, $d_{\text{eff}}$, and $k$.
\end{tablenotes}

\caption{Downstream guarantees for $\bv{\tilde K}$ obtained from RLS-Nystr{\" o}m (Algorithm \ref{alg:highlevel}). For all problems, the runtime and space cost depends linearly on the number of training data points $n$.} \vspace{.25em}
\label{tab:applications}
\end{threeparttable}

\end{table}

\section{Recursive sampling for efficient RLS-Nystr{\"o}m}
\label{sec:algos}
Having established strong approximation guarantees for RLS-Nystr{\"o}m, it remains to provide an efficient implementation. Specifically, Step \ref{step1} of Algorithm \ref{alg:highlevel} naively requires $\Theta(n^3)$ time. We show that significant acceleration is possible using a recursive sampling approach, which is adapted from techniques developed in \cite{cohen2015uniform} and \cite{cohen2015ridge}.

\subsection{Ridge leverage score approximation via uniform sampling}
The key idea is to approximate the ridge leverage scores of $\bv{K}$ using a uniform sample of the data points. To ensure accuracy, the sample must be large -- a constant fraction of the points. We later show how to recursively approximate this large sample to achieve our final runtimes. %, leading to our final $O(ns^2)$ runtime when selecting $s$ landmark points. 
%Our recursive algorithms follow work in \cite{cohen2015ridge} and \cite{cohen2015uniform} and should be thought of as adapting these earlier results to the more challenging kernel setting.
We first prove:
%We first show that uniform sampling a constant fraction of the data points yields good ridge leverage score estimates. 
%We will focus first on the case when $\lambda$ is known.
\begin{lemma}\label{uniformSampling} For any $\bv{B} \in \mathbb{R}^{n \times n}$ with $\bv{B}\bv{B}^T = \bv{K}$ and $\bv{S} \in \mathbb{R}^{n\times s}$ chosen by sampling each data point independently with probability $1/2$, let
\begin{align}\label{approxScoreEquation}
\tilde l_i^{\lambda} = \bv{b}_i^T (\bv{B}^T\bv{S} \bv{S}^T \bv{B} + \lambda \bv{I})^{-1}\bv{b}_i
\end{align}
and $p_i = \min \{1, 16\tilde l_i^{\lambda} \log(\sum_i \tilde l_i^{\lambda}/\delta)\}$ for any $\delta \in (0,1/8)$. Then with probability at least $1-\delta$:
\begin{enumerate}
\item $\tilde l_i^{\lambda} \ge l_i^{\lambda}$ for all $i$.
\item $\sum_i p_i \le 64\sum_i l_i^{\lambda}\log(\sum_i l_i^{\lambda}/\delta) $.
\end{enumerate}
\end{lemma}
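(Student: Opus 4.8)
The plan is to handle the two conclusions quite differently: the first is a deterministic property of the subsample, while the second combines a deterministic bound on the scores of the \emph{sampled} rows with a probabilistic bound on those of the \emph{unsampled} rows. For the first conclusion, I would observe that $\bv{S}\bv{S}^T$ is a diagonal $0/1$ matrix, so $\bv{S}\bv{S}^T\preceq\bv{I}$, hence $\bv{B}^T\bv{S}\bv{S}^T\bv{B}\preceq\bv{B}^T\bv{B}$ and therefore $(\bv{B}^T\bv{S}\bv{S}^T\bv{B}+\lambda\bv{I})^{-1}\succeq(\bv{B}^T\bv{B}+\lambda\bv{I})^{-1}$; conjugating by $\bv{b}_i$ gives $\tilde l_i^\lambda\ge l_i^\lambda$ for every $i$ and for \emph{every} realization of $\bv{S}$. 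So this part holds deterministically and needs no probabilistic content.

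The second conclusion is the substance. Write $T$ for the (random) set of sampled indices, so $\bv{B}^T\bv{S}\bv{S}^T\bv{B}=\bv{B}_T^T\bv{B}_T$ where $\bv{B}_T$ stacks the sampled rows. First I would reduce the bound on $\sum_i p_i$ to a bound on $\sum_i\min\{1,\tilde l_i^\lambda\}$: using $\min\{1,ab\}\le b\min\{1,a\}$ for $b\ge1$, and counting the indices with $p_i=1$ directly (so that a possibly large $\sum_j\tilde l_j^\lambda$ does no harm, since all such blow-up is confined to capped terms), it suffices to show $\sum_i\min\{1,\tilde l_i^\lambda\}=O(d_\text{eff}^\lambda)$ with probability $1-\delta$, plus a separate, more delicate check that the factor $\log(\sum_j\tilde l_j^\lambda/\delta)$ costs only $O(\log(d_\text{eff}^\lambda/\delta))$ once capping is accounted for. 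Now split $\sum_i\min\{1,\tilde l_i^\lambda\}$ by whether $i\in T$. For $i\in T$, $\tilde l_i^\lambda$ is exactly the $\lambda$-ridge leverage score of row $i$ \emph{within} $\bv{B}_T$, so by Fact~\ref{sum_equal_deff} applied to $\bv{B}_T$, $\sum_{i\in T}\tilde l_i^\lambda=\tr(\bv{B}_T^T\bv{B}_T(\bv{B}_T^T\bv{B}_T+\lambda\bv{I})^{-1})$; since $\bv{B}_T^T\bv{B}_T\preceq\bv{B}^T\bv{B}$ and $x\mapsto x/(x+\lambda)$ is operator monotone, this is at most $\tr(\bv{B}^T\bv{B}(\bv{B}^T\bv{B}+\lambda\bv{I})^{-1})=d_\text{eff}^\lambda$ --- a deterministic bound.

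For $i\notin T$ I would condition on $T$ and use Sherman--Morrison: writing $v_i=\bv{b}_i^T(\bv{B}_T^T\bv{B}_T+\bv{b}_i\bv{b}_i^T+\lambda\bv{I})^{-1}\bv{b}_i$ for the score row $i$ \emph{would} have were it added to the sample, we get $\tilde l_i^\lambda=v_i/(1-v_i)$, hence the elementary bound $\min\{1,\tilde l_i^\lambda\}\le 2v_i$, so $\sum_{i\notin T}\min\{1,\tilde l_i^\lambda\}\le2\sum_{i\notin T}v_i$. To control $\sum_{i\notin T}v_i$, note $v_i$ (indeed for every $i$, once one defines $v_i$ as the ridge score of $i$ within the sample-plus-$i$) depends only on the sampling restricted to coordinates $\ne i$, and $i$ then lands in $T$ with probability $\tfrac12$, contributing exactly $\tfrac12 v_i$ in expectation to $\sum_{j\in T}\tilde l_j^\lambda$; summing over $i$ and using the deterministic bound above, $\tfrac12\sum_i\E[v_i]\le\E[\sum_{j\in T}\tilde l_j^\lambda]\le d_\text{eff}^\lambda$. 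Combined with the $i\in T$ case this yields $\E[\sum_i\min\{1,\tilde l_i^\lambda\}]=O(d_\text{eff}^\lambda)$.

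The main obstacle is upgrading this \emph{expectation} bound to the stated \emph{high-probability} bound, since Markov's inequality loses a factor $1/\delta$ where we need only $\log(1/\delta)$; the same difficulty is what forces care with the $\log(\sum_j\tilde l_j^\lambda/\delta)$ factor, because $\sum_j\tilde l_j^\lambda$ genuinely can be as large as $\tr(\bv{K})/\lambda$. I would resolve both at once by separating rows into ``heavy'' ones, with $l_i^\lambda$ above a fixed constant --- there are only $O(d_\text{eff}^\lambda)$ of these (by Fact~\ref{sum_equal_deff}), each contributes at most $1$ to $\sum_i\min\{1,\tilde l_i^\lambda\}$ and is anyway assigned $p_i=1$ regardless of $T$, so the log factor never multiplies them --- and ``light'' ones, for which sampling at rate $\tfrac12$ is far more than enough: a matrix Bernstein bound (exactly in the spirit of Lemma~\ref{bernstein}) shows that on the subspace relevant to the light rows, $\bv{B}_T^T\bv{B}_T+\lambda\bv{I}$ is within a constant factor of $\bv{B}^T\bv{B}+\lambda\bv{I}$ with probability $1-\delta$, which pins the light rows' approximate scores within a constant of their true values. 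Union-bounding this single failure event with the (deterministic) first conclusion gives the lemma with probability $1-\delta$, and tracking constants through the reduction of the first paragraph gives the claimed $64\sum_i l_i^\lambda\log(\sum_i l_i^\lambda/\delta)$.
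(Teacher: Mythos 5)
Your part (1) is exactly the paper's argument and is fine. For part (2), your route to the \emph{expectation} bound is correct and genuinely different from the paper's: the deterministic bound $\sum_{i\in T}\tilde l_i^\lambda=\tr\bigl(\bv{B}_T^T\bv{B}_T(\bv{B}_T^T\bv{B}_T+\lambda\bv{I})^{-1}\bigr)\le d_{\text{eff}}^\lambda$, the Sherman--Morrison identity $\tilde l_i^\lambda=v_i/(1-v_i)$ with $\min\{1,\tilde l_i^\lambda\}\le 2v_i$, and the independence/symmetry trick giving $\tfrac12\sum_i\E[v_i]\le d_{\text{eff}}^\lambda$ are all sound. But the lemma needs a high-probability bound, and the step you propose for that upgrade is where the proof breaks. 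With a fixed-constant threshold for ``heavy'' rows, the ``light'' rows can still have $l_i^\lambda$ up to that constant, and a matrix Bernstein bound for sampling at rate $\tfrac12$ only concentrates when each term $(\bv{B}^T\bv{B}+\lambda\bv{I})^{-1/2}\bv{b}_i\bv{b}_i^T(\bv{B}^T\bv{B}+\lambda\bv{I})^{-1/2}$ has norm $O(1/\log(d_{\text{eff}}^\lambda/\delta))$ --- i.e.\ $l_i^\lambda\lesssim 1/\log(\cdot)$, not $l_i^\lambda\le O(1)$. Worse, even if you lower the threshold, the phrase ``on the subspace relevant to the light rows'' does not yield what you need: $\tilde l_i^\lambda=\bv{b}_i^T(\bv{B}_T^T\bv{B}_T+\lambda\bv{I})^{-1}\bv{b}_i$ involves the inverse of the \emph{full} sampled Gram matrix, to which the heavy rows (present or absent from $T$ essentially arbitrarily) contribute, and a light row's $\bv{b}_i$ need not lie in any subspace on which you have a two-sided spectral bound. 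So the event you union-bound over is not actually established.

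The paper's device for exactly this difficulty is a diagonal reweighting matrix $\bv{W}\preceq\bv{I}$ that scales down the heavy rows until \emph{every} row of $\bv{W}\bv{B}$ has ridge leverage score at most $\alpha=\frac{1}{32\log(\sum l_i^\lambda/\delta)}$ with respect to $\bv{W}\bv{K}\bv{W}$; then uniform $\tfrac12$-sampling is legitimate leverage-score sampling for the whole reweighted matrix, Bernstein applies globally (no subspace restriction), and one gets $\tilde l_i^\lambda\le 2\,l_i^\lambda(\bv{W}\bv{K}\bv{W})$ for all $i$ simultaneously with probability $1-\delta$. The reweighted rows --- of which there are at most $\alpha^{-1}\sum_i l_i^\lambda$, since reweighting only increases other rows' scores and the total score sum does not increase --- are then handled by the trivial bound $p_i\le 1$, while the unweighted rows satisfy $\sum\tilde l_i^\lambda\le 2\sum l_i^\lambda$. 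Your heavy/light intuition is pointing at the right phenomenon, but without the reweighting (or an equivalent mechanism that makes all rows simultaneously ``light'' relative to the $\tfrac12$ sampling rate while controlling how many rows you gave up on), the high-probability claim for the light rows does not go through.
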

The first condition ensures that the approximate scores $\tilde l_i^{\lambda}$ suffice for use in Algorithm \ref{alg:highlevel}. The second ensures that the Nystr{\"o}m approximation obtained will have, up to constant factors, the same size as if we used the true ridge leverage scores. Note that it is not obvious how to compute $\tilde l_i^{\lambda}$ using the formula in \eqref{approxScoreEquation} without explicitly forming $\bv{B}$. We discuss how to do this in Section \ref{sec:generalized}.
\begin{proof}
The first bound follows trivially since $\bv{B}^T\bv{S}\bv{S}^T \bv{B} \preceq \bv{B}^T \bv{B}$ so:
\begin{align*}
\tilde l_i^{\lambda} = \bv{b}_i^T (\bv{B}^T \bv{S}\bv{S}^T\bv{B}+ \lambda \bv{I})^{-1}\bv{b}_i \ge \bv{b}_i^T (\bv{B}^T \bv{B} + \lambda \bv{I})^{-1}\bv{b}_i = l_i^{\lambda}.
\end{align*}
The challenge is the second bound. %The proof closely follows proofs in \cite{cohen2015uniform} and \cite{cohen2015ridge}, and we refer the reader to Theorem 2 of \cite{cohen2015uniform} for details. We sketch the idea below.
The key observation is that there exists a diagonal reweighting matrix $\bv{W} \in \mathbb{R}^{n \times n}$, $\bv{0} \preceq \bv{W} \preceq \bv{I}$ such that for all $i$, $l_i^{\lambda}(\bv{W}\bv{K}\bv{W}) \le \alpha$ where $\alpha \eqdef \frac{1}{2} \cdot \frac{1}{16 \log(\sum l_i^{\lambda}/\delta)}$. This ensures that uniformly sampling rows with probability $1/2$ from the \emph{reweighted kernel} $\bv{W}\bv{K}\bv{W}$ is a valid ridge leverage score sampling. Additionally, $|\{i : \bv{W}_{i,i} < 1 \}| \le 32\log(\sum l_i^{\lambda}/\delta) \cdot \sum l_i^{\lambda}$. That is, we do not need to reweight too many columns to achieve the ridge leverage score bound.

Although $\bv{W}$ is never actually computed, its existence can be proved algorithmically: we can construct a valid $\bv{W}$ by iteratively considering any $i$ with $l_i^{\lambda}(\bv{W}\bv{K}\bv{W}) \ge \alpha$. Since $\lambda > 0$, it is always possible to decrease the ridge leverage score to exactly $\alpha$ by decreasing $\bv{W}_{i,i}$ sufficiently.

It is clear from the interpretation of Definition \ref{leverageDef} given in \eqref{yintat_view} that decreasing $\bv{W}_{i,i}$, which corresponds to decreasing the weight of row $i$ of $\bv{B}$, only increases the ridge leverage scores of other rows. So, any reweighted row will always maintain leverage score $\ge \alpha$ as other rows are reweighted. Theorem 2 of \cite{cohen2015uniform} demonstrates rigorously that the reweighted rows' leverage scores in fact converge to $\alpha$. Further, since $\bv{W} \preceq \bv{I}$, it is simple to show (see Lemma \ref{decreasingScore2}, Appendix \ref{additional_proofs}):
\begin{align*}
\sum_i l_i^{\lambda}(\bv{W}\bv{K}\bv{W}) \le \sum_i l_i^{\lambda}(\bv{K}) \eqdef \sum_i l_i^{\lambda}.
\end{align*}
Thus, since each reweighted row has $l_i^\lambda(\bv{W}\bv{K}\bv{W}) \ge \alpha$, $\alpha \cdot |\{i : \bv{W}_{i,i} < 1 \}| \le \sum_i l_i^{\lambda}$ and so:
$$|\{i : \bv{W}_{i,i} < 1 \}| \le \frac{1}{\alpha}\sum_i l_i^{\lambda} = 32\log \left (\sum l_i^{\lambda}/\delta \right ) \cdot \sum l_i^{\lambda}.$$

We can now bound $\sum_i p_i$. For any $i$ that is reweighted by $\bv{W}$ we just trivially bound $p_i \le 1$. Since $l_i^{\lambda}(\bv{W}\bv{K}\bv{W}) \le \frac{1}{2} \cdot \frac{1}{16 \log(\sum l_i^{\lambda}/\delta)}$ for all $i$, and since $\bv{S}$ samples each $i$ with probability $1/2$, by the matrix Bernstein bound of Lemma \ref{bernstein}, with probability $1-\delta/2$:
\begin{align*}
\frac{1}{2}(\bv{B}^T\bv{W}^2\bv{B} + \lambda \bv{I}) \preceq (\bv{B}^T\bv{W}\bv{S}\bv{S}^T \bv{W}\bv{B} + \lambda \bv{I}) \preceq \frac{3}{2} (\bv{B}^T\bv{W}^2\bv{B} + \lambda \bv{I}).
\end{align*}
Hence:
\begin{align*}
\tilde l_i^{\lambda} = \bv{b}_i^T (\bv{B}^T\bv{S} \bv{S}^T \bv{B} + \lambda \bv{I})^{-1}\bv{b}_i &\le \bv{b}_i^T (\bv{B}^T \bv{W}\bv{S} \bv{S}^T \bv{W}\bv{B} + \lambda \bv{I})^{-1}\bv{b}_i\\
 &\le 2 \bv{b}_i^T (\bv{B}^T\bv{W}^2 \bv{B} + \lambda \bv{I})^{-1}\bv{b}_i\\
&= 2l_i^{\lambda}(\bv{W}\bv{B}\bv{B}^T\bv{W}) = 2l_i^{\lambda}(\bv{W}\bv{K}\bv{W}).
\end{align*}
Again using that $\bv{W} \preceq \bv{I}$ and Lemma \ref{decreasingScore2}, $\sum_{\{i: \bv{W}_{i,i}=1\}} \tilde l_i^{\lambda} \le 2\sum_i l_i^{\lambda}.$
Overall:
\begin{align*}
\sum_i p_i &= \sum_{\{i: \bv{W}_{i,i} < 1\}} p_i + \sum_{\{i: \bv{W}_{i,i} = 1\}} p_i\\
&\le |\{i: \bv{W}_{i,i} < 1\}| + 32 \log \left (\sum l_i^{\lambda}/\delta \right) \cdot \sum_i l_i^{\lambda} \\
&= 64 \log \left(\sum l_i^{\lambda}/\delta \right) \cdot \sum_i l_i^{\lambda}.
\end{align*}
\end{proof}

%From the above, and Lemma \ref{bernstein}, if we sample by $\tilde l_i^\lambda$ we get $\bv{S}_1$ such that:
%\begin{align*}
%\frac{1}{2}(\bv{B}^T\bv{B} + \lambda \bv{I}) \preceq (\bv{B}^T\bv{S}_1\bv{S}_1^T \bv{B} + \lambda \bv{I}) \preceq \frac{3}{2} (\bv{B}^T\bv{B} + \lambda \bv{I})
%\end{align*}

\subsection{Computing ridge leverage scores from a sample}\label{sec:generalized}
In order to utilize Lemma \ref{uniformSampling} we must show how to efficiently compute $\tilde l_i^{\lambda}$ via formula \eqref{approxScoreEquation}
%\begin{align*}
%\tilde l_i^{\epsilon\lambda} = \bv{b}_i^T (\bv{B}^T\bv{S} \bv{S}^T \bv{B} + \epsilon\lambda \bv{I})^{-1}\bv{b}_i
%\end{align*}
\emph{without explicitly forming} either $\bv{K}$ or $\bv{B}$. We prove the following:
\begin{lemma} 
\label{michaels_lemma}
For any sampling matrix $\bv{S} \in \mathbb{R}^{n\times s}$, and any $\lambda > 0$:
\begin{align*}
\tilde l_i^{\lambda} \eqdef \bv{b}_i^T (\bv{B}^T\bv{S} \bv{S}^T \bv{B} + \lambda \bv{I})^{-1}\bv{b}_i = 
\frac{1}{\lambda}\left(\bv{K} - \bv{K}\bv{S}\left(\bv{S}^T\bv{K}\bv{S} + \lambda\bv{I}\right)^{-1}\bv{S}^T\bv{K}\right)_{i,i}. 
\end{align*}
It follows that we can compute $\tilde l_i^{\lambda}$ for all $i$ in $O(ns^2)$ time using just $O(ns)$ kernel evaluations, to compute $\bv{KS}$ and the diagonal of $\bv{K}$.
\end{lemma}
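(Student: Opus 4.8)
The plan is to derive the identity by a single application of the Woodbury matrix identity, which collapses the $n\times n$ regularized inverse appearing in \eqref{approxScoreEquation} into an $s\times s$ inverse, and then to read off the cost. I would set $\bv{M} \eqdef \bv{S}^T\bv{B} \in \R^{s\times n}$, so that $\bv{B}^T\bv{S}\bv{S}^T\bv{B} = \bv{M}^T\bv{M}$ and the matrix to be inverted is $\bv{M}^T\bv{M} + \lambda\bv{I}$, which is positive definite since $\lambda>0$. Woodbury then gives
\begin{align*}
(\bv{M}^T\bv{M} + \lambda\bv{I})^{-1} = \frac{1}{\lambda}\left(\bv{I} - \bv{M}^T(\bv{M}\bv{M}^T + \lambda\bv{I})^{-1}\bv{M}\right),
\end{align*}
where $\bv{M}\bv{M}^T + \lambda\bv{I}$ is an $s\times s$ positive definite matrix.

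The second step is to express the three quantities $\bv{b}_i^T\bv{b}_i$, $\bv{M}\bv{M}^T$, and $\bv{b}_i^T\bv{M}^T$ in terms of $\bv{K}$ and $\bv{S}$ only, using that $\bv{b}_i = \bv{B}^T\bv{e}_i$ and $\bv{B}\bv{B}^T = \bv{K}$: namely $\bv{b}_i^T\bv{b}_i = \bv{e}_i^T\bv{B}\bv{B}^T\bv{e}_i = \bv{K}_{i,i}$; $\bv{M}\bv{M}^T = \bv{S}^T\bv{B}\bv{B}^T\bv{S} = \bv{S}^T\bv{K}\bv{S}$; and $\bv{b}_i^T\bv{M}^T = \bv{e}_i^T\bv{B}\bv{B}^T\bv{S} = \bv{e}_i^T\bv{K}\bv{S}$ is the $i$th row of $\bv{K}\bv{S}$, with $\bv{M}\bv{b}_i = \bv{S}^T\bv{K}\bv{e}_i$ its transpose. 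Substituting these into $\bv{b}_i^T(\bv{M}^T\bv{M} + \lambda\bv{I})^{-1}\bv{b}_i$ and expanding by the Woodbury formula above yields
\begin{align*}
\tilde l_i^{\lambda} = \frac{1}{\lambda}\left(\bv{K}_{i,i} - (\bv{e}_i^T\bv{K}\bv{S})(\bv{S}^T\bv{K}\bv{S} + \lambda\bv{I})^{-1}(\bv{S}^T\bv{K}\bv{e}_i)\right) = \frac{1}{\lambda}\left(\bv{K} - \bv{K}\bv{S}(\bv{S}^T\bv{K}\bv{S}+\lambda\bv{I})^{-1}\bv{S}^T\bv{K}\right)_{i,i},
\end{align*}
which is the claimed identity; crucially the right-hand side never references $\bv{B}$.

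For the complexity claim, $\bv{K}\bv{S}$ is a selection of $s$ columns of $\bv{K}$ and so costs $O(ns)$ kernel evaluations; $\bv{S}^T\bv{K}\bv{S}$ is then a submatrix of $\bv{K}\bv{S}$ at no extra kernel cost; inverting $(\bv{S}^T\bv{K}\bv{S} + \lambda\bv{I})$ takes $O(s^3)$ time; forming the $n\times s$ matrix $\bv{Z}\eqdef \bv{K}\bv{S}(\bv{S}^T\bv{K}\bv{S}+\lambda\bv{I})^{-1}$ takes $O(ns^2)$ time; and, since $\bv{S}^T\bv{K} = (\bv{K}\bv{S})^T$ by symmetry of $\bv{K}$, the $i$th diagonal entry of $\bv{Z}\bv{S}^T\bv{K}$ is just the inner product of the $i$th rows of $\bv{Z}$ and $\bv{K}\bv{S}$, for $O(ns)$ total over all $i$. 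Adding the $n$ kernel evaluations for $\mathrm{diag}(\bv{K})$ gives $O(ns)$ kernel evaluations and $O(ns^2 + s^3) = O(ns^2)$ additional time (using $s\le n$).

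There is no real obstacle here: the computation is mechanical once one recognizes that Woodbury converts the $n\times n$ inverse into an $s\times s$ one. The only points to keep straight are that $\lambda>0$ guarantees every inverse is well-defined, and that the whole purpose of expressing the answer purely in terms of $\bv{K}$ and $\bv{S}$ is that the (possibly infinite-dimensional, non-unique) factor $\bv{B}$ used in the definition \eqref{approxScoreEquation} is never actually formed — which is exactly what makes the $O(ns^2)$ running time attainable.
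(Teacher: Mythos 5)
Your proof is correct, and it takes a genuinely different route from the paper's. You invoke the push-through/Woodbury identity
\begin{align*}
(\bv{M}^T\bv{M} + \lambda\bv{I})^{-1} = \tfrac{1}{\lambda}\left(\bv{I} - \bv{M}^T(\bv{M}\bv{M}^T + \lambda\bv{I})^{-1}\bv{M}\right), \qquad \bv{M} = \bv{S}^T\bv{B},
\end{align*}
as a black box and then translate $\bv{b}_i^T\bv{b}_i$, $\bv{M}\bv{M}^T$, and $\bv{M}\bv{b}_i$ into kernel quantities; the paper instead writes the SVD $\bv{S}^T\bv{B} = \bv{\bar U}\bv{\bar\Sigma}\bv{\bar V}^T$, splits $\bv{b}_i$ into its components in the row span of $\bv{S}^T\bv{B}$ and its orthogonal complement, and applies the scalar identity $(\sigma^2+\lambda)^{-1} = \tfrac{1}{\lambda}(1 - \sigma^2(\sigma^2+\lambda)^{-1})$ on the diagonal before recombining --- in effect re-deriving the Woodbury identity for this special case. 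Your version is shorter and leans on a standard lemma (whose one-line verification, $(\bv{M}^T\bv{M}+\lambda\bv{I})\bv{M}^T = \bv{M}^T(\bv{M}\bv{M}^T+\lambda\bv{I})$, you could include for completeness); the paper's version is self-contained and makes visible the geometric content, namely that the $\tfrac{1}{\lambda}\norm{\bv{\bar V}_\perp^T\bv{b}_i}_2^2$ term from the nullspace component merges with the range component to produce $\tfrac{1}{\lambda}\bv{K}_{i,i}$. The complexity accounting is the same in both, and your observation that the diagonal of $\bv{Z}\bv{S}^T\bv{K}$ needs only row-wise inner products with $\bv{K}\bv{S}$ is exactly the right way to keep the final step at $O(ns)$.
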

\begin{proof}
Using the SVD write $\bv{S}^T \bv{B} = \bv{\bar U} \bv{\bar \Sigma} \bv{\bar V}^T$. $\bv{\bar V} \in \R^{n \times s}$ forms an orthonormal basis for the row span of $\bv{S}^T \bv{B}$. Let $\bv{\bar V}_\perp$ be span for the nullspace of $\bv{S}^T \bv{B}$. Then we can rewrite $\tilde l_i^{\lambda}$ as:
\begin{align*}
\tilde l_i^{\lambda} = \bv{b}_i^T \left (\bv{B}^T \bv{S} \bv{S}^T \bv{B} + \lambda \bv{I}\right )^{-1} \bv{b}_i &= \bv{b}_i^T \left[\bv{\bar V},\bv{\bar V}_\perp\right] (\bv{\bar \Sigma}^2 + \lambda \bv{I})^{-1} \left[\bv{\bar V},\bv{\bar V}_\perp\right]^T \bv{b}_i.
\end{align*}
Here we abuse notation a by letting $\bv{\bar \Sigma}$ represent an $n\times n$ diagonal matrix whose first $s$ entries are the singular values of $\bv{S}^T \bv{B}$ and whose remaining entries are all equal to 0. Now: 
\begin{align}
\label{split_up_rscore}
\tilde l_i^{\lambda} = \bv{b}_i^T \left[\bv{\bar V},\bv{\bar V}_\perp\right] (\bv{\bar \Sigma}^2 + \lambda \bv{I})^{-1} \left[\bv{\bar V},\bv{\bar V}_\perp\right]^T \bv{b}_i
&= \frac{1}{ \lambda} \bv{b}_i^T\bv{\bar V}_\perp^T\bv{\bar V}_\perp\bv{b}_i + \bv{b}_i^T \bv{\bar V} (\bv{\bar \Sigma}^2 + \lambda \bv{I})^{-1} \bv{\bar V}^T \bv{b}_i^T.
\end{align}
Focusing on the second term of \eqref{split_up_rscore},
\begin{align}
\bv{b}_i^T \bv{\bar V} (\bv{\bar \Sigma}^2 + \lambda \bv{I})^{-1} \bv{\bar V}^T  \bv{b}_i 
&= \bv{b}_i^T \bv{\bar V} \frac{1}{\lambda}\left (\bv{I} - \bv{\bar \Sigma}^2(\bv{\bar \Sigma}^2 + \lambda \bv{I})^{-1} \right ) \bv{\bar V}^T \bv{b}_i \nonumber\\
\label{inverse_term}
&= \frac{1}{\lambda}\bv{b}_i^T \bv{\bar V}\bv{\bar V}^T \bv{b}_i - \frac{1}{\lambda}\bv{b}_i^T\bv{\bar V}\left (\bv{\bar \Sigma}^2(\bv{\bar \Sigma}^2 + \lambda \bv{I})^{-1} \right ) \bv{\bar V}^T \bv{b}_i.
\end{align}
Focusing on the second term of \eqref{inverse_term},
\begin{align*}
\bv{b}_i^T\bv{\bar V}\left (\bv{\bar \Sigma}^2(\bv{\bar \Sigma}^2 + \lambda \bv{I})^{-1} \right ) \bv{\bar V}^T \bv{b}_i 
&= \bv{b}_i^T \bv{\bar V} \bv{\bar \Sigma} \bv{\bar U}^T \bv{\bar U} (\bv{\bar \Sigma}^2 + \lambda \bv{I})^{-1} \bv{\bar U}^T \bv{\bar U}\bv{\bar \Sigma} \bv{\bar V}^T \bv{b}_i^T\\
&=  \bv{b}_i^T \bv{B}^T \bv{S} (\bv{S}^T \bv{K}\bv{S} + \lambda \bv{I})^{-1} \bv{S}^T \bv{B} \bv{b}_i.
\end{align*}
Substituting back into \eqref{inverse_term} and then \eqref{split_up_rscore}, we conclude that:
\begin{align*}
\tilde l_i^{\lambda} &= \frac{1}{ \lambda} \bv{b}_i^T\bv{\bar V}_\perp^T\bv{\bar V}_\perp\bv{b}_i +  \frac{1}{\lambda}\bv{b}_i^T \bv{\bar V}\bv{\bar V}^T \bv{b}_i - \frac{1}{\lambda}\bv{b}_i^T \bv{B}^T \bv{S} (\bv{S}^T \bv{K}\bv{S} + \lambda \bv{I})^{-1} \bv{S}^T \bv{B} \bv{b}_i \\
&= \frac{1}{\lambda} \bv{b}_i^T\bv{b}_i - \frac{1}{\lambda}\bv{b}_i^T \bv{B}^T \bv{S} (\bv{S}^T \bv{K}\bv{S} + \lambda \bv{I})^{-1} \bv{S}^T \bv{B} \bv{b}_i\\
&=\frac{1}{\lambda} \bv{K}_{i,i} - \frac{1}{\lambda} \left ( \bv{K}\bv{S}\left(\bv{S}^T\bv{K}\bv{S} + \lambda\bv{I}\right)^{-1}\bv{S}^T\bv{K}\right)_{i,i}.
\end{align*}

We can compute $(\bv{S}^T \bv{K}\bv{S} + \lambda \bv{I})^{-1}$ in $O(s^3) \le O(ns^2)$ time and $O(s^2) \le O(ns)$ kernel evaluations. Given this inverse, computing the diagonal entries of $\bv{K}\bv{S}\left(\bv{S}^T\bv{K}\bv{S} + \lambda\bv{I}\right)^{-1}\bv{S}^T\bv{K}$ requires just $O(ns)$ kernel evaluations to form $\bv{KS}$ and $O(ns^2)$ time to perform the necessary multiplications. Finally, computing the diagonal entries of $\bv{K}$ requires $n$ additional kernel evaluations.
\end{proof}

\subsection{Recursive RLS-Nystr{\"o}m}
We are finally ready to use Lemmas \ref{uniformSampling} and \ref{michaels_lemma} to give an efficient recursive method for ridge leverage score Nystr{\"o}m approximation.
We show that the output of Algorithm \ref{halvingFixed}, $\bv{S}$, is sampled according to approximate ridge leverage scores for $\bv{K}$ and so satisfies the approximation bound of Theorem \ref{additiveErrorThm}.
\begin{theorem}[Main Result]
\label{thm:main_algo_theorem}
Let $\bv{S} \in \mathbb{R}^{n\times s}$ be computed by
Algorithm \ref{halvingFixed}. With probability $1-3\delta$, $s \le 384 \cdot d_{\text{eff}}^\lambda\log(d_{\text{\normalfont eff}}^\lambda/\delta)$, $\bv{S}$ is sampled by overestimates of the $\lambda$-ridge leverage scores of $\bv{K}$, and thus by Theorem \ref{additiveErrorThm}, the Nystr{\"o}m approximation $\bv{\tilde K} = \bv{K S} (\bv{S}^T\bv{K}\bv{S})^+ \bv{S}^T\bv{K}$ satisfies:
\begin{align*}
\bv{\tilde K} \preceq \bv{K} \preceq \bv{\tilde K} + \lambda \bv{I}.
\end{align*}
 Algorithm \ref{halvingFixed} uses
$O(n s)$ kernel evaluations and $O(n s^2 )$ computation time. 
\end{theorem}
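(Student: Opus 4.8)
The plan is to induct on the depth of the recursion, carrying a per-level invariant. Write $\bv{B}$ for a matrix with $\bv{B}\bv{B}^T=\bv{K}$, and observe that a recursive call operating on a subset $T\subseteq\{1,\dots,n\}$ of the points effectively works with the principal submatrix $\bv{K}_T$, for which one may take the factor $\bv{B}_T=\bv{S}_T^T\bv{B}$ where $\bv{S}_T$ is the column-selector for $T$. The invariant I would carry down is: with high probability the matrix returned by the call on $T$ is the sample produced by Algorithm~\ref{alg:highlevel} on $\bv{K}_T$ from over-estimates of the $\lambda$-ridge leverage scores of $\bv{K}_T$, and it has at most $384\,d_{\text{eff}}^\lambda(\bv{K}_T)\log(d_{\text{eff}}^\lambda(\bv{K}_T)/\delta)$ columns. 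Because Cauchy interlacing together with the monotonicity of $x\mapsto x/(x+\lambda)$ gives $d_{\text{eff}}^\lambda(\bv{K}_T)\le d_{\text{eff}}^\lambda(\bv{K})$, every sample built anywhere in the recursion has $O\!\big(d_{\text{eff}}^\lambda\log(d_{\text{eff}}^\lambda/\delta)\big)=O(s)$ columns; this monotonicity is exactly why the bound does not deteriorate as the recursion proceeds.

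For the inductive step, consider a call on $T$, let $\bar{\bv{S}}$ be its uniform sample of $T$ (each point kept independently with probability $1/2$), let $\bv{S}_{\text{sub}}$ be the matrix returned by the recursive call on the sampled points, and let $\tilde{\bv{S}}=\bar{\bv{S}}\bv{S}_{\text{sub}}$ be $\bv{S}_{\text{sub}}$ re-indexed inside $T$. The algorithm forms the estimates $\tilde l_i^\lambda$ of Lemmas~\ref{uniformSampling} and~\ref{michaels_lemma} \emph{using $\tilde{\bv{S}}$ in place of $\bar{\bv{S}}$}, via the closed form $\tilde l_i^\lambda=\tfrac1\lambda\big(\bv{K}_T-\bv{K}_T\tilde{\bv{S}}(\tilde{\bv{S}}^T\bv{K}_T\tilde{\bv{S}}+\lambda\bv{I})^{-1}\tilde{\bv{S}}^T\bv{K}_T\big)_{i,i}$, and then runs the sampling of Steps~2--3 of Algorithm~\ref{alg:highlevel}. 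Since $\tilde{\bv{S}}\tilde{\bv{S}}^T$ is a diagonal $0/1$ matrix, $\bv{B}_T^T\tilde{\bv{S}}\tilde{\bv{S}}^T\bv{B}_T\preceq\bv{B}_T^T\bv{B}_T$ gives $\tilde l_i^\lambda\ge l_i^\lambda(\bv{K}_T)$, so these are legitimate over-estimates; this is the deterministic half of the invariant. The size bound is the delicate half. Writing $\bv{B}_{\bar T}=\bar{\bv{S}}^T\bv{B}_T$ (so $\bv{B}_{\bar T}\bv{B}_{\bar T}^T=\bv{K}_{\bar T}$, the kernel handled by the recursive call), one has the identity $\bv{B}_T^T\tilde{\bv{S}}\tilde{\bv{S}}^T\bv{B}_T=\bv{B}_{\bar T}^T\bv{S}_{\text{sub}}\bv{S}_{\text{sub}}^T\bv{B}_{\bar T}$. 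The inductive invariant says $\bv{S}_{\text{sub}}$ is a valid RLS-Nystr{\"o}m sample for $\bv{K}_{\bar T}$, so Theorem~\ref{additiveErrorThm} applies; in particular the intermediate bound~\eqref{elambda_bound} from its proof (Corollary~\ref{bernstein2}) furnishes a constant $C$ with $\bv{B}_{\bar T}^T\bv{B}_{\bar T}\preceq C\,\bv{B}_{\bar T}^T\bv{S}_{\text{sub}}\bv{S}_{\text{sub}}^T\bv{B}_{\bar T}+\lambda\bv{I}$, which rearranges to $\bv{B}_{\bar T}^T\bv{S}_{\text{sub}}\bv{S}_{\text{sub}}^T\bv{B}_{\bar T}+\lambda\bv{I}\succeq\tfrac1{2C}(\bv{B}_{\bar T}^T\bv{B}_{\bar T}+\lambda\bv{I})$. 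Inverting, $\tilde l_i^\lambda\le 2C\cdot\tilde l_i^{\lambda,\mathrm{ideal}}$, where $\tilde l_i^{\lambda,\mathrm{ideal}}$ is the estimate Lemma~\ref{uniformSampling} would produce from the genuine uniform sample $\bar{\bv{S}}$. Feeding this comparison into the definition of $p_i$ (using $\min\{1,2Cx\}\le 2C\min\{1,x\}$ and absorbing the $\log(2C)$ term), the $\sum_i p_i$ bound of Lemma~\ref{uniformSampling} gives $\sum_i p_i=O\!\big(d_{\text{eff}}^\lambda(\bv{K}_T)\log(d_{\text{eff}}^\lambda(\bv{K}_T)/\delta)\big)$, and the Chernoff bound of Theorem~\ref{additiveErrorThm} turns this into the stated $384$-column bound once $C$ is tracked. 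Applying the invariant at $T=\{1,\dots,n\}$ and invoking Theorem~\ref{additiveErrorThm} once more yields $\bv{\tilde K}\preceq\bv{K}\preceq\bv{\tilde K}+\lambda\bv{I}$.

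For the running time, since $\bar{\bv{S}}$ keeps each point with probability $1/2$ the number of points at recursion level $h$ is $O(n/2^h)$ with high probability, and I would terminate at a small base case (e.g.\ constant size, returning all its points), contributing only lower-order terms. By the size invariant every recursive sample has $O(s)$ columns, so the work of Lemma~\ref{michaels_lemma} at level $h$ --- forming $\bv{K}_T\tilde{\bv{S}}$, inverting the $O(s)\times O(s)$ matrix $\tilde{\bv{S}}^T\bv{K}_T\tilde{\bv{S}}+\lambda\bv{I}$, reading off the needed diagonal entries, and the ensuing sampling --- costs $O(|T|\,s^2)=O\big((n/2^h)s^2\big)$ time and $O\big((n/2^h)s\big)$ kernel evaluations. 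Summing the geometric series over $h=0,1,\dots,O(\log n)$ gives $O(ns^2)$ time and $O(ns)$ kernel evaluations, to which one adds the $O(ns)$ evaluations and $O(s^3)\le O(ns^2)$ time needed to output $\bv{K}\bv{S}$ and $(\bv{S}^T\bv{K}\bv{S})^+$. The failure probability is handled by a union bound over the single chain of $O(\log n)$ nested calls, each of whose analyses (the Bernstein event behind Lemma~\ref{uniformSampling} and~\eqref{elambda_bound}, and the Chernoff bound for the column count) fails with probability at most $\delta$; choosing the parameter fed down the recursion appropriately --- which affects only logarithmic factors --- keeps the total at $3\delta$.

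I expect the main obstacle to be precisely the size bound in the inductive step: showing that substituting the recursively-built sample $\bv{S}_{\text{sub}}$ for the true uniform half-sample inflates the leverage-score estimates by only a fixed absolute constant, and --- crucially --- that this loss is absorbed into a single final constant rather than compounding across the $\Theta(\log n)$ levels. What rescues this is the combination of (a) monotonicity of $d_{\text{eff}}^\lambda$ under passing to principal submatrices, so the target sample size is identical at every level and never grows, and (b) the constant $C$ in~\eqref{elambda_bound} being absolute, so the per-level blow-up in $\sum_i p_i$ is an $O(C)$ factor paid once. The secondary points requiring care are the cross-level probability bookkeeping and making the base case mesh cleanly with the $d_{\text{eff}}^\lambda$-dependent size invariant.
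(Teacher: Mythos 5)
Your overall architecture is the same as the paper's (induction on the recursion, monotonicity of $d_{\text{eff}}^\lambda$ under passing to principal submatrices via Lemma \ref{decreasingScore2}, and a per-level constant-factor loss that is charged against the \emph{exact} scores of the current kernel so it does not compound). But the mechanism you propose for the key step --- bounding $\tilde l_i^\lambda$ against the ideal estimate from the true half-sample --- has a genuine gap. You invoke Corollary \ref{bernstein2} / \eqref{elambda_bound} to get $\bv{B}_{\bar T}^T\bv{B}_{\bar T}\preceq C\,\bv{B}_{\bar T}^T\bv{S}_{\text{sub}}\bv{S}_{\text{sub}}^T\bv{B}_{\bar T}+\lambda\bv{I}$ and assert that $C$ is an absolute constant. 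It is not: in that corollary $C = 2/\min_i p_i$, which depends on the smallest sampling probability and can be arbitrarily large (a point with tiny ridge leverage score forces a tiny $p_i$). The corollary only asserts that \emph{some finite} $C$ exists, which suffices in the proof of Theorem \ref{additiveErrorThm} because the $C$-term there is annihilated by the projection $\bv{\bar P}_{\bv{S}}$, but it is useless for a constant-factor comparison of leverage-score estimates. With an unweighted recursive sample --- which is what your invariant describes, "the sample produced by Algorithm \ref{alg:highlevel}" --- there is simply no absolute-constant lower spectral bound available, so your $\tilde l_i^\lambda\le 2C\,\tilde l_i^{\lambda,\mathrm{ideal}}$ step and hence the $384$-column bound do not go through.

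The paper's fix is precisely the feature of Algorithm \ref{halvingFixed} your proposal drops: the recursive call returns a \emph{weighted} sampling matrix (columns scaled by $1/\sqrt{p_i}$), for which Lemma \ref{bernstein} gives the genuine two-sided sandwich $\frac{1}{2}(\bv{B}_{\bar T}^T\bv{B}_{\bar T}+\lambda\bv{I})\preceq\bv{B}_{\bar T}^T\bv{S}_{\text{sub}}\bv{S}_{\text{sub}}^T\bv{B}_{\bar T}+\lambda\bv{I}\preceq\frac{3}{2}(\bv{B}_{\bar T}^T\bv{B}_{\bar T}+\lambda\bv{I})$; this is the invariant \eqref{thedesiredbound} carried up the recursion, and inverting it yields estimates within a fixed factor of $3$ of the ideal ones (equation \eqref{second_level_appox}). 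Note that weighting also breaks your "deterministic half": with rescaled columns, $\bv{B}_T^T\tilde{\bv{S}}\tilde{\bv{S}}^T\bv{B}_T\preceq\bv{B}_T^T\bv{B}_T$ no longer holds, and the over-estimate property $\tilde l_i^\lambda\ge l_i^\lambda(\bv{K}_T)$ instead comes from the upper half of the sandwich together with the explicit oversampling factor $\frac{3}{2}$ that Algorithm \ref{halvingFixed} builds into Step \ref{2factor}. Your runtime and union-bound bookkeeping are essentially the paper's, but the size bound --- which you correctly flag as the crux --- needs the weighted two-sided invariant rather than the one-sided Corollary \ref{bernstein2}.
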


\begin{algorithm}[H]
\caption{\algoname{RecursiveRLS-Nystr{\"o}m}.}
{\bf input}: $\bv{x}_1,\ldots,\bv{x}_m \in \mathcal{X}$, kernel function $K: \mathcal{X}\times \mathcal{X} \rightarrow \mathbb{R}$, ridge $\lambda > 0$, failure prob. $\delta \in (0,1/32)$\\
{\bf output}: weighted sampling matrix $\bv{S} \in \mathbb{R}^{m \times s}$ 
%such that for any $\bv{B}$ with $\bv{B}\bv{B}^T = \bv{K}$, $\frac{1}{2}(\bv{B}^T\bv{B} + \epsilon\lambda \bv{I}) \preceq (\bv{B}^T\bv{S}\bv{S}^T \bv{B} + \epsilon\lambda \bv{I}) \preceq \frac{3}{2} (\bv{B}^T\bv{B} + \epsilon\lambda \bv{I})$.  %and factors of the associated Nystr{\"o}m approximation: $\bv{K}\bv{S}$ and $(\bv{S}^T\bv{K}\bv{S})^+$
\begin{algorithmic}[1]
\If{$m \leq 192\log(1/\delta)$}\label{if_state}
	\State{\Return{$\bv{S} := \bv{I}_{m\times m}$.}}
\EndIf
\State{Let $\mathcal{\bar S}$ be a random subset of $\left\{1,...,m\right\}$, with each $i$ included independently with prob. $\frac{1}{2}$.\label{sample_step}}\hspace{1em}\phantom{.}\hspace{1em}
\Comment{Let $ \bv{\bar X} = \{\bv{x}_{i_1},\bv{x}_{i_2},...,\bv{x}_{i_{|\mathcal{\bar S}|}}\}$ for $i_j \in \mathcal{\bar S}$ be the data sample corresponding to $\mathcal{\bar S}$.}\hspace{5em}\phantom{.}\hspace{3em}
\Comment{Let $\bv{\bar S} = [\bv{e}_{i_1},\bv{e}_{i_2},...,\bv{e}_{i_{|\mathcal{\bar S}|}}]$ be the sampling matrix corresponding to $\mathcal{\bar S}$.}
	\State{$\bv{\tilde S} := \algoname{RecursiveRLS-Nystr{\"o}m}(\bv{\bar X}, K, \lambda, \delta/3)$.} \label{recursive_call}
	\State{$\bv{\hat S} := \bv{\bar S} \cdot \bv{\tilde S}$.}\label{subset_call}
		\State{Set $\tilde l_i^{\lambda} := \frac{3}{2\lambda}\left(\bv{K} - \bv{K}\bv{\hat S}\left(\bv{\hat S}^T\bv{K}\bv{\hat S} + \lambda\bv{I}\right)^{-1}\bv{\hat S}^T\bv{K} \right)_{i,i}$ for each $i \in \left\{1,\ldots,m\right\}.$\hspace{8em}\phantom{.}\hspace{2em} }
	 \Comment{\textcolor{blue}{By Lemma \ref{michaels_lemma}, equals $\frac{3}{2}(\bv{B}(\bv{B}^T\bv{\hat S}\bv{\hat S}^T \bv{B} +\lambda \bv{I})^{-1}\bv{B}^T)_{i,i}$. $\bv{K}$ denotes the kernel matrix for datapoints $\{\bv{x}_1,\ldots,\bv{x}_m\}$ and kernel function $K$.}}\label{2factor}
\State{Set $p_i := \min \{ 1, \tilde l_i^{\lambda} \cdot 16\log(\sum \tilde l_i^{\lambda}/\delta)\}$ for each $i \in \left\{1,\ldots,m\right\}.$}\label{prob_compute}
\State{Initially set weighted sampling matrix $\bv{S}$ to be empty. For each $i \in \left\{1,\ldots,m\right\}$, with probability $p_i$, append the column $\frac{1}{\sqrt{p_i}}\bv{e}_i$ onto $\bv{S}$.} 
\State{\Return{$\bv{S}$.}}\label{return_line}
\end{algorithmic}
\label{halvingFixed}
\end{algorithm}
Note that in Algorithm \ref{halvingFixed} the columns  of $\bv{S}$ are weighted by $1/\sqrt{p_i}$. The Nystr{\"o}m approximation $\bv{\tilde K} = \bv{K S} (\bv{S}^T\bv{K}\bv{S})^+ \bv{S}^T\bv{K}$ is not effected by column weights (see derivation in Section \ref{nystromPrelim}). However, weighting is necessary when the output is used in recursive calls (i.e., when $\bv{\tilde S}$ is used in Step \ref{subset_call}).

We prove Theorem \ref{thm:main_algo_theorem} via the following intermediate result:

\begin{theorem}\label{thm:halvingFixed} For any inputs $\bv{x}_1,\ldots,\bv{x}_m$, $K$, $\lambda>0$ and $\delta\in(0,1/32)$, let $\bv{K}$ be the kernel matrix for $\bv{x}_1,\ldots,\bv{x}_m$ and kernel function $K$ and let $d_\text{\normalfont eff}^\lambda(\bv{K})$ be the effective dimension of $\bv{K}$ with parameter $\lambda$. With probability $(1-3\delta)$, $\algoname{RecursiveRLS-Nystr{\"o}m}$ outputs $\bv{S}$ with $s$ columns that satisfies:
\begin{align}
\label{thedesiredbound}
\frac{1}{2}(\bv{B}^T\bv{B} + \lambda \bv{I}) \preceq (\bv{B}^T\bv{S}\bv{S}^T \bv{B} + \lambda \bv{I}) \preceq \frac{3}{2} (\bv{B}^T\bv{B} + \lambda \bv{I})\hspace{2em}\text{ for any } \bv{B}\text{ with }\bv{B}\bv{B}^T = \bv{K}.
\end{align}
Additionally, $s \leq s_{\max}(d_\text{\normalfont eff}^\lambda(\bv{K}),\delta)$ where $s_{\max}(w,z) \eqdef 384 \cdot \left(w+1\right)\log\left((w+1)/z\right)$. The algorithm uses $\leq c_1 m s_{\max}(d_\text{\normalfont eff}^\lambda(\bv{K}),\delta)$ kernel evaluations and $\leq c_2 m s_{\max}(d_\text{\normalfont eff}^\lambda(\bv{K}),\delta)^2$ additional computation time where $c_1$ and $c_2$ are fixed universal constants.
\end{theorem}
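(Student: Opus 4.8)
The plan is to prove Theorem~\ref{thm:halvingFixed} by strong induction on the number of input points $m$, using the recursion of Algorithm~\ref{halvingFixed} as the inductive skeleton and carrying along a fixed but arbitrary factor $\bv{B}$ with $\bv{B}\bv{B}^T=\bv{K}$. In the base case $m\le 192\log(1/\delta)$ the algorithm returns $\bv{S}=\bv{I}_{m\times m}$, so \eqref{thedesiredbound} holds with equality; $s=m\le 192\log(1/\delta)\le s_{\max}(d_{\text{eff}}^\lambda(\bv{K}),\delta)$ (the ``$+1$''s in $s_{\max}$ cover the case $d_{\text{eff}}^\lambda<1$); and the cost is $O(m)$, trivially within budget. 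For the inductive step, assume the theorem for every input of size $<m$. Writing $\bv{\bar B}=\bv{\bar S}^T\bv{B}$, a valid factor of the subsampled kernel $\bv{\bar K}=\bv{\bar S}^T\bv{K}\bv{\bar S}$, and recalling $\bv{\hat S}=\bv{\bar S}\bv{\tilde S}$ so that $\bv{B}^T\bv{\hat S}\bv{\hat S}^T\bv{B}=\bv{\bar B}^T\bv{\tilde S}\bv{\tilde S}^T\bv{\bar B}$ and $\bv{B}^T\bv{\bar S}\bv{\bar S}^T\bv{B}=\bv{\bar B}^T\bv{\bar B}$, the inductive hypothesis applied to the recursive call (run with failure parameter $\delta/3$, hence succeeding with probability $\ge 1-\delta$) gives $\tfrac12(\bv{B}^T\bv{\bar S}\bv{\bar S}^T\bv{B}+\lambda\bv{I})\preceq \bv{B}^T\bv{\hat S}\bv{\hat S}^T\bv{B}+\lambda\bv{I}\preceq \tfrac32(\bv{B}^T\bv{\bar S}\bv{\bar S}^T\bv{B}+\lambda\bv{I})$, together with the bound on the number of columns of $\bv{\tilde S}$.

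Next I would control the uniform half-sample $\bv{\bar S}$ by reusing the machinery in the proof of Lemma~\ref{uniformSampling}: there is a reweighting $\bv{0}\preceq\bv{W}\preceq\bv{I}$ with $l_i^\lambda(\bv{W}\bv{K}\bv{W})\le\alpha$ for all $i$, with $\alpha$ of order $1/\log((d_{\text{eff}}^\lambda+1)/\delta)$, and $|\{i:\bv{W}_{i,i}<1\}|\le\tfrac1\alpha\sum_i l_i^\lambda(\bv{W}\bv{K}\bv{W})\le\tfrac1\alpha d_{\text{eff}}^\lambda$ (using Lemma~\ref{decreasingScore2}). Since the reweighted rows have leverage scores $\le\alpha$ and $\bv{\bar S}$ samples at rate $\tfrac12$, the matrix Bernstein bound (Lemma~\ref{bernstein}) gives, with probability $\ge 1-\tfrac\delta2$, $\bv{B}^T\bv{\bar S}\bv{\bar S}^T\bv{B}+\lambda\bv{I}\succeq \bv{B}^T\bv{W}\bv{\bar S}\bv{\bar S}^T\bv{W}\bv{B}+\lambda\bv{I}\succeq \tfrac12(\bv{B}^T\bv{W}^2\bv{B}+\lambda\bv{I})$; chaining with the previous display (and with $\bv{B}^T\bv{\bar S}\bv{\bar S}^T\bv{B}\preceq\bv{B}^T\bv{B}$, as the half-sample is unweighted) yields $\tfrac14(\bv{B}^T\bv{W}^2\bv{B}+\lambda\bv{I})\preceq \bv{B}^T\bv{\hat S}\bv{\hat S}^T\bv{B}+\lambda\bv{I}\preceq \tfrac32(\bv{B}^T\bv{B}+\lambda\bv{I})$. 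By Lemma~\ref{michaels_lemma}, the scores of Step~6 satisfy $\tilde l_i^\lambda=\tfrac32\,\bv{b}_i^T(\bv{B}^T\bv{\hat S}\bv{\hat S}^T\bv{B}+\lambda\bv{I})^{-1}\bv{b}_i$. The right inequality gives $\tilde l_i^\lambda\ge l_i^\lambda$ for all $i$, so the $\tilde l_i^\lambda$ are valid over-estimates and the analysis of Theorem~\ref{additiveErrorThm} applies; the left inequality gives $\tilde l_i^\lambda\le 6\,\bv{b}_i^T(\bv{B}^T\bv{W}^2\bv{B}+\lambda\bv{I})^{-1}\bv{b}_i$, which for every ``light'' index ($\bv{W}_{i,i}=1$) equals $6\,l_i^\lambda(\bv{W}\bv{K}\bv{W})\le 6\alpha$. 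Summing $p_i$ over the light indices via $\sum_{\{i:\bv{W}_{i,i}=1\}}\tilde l_i^\lambda\le 6\sum_i l_i^\lambda(\bv{W}\bv{K}\bv{W})\le 6 d_{\text{eff}}^\lambda$, and bounding $p_i\le 1$ for the $\le\tfrac1\alpha d_{\text{eff}}^\lambda$ reweighted indices, yields $\sum_i p_i=O\!\bigl(d_{\text{eff}}^\lambda\log((d_{\text{eff}}^\lambda+1)/\delta)\bigr)$.

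Given that $\bv{S}$ is the weighted $p_i$-sample of the over-estimates $\tilde l_i^\lambda\ge l_i^\lambda$, the inductive step is finished exactly as in Theorem~\ref{additiveErrorThm}: one application of the matrix Bernstein bound (Lemma~\ref{bernstein}) to the weighted sample yields \eqref{thedesiredbound} with probability $\ge 1-\tfrac\delta2$, and a Chernoff bound yields $s\le 2\sum_i p_i$ with probability $\ge 1-\tfrac\delta2$, which by the previous paragraph (and the choice of the constant $384$) is at most $s_{\max}(d_{\text{eff}}^\lambda(\bv{K}),\delta)$. For the failure budget, the key structural point is that the recursion is a single chain (each invocation makes exactly one recursive call): at the depth-$j$ node the only randomness besides the recursion itself is the half-sample Bernstein bound, the final-sample Bernstein/Chernoff pair, and a Chernoff bound ensuring the child has fewer than $m_j$ points and at most a constant fraction of $m_j$, which together fail with probability $\le 2\delta_j$ where $\delta_j=\delta/3^{\,j}$; summing the geometric series $2\delta\sum_{j\ge0}3^{-j}=3\delta$ gives the claimed success probability $1-3\delta$.

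Finally, the running time is a geometric series over the recursion levels: at level $j$ the only nontrivial work is Step~6, which by Lemma~\ref{michaels_lemma} uses $O(m_j s_{j+1})$ kernel evaluations and $O(m_j s_{j+1}^2)$ additional time, where $m_j$ is the size at level $j$ and $s_{j+1}$ the number of columns returned by the child. By the inductive hypothesis and the monotonicity $d_{\text{eff}}^\lambda(\bv{\bar K})\le d_{\text{eff}}^\lambda(\bv{K})$ of the effective dimension under passing to a principal submatrix (a consequence of $\bv{\bar B}^T\bv{\bar B}\preceq\bv{B}^T\bv{B}$ and $x\mapsto x/(x+\lambda)$ being increasing), $s_{j+1}\le s_{\max}(d_{\text{eff}}^\lambda(\bv{K}),\delta/3^{\,j+1})=O\!\bigl((d_{\text{eff}}^\lambda+1)(\log((d_{\text{eff}}^\lambda+1)/\delta)+j)\bigr)$, while $m_j$ decays geometrically; the geometric decay in $m_j$ dominates the polynomial-in-$j$ growth of $s_{j+1}$, so the series sum to $O(m\,s_{\max}(d_{\text{eff}}^\lambda,\delta))$ evaluations and $O(m\,s_{\max}(d_{\text{eff}}^\lambda,\delta)^2)$ time. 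I expect the main obstacle to be the constant-precise bookkeeping behind $\sum_i p_i=O(d_{\text{eff}}^\lambda\log((d_{\text{eff}}^\lambda+1)/\delta))$: it requires $\log(\sum_j\tilde l_j^\lambda/\delta)=O(\log((d_{\text{eff}}^\lambda+1)/\delta))$, even though a priori $\sum_j\tilde l_j^\lambda$ could be as large as $\tfrac3\lambda\tr(\bv{K})$. The resolution is that only the $\le\tfrac1\alpha d_{\text{eff}}^\lambda=O(d_{\text{eff}}^\lambda\log((d_{\text{eff}}^\lambda+1)/\delta))$ reweighted indices can carry a $\tilde l_i^\lambda$ not already bounded by $6\alpha$, and each is sampled with probability $1$ regardless; one checks that using $\min\{1,\tilde l_i^\lambda\}$ (still a valid over-estimate, since $l_i^\lambda\le1$) leaves every $p_i$ equal to $1$ unchanged and makes $\sum_i\min\{1,\tilde l_i^\lambda\}=O(d_{\text{eff}}^\lambda\log((d_{\text{eff}}^\lambda+1)/\delta))$, so the logarithm is under control. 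Pushing this through together with the repeated $\tfrac12$ and $\tfrac32$ sandwich inequalities and the deepening parameter $\delta/3^{\,j}$, while keeping all constants consistent, is the delicate part of the argument.
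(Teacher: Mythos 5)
Your proposal is correct and follows essentially the same route as the paper's proof: strong induction on $m$ with the same base case, the same chaining of the recursive guarantee for $\bv{\hat S}$ against the uniform half-sample via the reweighting matrix $\bv{W}$ from Lemma \ref{uniformSampling}, the same use of Lemmas \ref{michaels_lemma} and \ref{bernstein} to validate the computed $\tilde l_i^{\lambda}$ as over-estimates and bound $\sum_i p_i$, and the same geometric-series accounting of failure probability and cost (which you unroll over levels where the paper folds it into the inductive hypothesis via $|\mathcal{\bar S}| \le .56n$ and $\log(3/\delta) \le 1.317 \log(1/\delta)$). The one point where you go beyond the paper is the observation that $\log(\sum_j \tilde l_j^{\lambda}/\delta)$ is not a priori $O(\log(d_{\text{eff}}^\lambda/\delta))$ since $\sum_j \tilde l_j^{\lambda}$ can be as large as $\Theta(\tr(\bv{K})/\lambda)$; the paper's proof silently elides this when passing from $p_i \le 3\bar p_i$ to the final count, and your patch of capping the estimates at $1$ (which preserves their validity as over-estimates because $l_i^\lambda \le 1$, and does not change which $p_i$ equal $1$) is a legitimate and worthwhile repair rather than a deviation.
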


\begin{proof}
$\algoname{RecursiveRLS-Nystr{\"o}m}$ is a recursive algorithm and we prove Theorem \ref{thm:halvingFixed} via induction on the size of the input, $m$. In particular, we will show that, if Theorem \ref{thm:halvingFixed} holds for any all $m < n$, then it also holds for $m = n$. Our base case is $m = 1$.
%because, during the execution of the recursive algorithm, the if statement at Line  \ref{if_state} ensures that $\algoname{RecursiveRLS-Nystr{\"o}m}$ is never called on a data set of size 0 ($\delta < 1$, so $\log(1/\delta) >0$). We begin with this base case:

 \medskip\noindent \textbf{Base case: Theorem \ref{thm:halvingFixed} holds for any inputs as long as $m = 1$.} \smallskip

Suppose $m = 1$, so the input data set just consists of a single point $\bv{x}_1$. 
Then the if statement on Line \ref{if_state} evaluates to true since $192\log(1/\delta) > 1$. So, $\bv{S}$ is set to a $1 \times 1$ identity matrix and \eqref{thedesiredbound} of Theorem \ref{thm:halvingFixed} holds trivially since  $(\bv{B}^T\bv{B} + \lambda \bv{I}) = (\bv{B}^T\bv{S}\bv{S}^T \bv{B} + \lambda \bv{I})$. Furthermore, $s = 1 \le s_{\max}(d_\text{\normalfont eff}^\lambda(\bv{K}),\delta)$ for any $d_\text{\normalfont eff}^\lambda(\bv{K})$ and $\delta$, as required. The algorithm runs in $O(1)$ time and performs no kernel evaluations, so the runtime requirements of Theorem \ref{thm:halvingFixed} also hold as long as $c_2$ set to a large enough constant. This all holds with probability $1$, and so for any input failure probability $\delta$.

 \medskip \noindent \textbf{Inductive Step: Theorem \ref{thm:halvingFixed} holds for $m = n$ as long as it holds for all $m < n$.  } \smallskip

\noindent Depending on the setting of $\delta$, we split our analysis into $2$ cases: %If $\delta$ is small enough compared to $n$, 
%To analyze the algorithm for general $m$, it is easiest to break it into 2 cases. They are differentiated by the fact that, in the first, 
%we will not even need to use the fact that Theorem \ref{thm:halvingFixed} holds for $m <n$ to show that it holds for $n$. When $\delta$ is larger,  this inductive assumption will be neccesary.

\textbf{Case 1: The number of input data points $n$ is $< 192\log(1/\delta)$}.

\noindent In this case, as for the base case, the if statement on Line \ref{if_state} evaluates to true. $\bv{S}$ is set to an $n \times n$ identity matrix so \eqref{thedesiredbound} holds trivially. Furthermore, the number of samples $s$ is equal to $n$, and $n < 192\log(1/\delta) \le s_{\max}(d_\text{\normalfont eff}^\lambda(\bv{K}),\delta)$ as required. Again the algorithm doesn't compute any kernel dot products, the runtime bound required by Theorem \ref{thm:halvingFixed} holds, and all statements hold with probability $1$, which is $> 1-3\delta$ for any input failure probability $\delta$.

\textbf{Case 2: The number of input data points $n$ is $\geq 192\log(1/\delta)$}.

\noindent For this case we will use our inductive assumption since $\algoname{RecursiveRLS-Nystr{\"o}m}$ will call itself recursively at Step \ref{recursive_call}, for a smaller input size $m < n$.

We first note that the expected number of samples taken in Step \ref{sample_step} is $n/2$. I.e. $\E|\mathcal{\bar S}| = n/2$. By a standard multiplicative error Chernoff bound, with high probability the number of samples taken is not much larger than this expectation. This is important because it tells us that our problem size decreases substantially before we make the recursive call in Step \ref{recursive_call}. 
%As a corner case, since we only have from our inductive assumption that Theorem \ref{thm:halvingFixed} holds for $1\leq m \leq n-1$, we also need to say that with high probability we don't try to recurse on a problem of size 0 (i.e. with zero data points). Overal, 
Following the simplified Chernoff bounds in e.g. \cite{mitzenmacher2017probability}, when $n \geq 192\log(1/\delta)$, and thus $\E|\mathcal{\bar S}| \geq 96\log(1/\delta)$, we have :
\begin{align}
\label{number_of_samples_bound}
\Pr\left[1 \leq |\mathcal{\bar S}| \leq .56n\right] \geq (1-\delta)
\end{align}
as long as $\delta < 1/32$, as required by Theorem \ref{thm:halvingFixed}.

So, with probability $(1-\delta)$, on Step \eqref{recursive_call}, $\algoname{RecursiveRLS-Nystr{\"o}m}$ is called recursively on a data set $\bv{\bar X}$ of size $\geq 1$ and $\leq .56n$. Accordingly, we can apply our inductive assumption that Theorem \ref{thm:halvingFixed} holds for all $m$ between $1$ and $n-1$ to conclude that, with probability $(1-3\cdot\delta/3)$\footnote{Note that in Step \ref{recursive_call} we run $\algoname{RecursiveRLS-Nystr{\"o}m}$ with failure probability $\delta/3$}:
\begin{enumerate}
\item Let $\bv{K}_{\mathcal{\bar S}}$ denote the kernel matrix for the data points in $\bv{\bar X}$ (corresponding to the sample $\mathcal{\bar S}$ with kernel function $K$. Then $\bv{B}_\mathcal{\bar S} = \bv{\bar S}^T\bv{B} $ satisfies $\bv{B}_\mathcal{\bar S}\bv{B}_\mathcal{\bar S}^T = \bv{K}_{\mathcal{\bar S}}$. Thus:
\begin{align}
\label{recurs_bound}
\frac{1}{2}(\bv{B}_\mathcal{\bar S}^T\bv{B}_\mathcal{\bar S} + \lambda \bv{I}) \preceq (\bv{B}_\mathcal{\bar S}^T\bv{\tilde S}\bv{\tilde S}^T \bv{B}_\mathcal{\bar S} + \lambda \bv{I}) \preceq \frac{3}{2} (\bv{B}_\mathcal{\bar S}^T\bv{B}_\mathcal{\bar S}+ \lambda \bv{I}).
\end{align}
\item $\bv{\tilde S}$ has $\leq s_{\max}(d_\text{\normalfont eff}^\lambda(\bv{K}_{\mathcal{\bar S}}), \delta/3)$ columns.
\item The recursive call at Step \ref{recursive_call} evaluates $K$, the kernel function, $\le c_1 \cdot |\mathcal{\bar S}| \cdot s_{\max}(d_\text{\normalfont eff}^\lambda(\bv{K}_{\mathcal{\bar S}}),\delta/3)$ times and uses $\le c_2\cdot |\mathcal{\bar S}| \cdot s_{\max}(d_\text{\normalfont eff}^\lambda(\bv{K}_{\mathcal{\bar S}}),\delta/3)^2$ additional runtime steps.
\end{enumerate}
We first use \eqref{recurs_bound} to prove \eqref{thedesiredbound}. We can write $\bv{K}_{\mathcal{\bar S}} = \bv{\bar S}^T \bv{K} \bv{\bar S}$. For all $i \in \{1,\ldots n\}$ let 
\begin{align*}
\bar{\ell}_i^\lambda &= \left(\bv{B}\left(\bv{B}^T \bv{\bar S} \bv{\bar S}^T\bv{B} + \lambda\bv{I}\right)^{-1}\bv{B}^T\right)_{i,i} & &\text{and} &
\bar{p}_i &= \min \{1, 16\bar l_i^{\lambda} \log(\sum_i \bar l_i^{\lambda}/\delta)\}.
\end{align*}
By Lemma \ref{uniformSampling}, since $\bv{\bar S}$ is constructed by sampling with probability $\frac{1}{2}$, with probability $1- \delta$, $\forall \ i$:
\begin{align}
\label{induc_tbound}
\bar{\ell}_i^\lambda &\geq \ell_i^\lambda(\bv{K}) & &\text{and}
&
\sum_{i=1}^n \bar{p}_i &\leq 64 d_{\text{\normalfont eff}}^{\lambda}(\bv{K}) \log\left(d_{\text{\normalfont eff}}^{\lambda}(\bv{K}) /  \delta\right).
\end{align}
Here $\ell_i^\lambda(\bv{K})$ is the exact $i^\text{th}$ $\lambda$-ridge leverage score of $\bv{K}$.

Now, since $\bv{B}_\mathcal{\bar S} = \bv{\bar S}^T\bv{B} $, it follows from \eqref{recurs_bound} and from the well known fact that $\bv{M} \preceq \bv{N} \implies \bv{N}^{-1} \preceq \bv{M}^{-1}$, that for any vector $\bv{z}$,
\begin{align*}
\frac{2}{3} \bv{z}^T\left(\bv{B}^T \bv{\bar S} \bv{\bar S}^T\bv{B} + \lambda\bv{I}\right)^{-1}\bv{z} \leq \bv{z}^T\left(\bv{B}^T \bv{\bar S}\bv{\tilde S}\bv{\tilde S}^T \bv{\bar S}^T\bv{B} + \lambda\bv{I}\right)^{-1}\bv{z}  \leq 2\bv{z}^T\left(\bv{B}^T \bv{\bar S} \bv{\bar S}^T\bv{B} + \lambda\bv{I}\right)^{-1}\bv{z}.
\end{align*}
Accordingly, since we set $\bv{\hat S} := \bv{\bar S} \cdot \bv{\tilde S}$, for all $i \in \{1, \ldots, n\}$
\begin{align}
\label{second_level_appox}
\bar{\ell}_i^\lambda \leq \frac{3}{2}\left(\bv{B}\left(\bv{B}^T\bv{\hat S}\bv{\hat S}^T\bv{W}\bv{B} + \lambda\bv{I}\right)^{-1}\bv{B}^T\right)_{i,i} \leq 3\bar{\ell}_i^\lambda.
\end{align}
By Lemma \ref{michaels_lemma}, the middle term is exactly equal to $\tilde l_i^{\lambda}$ as computed in Step \ref{2factor} of $\algoname{RecursiveRLS-Nystr{\"o}m}$. So combining \eqref{second_level_appox} and \eqref{induc_tbound} we have that:
\begin{align}\label{camManBound}
\tilde{\ell}_i^\lambda &\geq \ell_i^\lambda(\bv{K}) & &\text{and}
&
\sum_{i=1}^n p_i &\leq 192 d_{\text{\normalfont eff}}^{\lambda}(\bv{K}) \log\left(d_{\text{\normalfont eff}}^{\lambda}(\bv{K}) /  \delta\right).
\end{align}
The second bound holds because, as computed on Step \ref{prob_compute} of $\algoname{RecursiveRLS-Nystr{\"o}m}$, 
\begin{align*}
p_i = \min\{1,\tilde  l_i^\lambda \cdot 16 \log(\sum \tilde  l_i^\lambda/\delta) \} \leq 3 \min\{1,\bar  l_i^\lambda \cdot 16 \log(\sum \bar  l_i^\lambda/\delta) \} = 3 \bar p_i \le 192 d_{\text{\normalfont eff}}^{\lambda}(\bv{K}) \log\left(d_{\text{\normalfont eff}}^{\lambda}(\bv{K}) /  \delta\right)
\end{align*}
by \eqref{second_level_appox}.
Equation \eqref{camManBound} guarantees that $\bv{S}$ is sampled by over-estimates of the ridge leverage scores and we have a bound on the sum of the sampling probabilities. So, to establish \eqref{thedesiredbound}, we just apply the matrix Bernstein results of Lemma \ref{bernstein}. We conclude that, with probability $(1-\delta)$,
\begin{align*}
\frac{1}{2}(\bv{B}^T\bv{B} + \lambda \bv{I}) \preceq (\bv{B}^T\bv{ S}\bv{ S}^T \bv{B} + \lambda \bv{I}) \preceq \frac{3}{2} (\bv{B}^T\bv{B}  + \lambda \bv{I})\hspace{2em}\text{ for any } \bv{B}\text{ with }\bv{B}\bv{B} ^T = \bv{K}.
\end{align*}
The same lemma guarantees that $\bv{S}$ will have $s$ columns where 
\begin{align}
\label{count_bound}
\frac{1}{2}\sum p_i \leq s \leq 2\sum p_i.
\end{align}
$2\sum p_i \leq 384 d_{\text{\normalfont eff}}^{\lambda}(\bv{K}) \log\left(d_{\text{\normalfont eff}}^{\lambda}(\bv{K}) /  \delta\right) \leq s_{\max}(d_\text{\normalfont eff}^\lambda(\bv{K}),\delta)$ columns.

To finish our proof of Theorem \ref{thm:halvingFixed},
 %for data sets of size $n$ under the assumption that it holds for data sets of size $< n$, 
 we still need a bound on the number of kernel function evaluations used by the algorithm and on its overall runtime.

Kernel evaluations are performed both during the recursive call at Step \ref{recursive_call} and when computing approximate leverage scores at Step \ref{2factor}. Let $\tilde{s}$ be the number of columns in $\bv{\tilde S}$, and hence in $\bv{\hat S}$. At Step \ref{2factor}, $K$ needs to be evaluated $n\cdot (\tilde{s}+1)$ times: $n\tilde{s}$ times to compute $\bv{K}\bv{\hat S}$ and $n$ times to compute the diagonal of $\bv{K}$. Additionally, by the 3rd guarantee that comes from our inductive assumption, we need at most $c_1 \cdot |\mathcal{\bar S}| \cdot s_{\max}(d_\text{\normalfont eff}^\lambda(\bv{K}_{\mathcal{\bar S}}),\delta/3)$ kernel evaluations for the recursive call.
We claim that: 
\begin{align}
\label{deff_down_bound}
s_{\max}(d_\text{\normalfont eff}^\lambda(\bv{K}_{\mathcal{\bar S}}),\delta/3) \leq 1.317 s_{\max}(d_\text{\normalfont eff}^\lambda(\bv{K}), \delta).
\end{align}
This follows from Lemma \ref{decreasingScore2}: since $\bv{K}_{\mathcal{\bar S}} = \bv{\bar S}^T\bv{K}\bv{\bar S}$ and $ \bv{\bar S}\bv{\bar S}^T \preceq \bv{I}$ for any sampling matrix, $d_\text{\normalfont eff}^\lambda(\bv{K}_{\mathcal{\bar S}}) \leq d_\text{\normalfont eff}^\lambda(\bv{K})$. Additionally, we use that $\log(3/\delta) \leq 1.317\log(1/\delta) $ when $\delta  \leq 1/32$.

Using this bound and \eqref{number_of_samples_bound} we see that our total number of kernel evaluations is bounded by:
\begin{align*}
n\cdot (\tilde{s}+1) + c_1 \cdot |\mathcal{\bar S}| \cdot s_{\max}(d_\text{\normalfont eff}^\lambda(\bv{K}_{\mathcal{\bar S}}),\delta/3)
&\leq  n\cdot (s_{\max}(d_\text{\normalfont eff}^\lambda(\bv{K}_{\mathcal{\bar S}}),\delta/3) +1) + c_1 \cdot .56 n \cdot s_{\max}(d_\text{\normalfont eff}^\lambda(\bv{K}_{\mathcal{\bar S}}),\delta/3)\\
&\leq  \left(2.317 + .74 c_1\right)n \cdot s_{\max}(d_\text{\normalfont eff}^\lambda(\bv{K}), \delta).
\end{align*}
As long as $c_1 > 9$, the above is $< c_1 n s_{\max}(d_\text{\normalfont eff}^\lambda(\bv{K}), \delta)$, so we see that $\algoname{RecursiveRLS-Nystr{\"o}m}$ run on a data set of size $n$ performs no more kernel evaluations than that allowed by Theorem \ref{thm:halvingFixed}.

We finally bound runtime, accounting for the recursive call to $\algoname{RecursiveRLS-Nystr{\"o}m}$ and all other steps. Again, using the 3rd guarantee from our inductive assumption, \eqref{deff_down_bound}, and \eqref{number_of_samples_bound}  to bound $|\mathcal{\bar S}|$, the recursive call that computes $\bv{\tilde S}$ has runtime at most:
\begin{align*}
c_2\cdot |\mathcal{\bar S}| \cdot s_{\max}(d_\text{\normalfont eff}^\lambda(\bv{K}_{\mathcal{\bar S}}),\delta/3)^2
\leq .972c_2n \cdot s_{\max}(d_\text{\normalfont eff}^\lambda(\bv{K}), \delta)^2.
\end{align*}

In addition to the recursive call, the remaining runtime of the algorithm is dominated by the time to compute $\left(\bv{\hat S}^T\bv{K}\bv{\hat S} + \lambda\bv{I}\right)^{-1}$ and then to multiply this matrix by the $n \times \tilde{s}$ matrix $\bv{K}\bv{\hat S}$ at Step \ref{2factor}. Both of these operations and all other steps can be performed in $O(\tilde{s}^3 + n\tilde{s}^2)$ time. Since $\tilde{s} \leq n$, there is a constant $c$ such that the number of steps required for the algorithm besides the recursive call is $cn\tilde{s}^2 \leq cns_{\max}(d_\text{\normalfont eff}^\lambda(\bv{K}_{\mathcal{\bar S}}),\delta/3)^2$. Again applying \eqref{deff_down_bound}, our runtime is bounded by:
\begin{align*}
.972c_2n \cdot s_{\max}(d_\text{\normalfont eff}^\lambda(\bv{K}), \delta)^2 + cns_{\max}(d_\text{\normalfont eff}^\lambda(\bv{K}_{\mathcal{\bar S}}),\delta/3)^2
\end{align*}
which is $\leq c_2n \cdot s_{\max}(d_\text{\normalfont eff}^\lambda(\bv{K}), \delta)^2$ as long as $c_2 \geq 40 c$. 

The proof of our statements above relied on three events succeeding: \eqref{number_of_samples_bound}, \eqref{induc_tbound}, that the recursive call satisfies \eqref{recurs_bound} and the two following guarantees. Each of these events fails with probability at most $\delta$, so we conclude via a union bound that they all succeed with probability $1- 3\delta$. 

Accordingly, we have proven that Theorem \eqref{thm:halvingFixed} holds for fixed universal constants $c_1$ and $c_2$ for any input data set of size $n$ as long as it holds for any input data set of size $m$ with $1\leq m <n$. Along with our base case, this establishes the theorem for all input sizes.
\end{proof}

\begin{proof}[Proof of Theorem \ref{thm:main_algo_theorem}] 
Theorem \ref{thm:main_algo_theorem} is nearly a direct corollary of Theorem \ref{thm:halvingFixed}. In our proof of Theorem \ref{additiveErrorThm} we show that if 
\begin{align*}
\frac{1}{2}(\bv{B}^T\bv{B} + \lambda \bv{I}) \preceq (\bv{B}^T\bv{S}\bv{S}^T \bv{B} + \lambda \bv{I}) \preceq \frac{3}{2} (\bv{B}^T\bv{B} + \lambda \bv{I})
\end{align*}
for a weighted sampling matrix $\bv{S}$, then even if we remove the weights from $\bv{S}$ so that it has all unit entries (they don't effect the Nystr\"{o}m approximation), $\bv{\tilde K} = \bv{K S} (\bv{S}^T\bv{K}\bv{S})^+ \bv{S}^T\bv{K}$ satisfies:
\begin{align*}
\bv{\tilde K} \preceq \bv{K} \preceq \bv{\tilde K} + \lambda \bv{I}.
\end{align*}
The runtime bounds also follow nearly directly from Theorem \ref{thm:halvingFixed}. In particular, we have established that $O\left(ns_{\max}(d_\text{\normalfont eff}^\lambda(\bv{K}), \delta)\right)$ kernel evaluations and $O\left(ns_{\max}(d_\text{\normalfont eff}^\lambda(\bv{K}), \delta)^2\right)$ additional runtime are required by $\algoname{RecursiveRLS-Nystr{\"o}m}$. We only needed the upper bound to prove Theorem \ref{thm:halvingFixed}, but along the way \eqref{count_bound} actually showed that in a successful run of $\algoname{RecursiveRLS-Nystr{\"o}m}$, $\bv{S}$ has $\Theta\left(d_{\text{\normalfont eff}}^{\lambda}(\bv{K}) \log\left(d_{\text{\normalfont eff}}^{\lambda}(\bv{K}) /  \delta\right)\right)$ columns. Additionally, we may assume that $d_{\text{\normalfont eff}}(\bv{K}) \geq 1/2$. If it is not, then it's not hard to check (see proof of Lemma \ref{decreasingScore2}) that $\lambda$ must be $\geq \|\bv{K}\|$. If this is the case, the guarantee of Theorem \ref{thm:main_algo_theorem} is vacuous: \emph{any} Nystr\"{o}m approximation $\bv{\tilde K}$ satisfies $\bv{\tilde K} \preceq \bv{K} \preceq \bv{\tilde K} + \lambda\bv{I}$. With  $d_{\text{\normalfont eff}}(\bv{K}) \geq 1/2$, $d_{\text{\normalfont eff}}^{\lambda}(\bv{K}) \log\left(d_{\text{\normalfont eff}}^{\lambda}(\bv{K}) /  \delta\right)$ and thus $s$ are $\Theta(s_{\max}(d_\text{\normalfont eff}^\lambda(\bv{K}),\delta)$ so we conclude that Theorem \ref{thm:main_algo_theorem} uses $O(ns)$ kernel evaluations and $O(ns^2)$ additional runtime.
\end{proof}

\section{Empirical Evaluation}\label{sec:experiments}

We conclude with an empirical evaluation of our recursive Nystr{\"o}m method. We first introduce a variant of Algorithm \ref{halvingFixed} where, instead of choosing a regularization parameter $\lambda$, the user sets a sample size $s$ and $\lambda$ is automatically determined such that $s = \Theta(d_{\text{eff}}^\lambda \cdot \log(d_{\text{eff}}^\lambda/\delta))$. This variant is practically appealing as it essentially yields the best possible approximation to $\bv{K}$ for a fixed sample budget. Additionally, it is necessary in applications to kernel rank-$k$ PCA and $k$-means clustering, when $\lambda$ is unknown, but where we set $s \approx k\log k$ (see Appendices \ref{sec:pcp} and \ref{sec:apps}).

\subsection{Recursive RLS-Nystr{\"o}m algorithm for fixed sample size}

Given a fixed sample size $s$, we will control $\lambda$ using the following fact:
\begin{fact}[Proven in \eqref{sum_to_k}, Appendix \ref{sec:pcp}]\label{ridgeScore2kBound} For any $\bv{K}$ and integer $k$, for $\lambda = \frac{1}{k}\sum_{i=k+1}^n \sigma_i(\bv{K})$,
$
d_{\text{eff}}^\lambda \le 2k.
$
\end{fact}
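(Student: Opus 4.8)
The plan is to expand $d_{\text{eff}}^\lambda$ in terms of the eigenvalues of $\bv{K}$ and split the resulting sum at the index $k$. Write the eigendecomposition $\bv{K} = \bv{U}\bs{\Sigma}^2\bv{U}^T$ (equivalently use the SVD of any $\bv{B}$ with $\bv{B}\bv{B}^T=\bv{K}$), so that $\bv{K}(\bv{K}+\lambda\bv{I})^{-1} = \bv{U}\,\mathrm{diag}\!\left(\tfrac{\sigma_i(\bv{K})}{\sigma_i(\bv{K})+\lambda}\right)\bv{U}^T$ and hence
\begin{align*}
d_{\text{eff}}^\lambda = \tr\!\big(\bv{K}(\bv{K}+\lambda\bv{I})^{-1}\big) = \sum_{i=1}^n \frac{\sigma_i(\bv{K})}{\sigma_i(\bv{K})+\lambda}.
\end{align*}
(If $\sum_{i=k+1}^n\sigma_i(\bv{K}) = 0$ then $\bv{K}$ has rank at most $k$ and the displayed sum has at most $k$ nonzero terms, each at most $1$, so $d_{\text{eff}}^\lambda \le k \le 2k$ trivially; assume henceforth $\lambda>0$.)

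Next I would split the sum into the leading $k$ terms and the tail $i = k+1,\dots,n$. For the leading block, bound each summand by $1$ (since $\sigma_i(\bv{K}) \ge 0$ and $\lambda > 0$), giving $\sum_{i=1}^k \frac{\sigma_i(\bv{K})}{\sigma_i(\bv{K})+\lambda} \le k$. For the tail, drop $\sigma_i(\bv{K})$ from the denominator to get $\frac{\sigma_i(\bv{K})}{\sigma_i(\bv{K})+\lambda} \le \frac{\sigma_i(\bv{K})}{\lambda}$, and then plug in the chosen value $\lambda = \frac1k\sum_{i=k+1}^n \sigma_i(\bv{K})$:
\begin{align*}
\sum_{i=k+1}^n \frac{\sigma_i(\bv{K})}{\sigma_i(\bv{K})+\lambda} \le \frac{1}{\lambda}\sum_{i=k+1}^n \sigma_i(\bv{K}) = \frac{1}{\lambda}\cdot k\lambda = k.
\end{align*}
Adding the two bounds yields $d_{\text{eff}}^\lambda \le 2k$, which is the claim.

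There is essentially no obstacle here: the only things to be careful about are the degenerate case where the tail of the spectrum vanishes (so $\lambda$ would be $0$), handled by the parenthetical remark above, and making sure the trace-to-eigenvalue-sum identity is justified exactly as in the verification following Definition \ref{leverageDef}. Everything else is a two-line estimate.
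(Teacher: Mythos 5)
Your proof is correct and takes essentially the same route as the paper's own derivation in \eqref{sum_to_k}: the paper likewise writes $d_{\text{eff}}^\lambda = \sum_{i=1}^n \sigma_i(\bv{K})/(\sigma_i(\bv{K})+\lambda)$, bounds the first $k$ terms by $1$ each, and bounds the tail by $\frac{1}{\lambda}\sum_{i=k+1}^n\sigma_i(\bv{K}) = k$ (the paper carries an extra factor of $\epsilon$ because it proves the statement for $\lambda = \frac{\epsilon}{k}\sum_{i=k+1}^n\sigma_i(\bv{K})$, but the splitting argument is identical). Your explicit treatment of the degenerate case where the tail of the spectrum vanishes is a small point the paper leaves implicit.
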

If we choose $k$ such that $s \approx k\log k$ then setting $\lambda$ as above will yield an RLS-Nystr{\"o}m approximation with approximately $s$ sampled columns. The details are given in Algorithm \ref{halvingK}.
%We first note that for any $k$, if we set 
%In order to obtain a rank $k$ projection-cost preserving kernel embedding as in Theorem \ref{pcpTheorem}, we must set 
%$\lambda = \frac{1}{k}\sum_{i=k+1}^n \sigma_i(\bv{K})$, then $d_{\texteff{\lambda}}^\lambda = 2k.$
%Of course, it is not possible to compute $\lambda$ directly -- instead we will approximate it in conjunction with approximating the ridge leverage scores.

\begin{algorithm}[H]
\caption{\algoname{Recursive RLS-Nystr{\"o}m sampling, fixed sample size.}}
{\bf input}: $\bv{x}_1,\ldots,\bv{x}_m \in \mathcal{X}$, kernel function $K: \mathcal{X}\times \mathcal{X} \rightarrow \mathbb{R}$, sample size $s$, failure prob. $\delta \in (0,1/32)$\\
{\bf output}: sampling matrix $\bv{S} \in \mathbb{R}^{m \times s'}$.
% such that for any $\bv{B}$ with $\bv{B}\bv{B}^T = \bv{K}$, for $\lambda = \frac{1}{k}\sum_{i=k+1}^n \sigma_i(\bv{K})$, $$\frac{1}{2}(\bv{B}^T\bv{B} + \epsilon\lambda \bv{I}) \preceq (\bv{B}^T\bv{S}\bv{S}^T \bv{B} + \epsilon\lambda \bv{I}) \preceq \frac{3}{2} (\bv{B}^T\bv{B} + \epsilon\lambda \bv{I}).$$
\begin{algorithmic}[1]
\If{$m \leq s}$
	\State{\Return{$\bv{S} := \bv{I}_{m\times m}$.}}
\EndIf
\State{Let $\mathcal{\bar S}$ be a random subset of $\left\{1,...,m\right\}$, with each $i$ included independently with probability $\frac{1}{2}$.\label{sample_step2}}\
\Comment{Let $ \bv{\bar X} = \{\bv{x}_{i_1},\bv{x}_{i_2},...,\bv{x}_{i_{|\mathcal{\bar S}|}}\}$ for $i_j \in \mathcal{\bar S}$ be the data sample corresponding to $\mathcal{\bar S}$.}\hspace{5em}\phantom{.}\hspace{3em}
\Comment{Let $\bv{\bar S} = [\bv{e}_{i_1},\bv{e}_{i_2},...,\bv{e}_{i_{|\mathcal{\bar S}|}}]$ be the sampling matrix corresponding to $\mathcal{\bar S}$.}
%\State{If $\bv{S}_0$ has $> 16$ columns, apply Algorithm \ref{halvingK} \textbf{recursively} to $\bv{S}_0^T\bv{K}\bv{S}_0$ with $\delta \leftarrow \delta/2$ to compute $\bv{S}_1$. Else set $\bv{S}_1 = \bv{S}_0$.}
\State{$\bv{\tilde S} := \algoname{RecursiveRLS-Nystr{\"o}m}(\bv{\bar X}, K, s, \delta/3)$.} \label{recursive_call2}
\State{$\bv{\hat S} := \bv{\bar S}\cdot  \bv{\tilde S}$.}\label{subset_call2}
%such that for $\lambda' = \frac{1}{k} \sum_{i=k+1}^n \sigma_i(\bv{S}_0^T\bv{K}\bv{S}_0)$, $$\frac{1}{2}(\bv{B}^T\bv{S}_0\bv{S}_0^T\bv{B} + \epsilon\lambda' \bv{I}) \preceq (\bv{B}^T\bv{S}_1\bv{S}_1^T \bv{B} + \epsilon\lambda' \bv{I}) \preceq \frac{3}{2} (\bv{B}^T\bv{S}_0\bv{S}_0^T\bv{B} + \epsilon\lambda' \bv{I}).$$}
\State{Set $k$ to the maximum integer with $c k \log(2k/\delta) \le s$, where $c$ is some fixed constant.}
\State{$\tilde \lambda := \frac{1}{k} \sum_{i=k+1}^n \sigma_i(\bv{\hat S}^T \bv{K}\bv{\hat S})$}\Comment{\textcolor{blue}{Approximate $\lambda$}}
%\STATE{Compute $\tilde{\bv{X}}' =  \tilde{\bv{A}}'^\top  \tilde{\bv{A}}'$}
\State{Set $\tilde l_i^{\lambda} := \frac{5}{\tilde \lambda}\left(\bv{K} - \bv{K}\bv{\hat S}\left(\bv{\hat S}^T\bv{K}\bv{\hat S} + \tilde \lambda\bv{I}\right)^{-1}\bv{\hat S}^T\bv{K} \right)_{i,i}$ for each $i \in \{1,...,m\}$.}\hspace{9em}\phantom{.}\hspace{2em}
\Comment{\textcolor{blue}{By Lemma \ref{michaels_lemma}, equals $\frac{3}{2}(\bv{B}(\bv{B}^T\bv{\hat S}\bv{\hat S}^T \bv{B} +\tilde \lambda \bv{I})^{-1}\bv{B}^T)_{i,i}$. $\bv{K}$ denotes the kernel matrix for datapoints $\{\bv{x}_1,\ldots,\bv{x}_m\}$ and kernel function $K$.}}\label{step5}
\State{Set  $p_i := \min \{ 1, \tilde l_i^\lambda \cdot 16\log(2k/\delta)\}$ for each $i \in \{1,...,,\}$.}
\State{Initially set weighted sampling matrix $\bv{S}$ to be empty. For each $i \in \left\{1,\ldots,m\right\}$, with probability $p_i$, append the column $\frac{1}{\sqrt{p_i}}\bv{e}_i$ onto $\bv{S}$.} 
\State{\Return{$\bv{S}$}}
\end{algorithmic}
\label{halvingK}
\end{algorithm}\begin{theorem}
\label{thm:main_algo_k}
For sufficiently  large universal constant $c$, let $k$ be any positive integer with $s \geq c k \log(2k/\delta)$ and $\lambda = \frac{1}{k} \sum_{i=k+1}^n \sigma_i(\bv{K})$. 
Let $\bv{S} \in \mathbb{R}^{n\times s'}$ be computed by
Algorithm \ref{halvingK}. With probability $1-3\delta$, $s' \leq 2s$, $\bv{S}$ is sampled by overestimates of the $\lambda$-ridge leverage scores of $\bv{K}$, and the Nystr{\"o}m approximation $\bv{\tilde K} = \bv{K S} (\bv{S}^T\bv{K}\bv{S})^+ \bv{S}^T\bv{K}$ satisfies the guarantee of Theorem \ref{additiveErrorThm}. Algorithm \ref{halvingK} uses
$O(n s)$ kernel evaluations and $O(n s^2 )$ runtime. 
\end{theorem}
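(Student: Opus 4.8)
The plan is to replay the inductive proof of Theorem \ref{thm:halvingFixed} essentially verbatim, inducting on the input size $m$, and to layer on two ingredients specific to the self‑tuning variant. For the base case ($m\le s$) the algorithm returns $\bv{I}_{m\times m}$, so the analogue of the spectral bound \eqref{thedesiredbound} holds exactly, $s'=m\le s\le 2s$, and no kernel evaluations occur. For the inductive step, the Chernoff bound \eqref{number_of_samples_bound} puts $|\mathcal{\bar S}|$ between $1$ and $0.56n$ with probability $1-\delta$, so the recursive call in Step \ref{recursive_call2} is covered by the inductive hypothesis: with probability $1-\delta$ (Step \ref{recursive_call2} passes failure probability $\delta/3$, and the recursive guarantee holds with probability $1-3\cdot\delta/3$) the returned $\bv{\tilde S}$ is sampled by overestimates of the $\lambda_{\mathcal{\bar S}}$‑ridge leverage scores of $\bv{K}_{\mathcal{\bar S}}=\bv{\bar S}^T\bv{K}\bv{\bar S}$, where $\lambda_{\mathcal{\bar S}}=\tfrac1k\sum_{i>k}\sigma_i(\bv{K}_{\mathcal{\bar S}})$; it has at most $2s$ columns; and (by matrix Bernstein, exactly as for \eqref{recurs_bound}) it satisfies $\tfrac12(\bv{B}_{\mathcal{\bar S}}^T\bv{B}_{\mathcal{\bar S}}+\lambda_{\mathcal{\bar S}}\bv{I})\preceq \bv{B}_{\mathcal{\bar S}}^T\bv{\tilde S}\bv{\tilde S}^T\bv{B}_{\mathcal{\bar S}}+\lambda_{\mathcal{\bar S}}\bv{I}\preceq \tfrac32(\bv{B}_{\mathcal{\bar S}}^T\bv{B}_{\mathcal{\bar S}}+\lambda_{\mathcal{\bar S}}\bv{I})$.

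The first extra ingredient passes from $\lambda_{\mathcal{\bar S}}$ to the ideal $\lambda=\tfrac1k\sum_{i>k}\sigma_i(\bv{K})$ of the theorem statement. Since $\bv{\bar S}\bv{\bar S}^T\preceq\bv{I}$ we have $\bv{K}_{\mathcal{\bar S}}\preceq\bv{K}$, hence $\sigma_i(\bv{K}_{\mathcal{\bar S}})\le\sigma_i(\bv{K})$ for all $i$ and so $\lambda_{\mathcal{\bar S}}\le\lambda$. Adding $(\lambda-\lambda_{\mathcal{\bar S}})\bv{I}$ to both sides of the spectral bound above and using $0\le\lambda-\lambda_{\mathcal{\bar S}}\le\lambda$ upgrades it to a two‑sided bound at parameter $\lambda$ with the same constants; composing $\bv{\bar S}$ with $\bv{\tilde S}$ exactly as in the passage that produces \eqref{second_level_appox} (using $\bv{M}\preceq\bv{N}\Rightarrow\bv{N}^{-1}\preceq\bv{M}^{-1}$) then shows that $\bv{\hat S}=\bv{\bar S}\bv{\tilde S}$ makes $\bv{b}_i^T(\bv{B}^T\bv{\hat S}\bv{\hat S}^T\bv{B}+\lambda\bv{I})^{-1}\bv{b}_i$ fall within fixed constant factors of the exact uniform‑sampling estimate of Lemma \ref{uniformSampling}, which is in turn at least $l_i^\lambda(\bv{K})$. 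Lemma \ref{michaels_lemma} identifies this quantity with the closed form computed in Step \ref{step5}.

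The second, genuinely harder, ingredient is to control the self‑estimated ridge $\tilde\lambda:=\tfrac1k\sum_{i>k}\sigma_i(\bv{\hat S}^T\bv{K}\bv{\hat S})$, showing $\tilde\lambda\le\lambda$ and $\tilde\lambda=\Omega(\lambda)$ with high probability, with constants tight enough to guarantee both $d_{\text{eff}}^{\tilde\lambda}(\bv{K})=O(k)$ and $\tilde l_i^\lambda=5\,\bv{b}_i^T(\bv{B}^T\bv{\hat S}\bv{\hat S}^T\bv{B}+\tilde\lambda\bv{I})^{-1}\bv{b}_i\ge l_i^\lambda(\bv{K})$. The upper bound is clean: $\bv{\hat S}^T\bv{K}\bv{\hat S}$ and $\bv{B}^T\bv{\hat S}\bv{\hat S}^T\bv{B}$ share a nonzero spectrum, so substituting the suboptimal rank‑$k$ projection onto $\bv{K}$'s top‑$k$ right singular directions into the variational formula for the tail singular value sum gives $k\tilde\lambda=\sum_{i>k}\sigma_i(\bv{\hat S}^T\bv{K}\bv{\hat S})\le\tr(\bv{\hat S}^T(\bv{K}-\bv{K}_k)\bv{\hat S})$, and since $\E[\bv{\hat S}\bv{\hat S}^T]=\tfrac12\bv{I}$ this trace has expectation $\tfrac12\tr(\bv{K}-\bv{K}_k)=\tfrac12 k\lambda$ and concentrates (its summands are bounded by $\bv{K}_{ii}/p_i\le(\sigma_1(\bv{K})+\lambda)/16\log(\cdot)$), so $\tilde\lambda\le\lambda$ whp. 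For the matching lower bound I would apply Fact \ref{ridgeScore2kBound} to $\bv{\hat S}^T\bv{K}\bv{\hat S}$ (giving $d_{\text{eff}}^{\tilde\lambda}(\bv{\hat S}^T\bv{K}\bv{\hat S})\le 2k$, which combined with the spectral comparison $\bv{B}^T\bv{B}\preceq 2\bv{B}^T\bv{\hat S}\bv{\hat S}^T\bv{B}+\lambda\bv{I}$ is what ultimately bounds $\sum_i\tilde l_i^\lambda=5\tr(\bv{B}^T\bv{B}(\bv{B}^T\bv{\hat S}\bv{\hat S}^T\bv{B}+\tilde\lambda\bv{I})^{-1})$), together with a lower bound on $\sum_{i>k}\sigma_i(\bv{\hat S}^T\bv{K}\bv{\hat S})=\tr(\bv{\hat S}^T\bv{K}\bv{\hat S})-\sum_{i\le k}\sigma_i(\bv{\hat S}^T\bv{K}\bv{\hat S})$ obtained from trace concentration and a control of $\sum_{i\le k}\sigma_i(\bv{\hat S}^T\bv{K}\bv{\hat S})$ sharper than the $[\tfrac12,\tfrac32]$ bound alone provides, exploiting that $\bv{\hat S}$ was itself generated by ridge‑leverage‑score sampling and the guarantee \eqref{recurs_bound}. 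One also has to dispose of the degenerate case $\rank(\bv{\hat S}^T\bv{K}\bv{\hat S})\le k$, in which $\tilde\lambda=0$; this forces $\lambda$ itself to be small, in which regime the theorem's conclusion is correspondingly weak. This ridge‑estimation step is the piece that does not reduce to Theorem \ref{thm:halvingFixed}, and I expect it to be the main obstacle; the factor $5$ in Step \ref{step5} and the requirement that $c$ be ``sufficiently large'' are exactly the slack needed to absorb its constants.

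Once $\tilde\lambda=\Theta(\lambda)$ and $\tilde l_i^\lambda\ge l_i^\lambda(\bv{K})$ are established with $\sum_i p_i=O(k\log(k/\delta))\le s$ for $c$ large enough, the remainder follows as in Theorems \ref{additiveErrorThm} and \ref{thm:halvingFixed}: the matrix Bernstein and Chernoff bounds of Lemma \ref{bernstein} give the two‑sided bound \eqref{thedesiredbound} at parameter $\lambda$ (hence, discarding the harmless column weights, $\bv{\tilde K}\preceq\bv{K}\preceq\bv{\tilde K}+\lambda\bv{I}$ by the argument in the proof of Theorem \ref{additiveErrorThm}) and $s'\le 2\sum_i p_i\le 2s$. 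The bounds of $O(ns)$ kernel evaluations and $O(ns^2)$ runtime come from the same geometric‑series unrolling as in Theorem \ref{thm:halvingFixed}: the dominant per‑level work is forming $(\bv{\hat S}^T\bv{K}\bv{\hat S}+\tilde\lambda\bv{I})^{-1}$, doing the matrix products of Step \ref{step5}, and taking the $O(\tilde s^3)$ SVD of $\bv{\hat S}^T\bv{K}\bv{\hat S}$, all $O(n\tilde s^2)$ with $\tilde s=O(k\log(k/\delta))$, and $|\mathcal{\bar S}|\le 0.56n$ shrinks the problem geometrically, so the levels sum to a constant times the top level. A union bound over the $O(1)$ failure events at each recursion level, with the failure probability divided by $3$ at each level as already tracked in Theorem \ref{thm:halvingFixed}, yields total failure probability at most $3\delta$.
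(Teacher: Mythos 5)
Your inductive skeleton, your first ingredient (the monotonicity $\lambda_{\mathcal{\bar S}}\le\lambda$ from $\bv{\bar S}\bv{\bar S}^T\preceq\bv{I}$ and the upgrade of the spectral bound from ridge $\lambda_{\mathcal{\bar S}}$ to ridge $\lambda$), the runtime recursion, and the probability accounting all match the paper's proof of Theorem \ref{thm:halvingK}. The gap is exactly where you predicted it: the treatment of $\tilde\lambda$. First, the claim that $\tilde\lambda=\Omega(\lambda)$ with high probability is false in general, so the route ``show $\tilde\lambda=\Theta(\lambda)$, then bound $d_{\text{eff}}^{\tilde\lambda}(\bv{K})$'' cannot be completed. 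Take $\bv{K}$ diagonal with $k$ entries equal to some huge $M$ and a single additional entry equal to $k\lambda$ (the rest negligible): then $\frac{1}{k}\sum_{i>k}\sigma_i(\bv{K})=\lambda$, but the uniform half-sample misses the one coordinate carrying the entire spectral tail with probability $1/2$, in which case $\sum_{i>k}\sigma_i(\bv{\bar S}^T\bv{K}\bv{\bar S})$, and hence $\tilde\lambda$, is essentially $0$. Your proposed high-probability upper bound $\tilde\lambda\le\lambda$ via concentration of $\tr(\bv{\hat S}^T(\bv{K}-\bv{K}_k)\bv{\hat S})$ is shaky for the same reason --- that trace can be dominated by a few coordinates, so it has the right expectation but need not concentrate multiplicatively --- and, as it turns out, is not needed.

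The paper sidesteps both directions by comparing $\tilde\lambda$ only to $\lambda_{\mathcal{\bar S}}\eqdef\frac{1}{k}\sum_{i>k}\sigma_i(\bv{\bar S}^T\bv{K}\bv{\bar S})$, never to $\lambda$ from below. Since $\bv{\hat S}^T\bv{K}\bv{\hat S}$ and $\bv{B}^T\bv{\hat S}\bv{\hat S}^T\bv{B}$ share their nonzero spectrum, the inductive spectral guarantee \eqref{primeprimeBound} directly yields $\lambda_{\mathcal{\bar S}}\le\tilde\lambda\le 3\lambda_{\mathcal{\bar S}}$, whence $\bv{B}^T\bv{\hat S}\bv{\hat S}^T\bv{B}+\tilde\lambda\bv{I}$ lies within factors $[\frac{1}{2},\frac{9}{2}]$ of $\bv{B}^T\bv{\bar S}\bv{\bar S}^T\bv{B}+\lambda_{\mathcal{\bar S}}\bv{I}$; the oversampling factor $5$ in Step \ref{step5} then makes $\tilde l_i^\lambda$ an overestimate of the $\lambda_{\mathcal{\bar S}}$-based score of Lemma \ref{uniformSampling}, hence of $l_i^{\lambda_{\mathcal{\bar S}}}(\bv{K})\ge l_i^{\lambda}(\bv{K})$. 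For the sample-size bound --- the piece your plan leaves open --- the paper does not bound $d_{\text{eff}}^{\tilde\lambda}(\bv{K})$ at all. Instead it reruns the reweighting argument of Lemma \ref{uniformSampling} in the fixed-sample-size setting (Lemma 13 of \cite{cohen2015ridge}): the reweighting $\bv{W}$ now shrinks the ridge $\frac{1}{k}\sum_{i>k}\sigma_i(\bv{W}\bv{K}\bv{W})$ as it shrinks entries, one checks that the ridge leverage scores nonetheless decrease monotonically, and Fact \ref{ridgeScore2kBound} applied to $\bv{W}\bv{K}\bv{W}$ at \emph{its own} ridge caps the sum of sampling probabilities at $O(k\log(k/\delta))\le s/5$ for $c$ large enough. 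So the step you flagged as the main obstacle is genuinely the crux, and the fix is to abandon the comparison of $\tilde\lambda$ with $\lambda$ rather than to sharpen it.
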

For the $\lambda$ given in Theorem \ref{thm:main_algo_k}, we have $d_{\text{eff}}^\lambda = \Theta(k)$. Hence, since we set $s =  \Theta(k\log k/\delta)$, additive error $\lambda$ is essentially the smallest we can obtain using an $s$ sample Nystr{\"o}m approximation.
The proof of Theorem \ref{thm:main_algo_k} is similar to that of Theorem \ref{thm:main_algo_theorem}. We defer it to Appendix \ref{sec:additional}.
%Note that the approximation factor given in Theorem \ref{thm:main_algo_k} matches what would be given if we set $\bv{\tilde K}$ to the best rank-$2k$ approximation to $\bv{K}$. In this sense, the Nystr{\"o}m approximation yields a nearly optimal low-rank approximation to $\bv{K}$ given rank budget $\bv{S}$.

\subsection{Performance of Recursive RLS-Nystr{\"o}m for kernel approximation}\label{sec:kernelPerf}
We evaluate Algorithm \ref{halvingK} on the datasets listed in Table \ref{tab:datasets}, comparing against the classic Nystr{\"o}m method with uniform sampling \cite{williams2001using} and the random Fourier features method \cite{rahimi2007random}. Implementations were in MATLAB and run on a 2.6 GHz Intel Core i7 with $16$GB of memory.
\begin{table}[h]
\small
\centering 
\begin{threeparttable}[b]
\begin{tabular}{||c||c|c|c||}
\hhline{|t:=:t:===:t|}
Dataset  & \specialcell{\# of Data Points\\ $n$} & \specialcell{\# of Features \\ $d$} & Link
\\\hhline{|:=::===:|}
\texttt{YearPredictionMSD} & 515345 & 90 & \tiny{\url{https://archive.ics.uci.edu/ml/datasets/YearPredictionMSD}}
\\\hhline{||-||-|-|-||}
\texttt{Covertype} & 581012 & 54 & \tiny{\url{https://archive.ics.uci.edu/ml/datasets/Covertype}}
\\\hhline{||-||-|-|-||}
\texttt{Cod-RNA} & 331152 & 8 & \tiny{\url{https://www.csie.ntu.edu.tw/~cjlin/libsvmtools/datasets/}}
\\\hhline{||-||-|-|-||}
\texttt{Adult} & 48842 & 110 & \tiny{\url{https://archive.ics.uci.edu/ml/datasets/Adult}}
\\\hhline{|b:=:b:===:b|}
\end{tabular}
\caption{Datasets downloaded from UCI ML Repository \cite{Lichman:2013}, except \texttt{Cod-RNA} \cite{uzilov2006detection}.} \vspace{.25em}
\label{tab:datasets}
\end{threeparttable}
\end{table}

For each dataset, we split categorical features into binary indicatory features and mean center and normalize all features to have variance 1. We use a Gaussian kernel for all tests, with the width parameter $\sigma$ selected via cross validation on regression and classification tasks. To compute $\|\bv{K} - \bv{\tilde K}\|_2$, we only process a random subset of 20k data points since otherwise multiplying by the full kernel matrix $\bv{K}$ to compute $\|\bv{K} - \bv{\tilde K}\|_2$ is prohibitively expensive. Experiments on the full kernel matrices are discussed in Section \ref{subsec:learning}.
\begin{figure}[h]
\centering
\begin{subfigure}{0.24\textwidth}
\centering
\includegraphics[width=\textwidth]{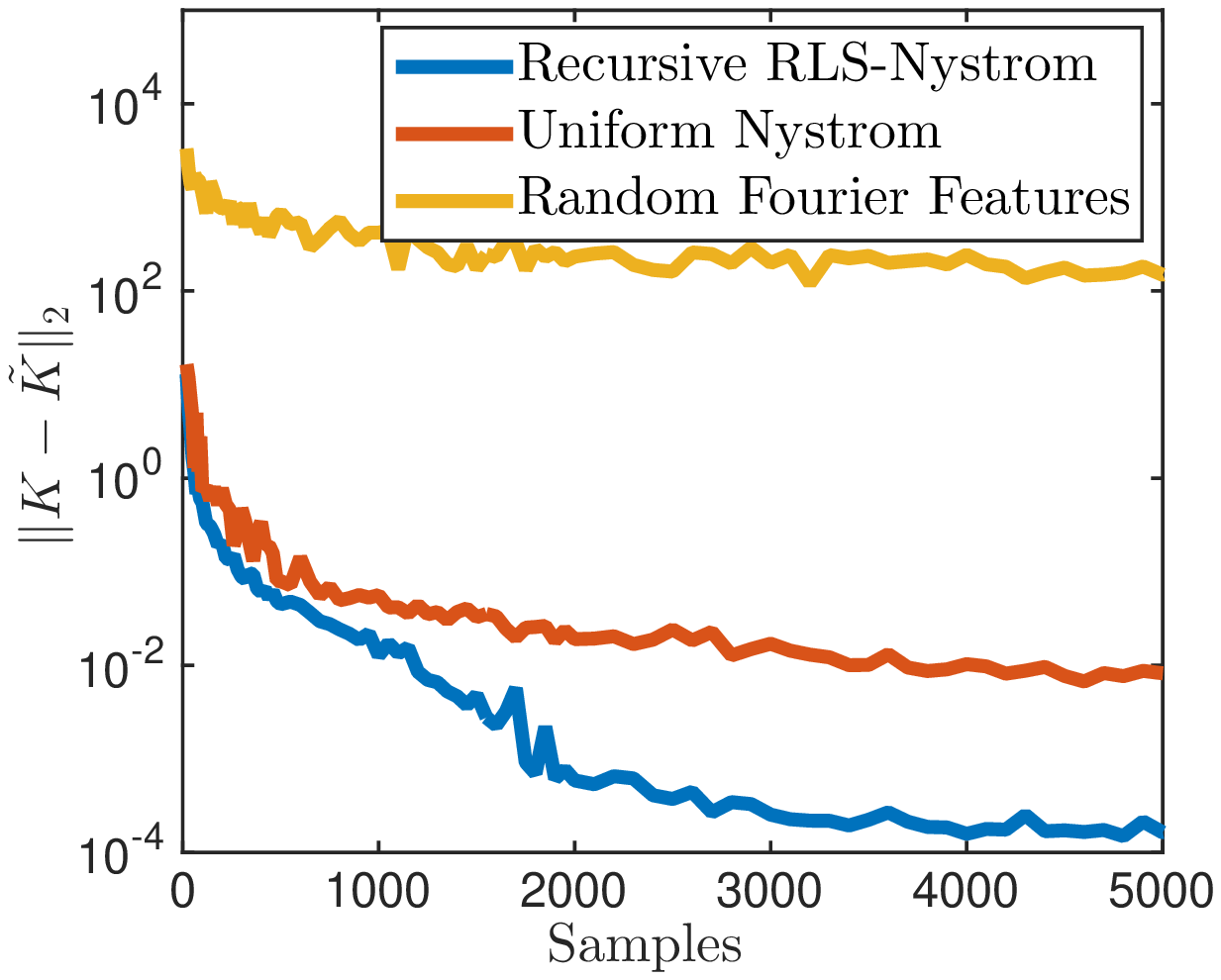} 
\caption{\texttt{Adult} }
%\label{fig1}
\end{subfigure}
\begin{subfigure}{0.24\textwidth}
\centering
\includegraphics[width=\textwidth]{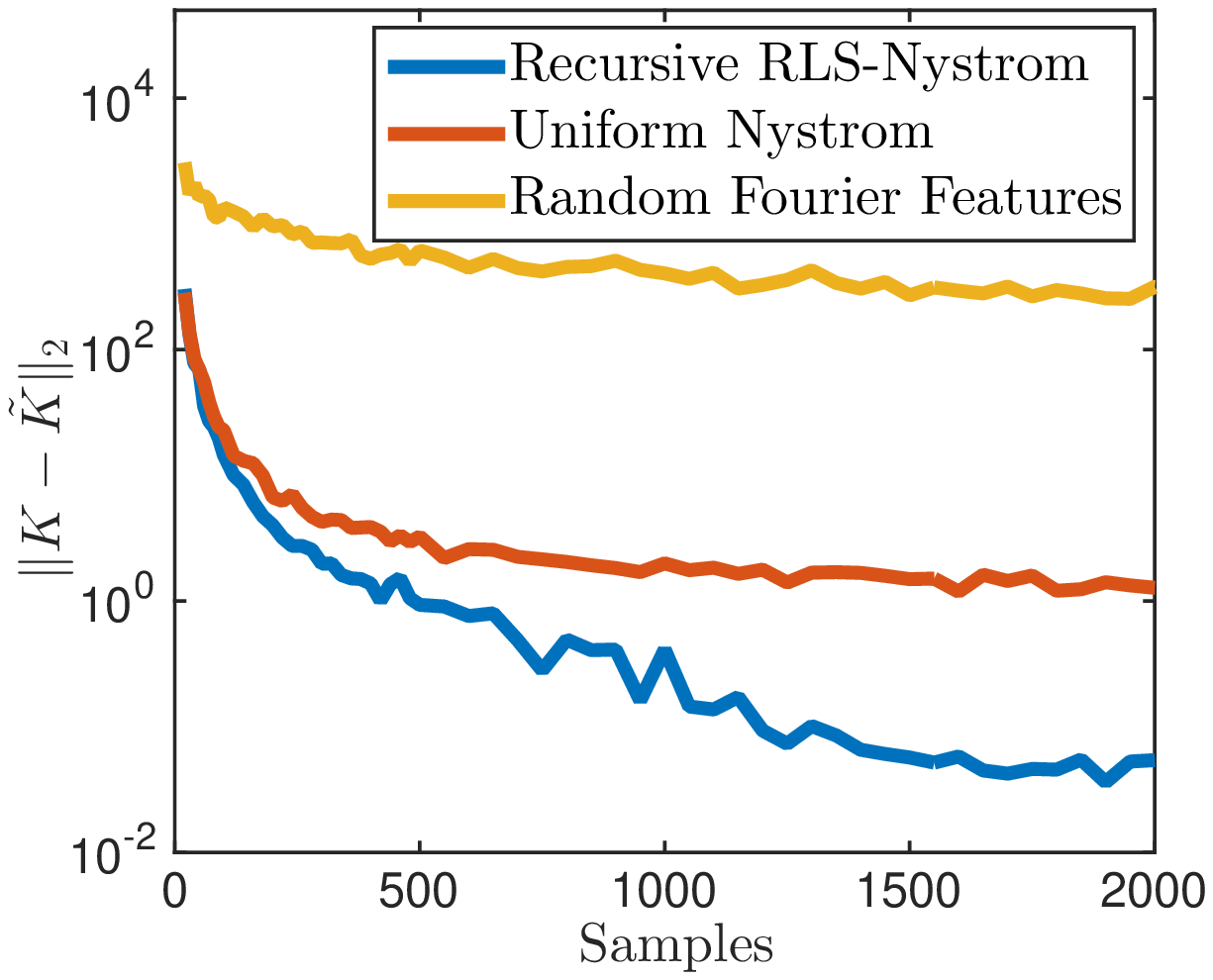} 
\caption{\texttt{Covertype} }
%\label{fig1}
\end{subfigure}
\begin{subfigure}{0.24\textwidth}
\centering
\includegraphics[width=\textwidth]{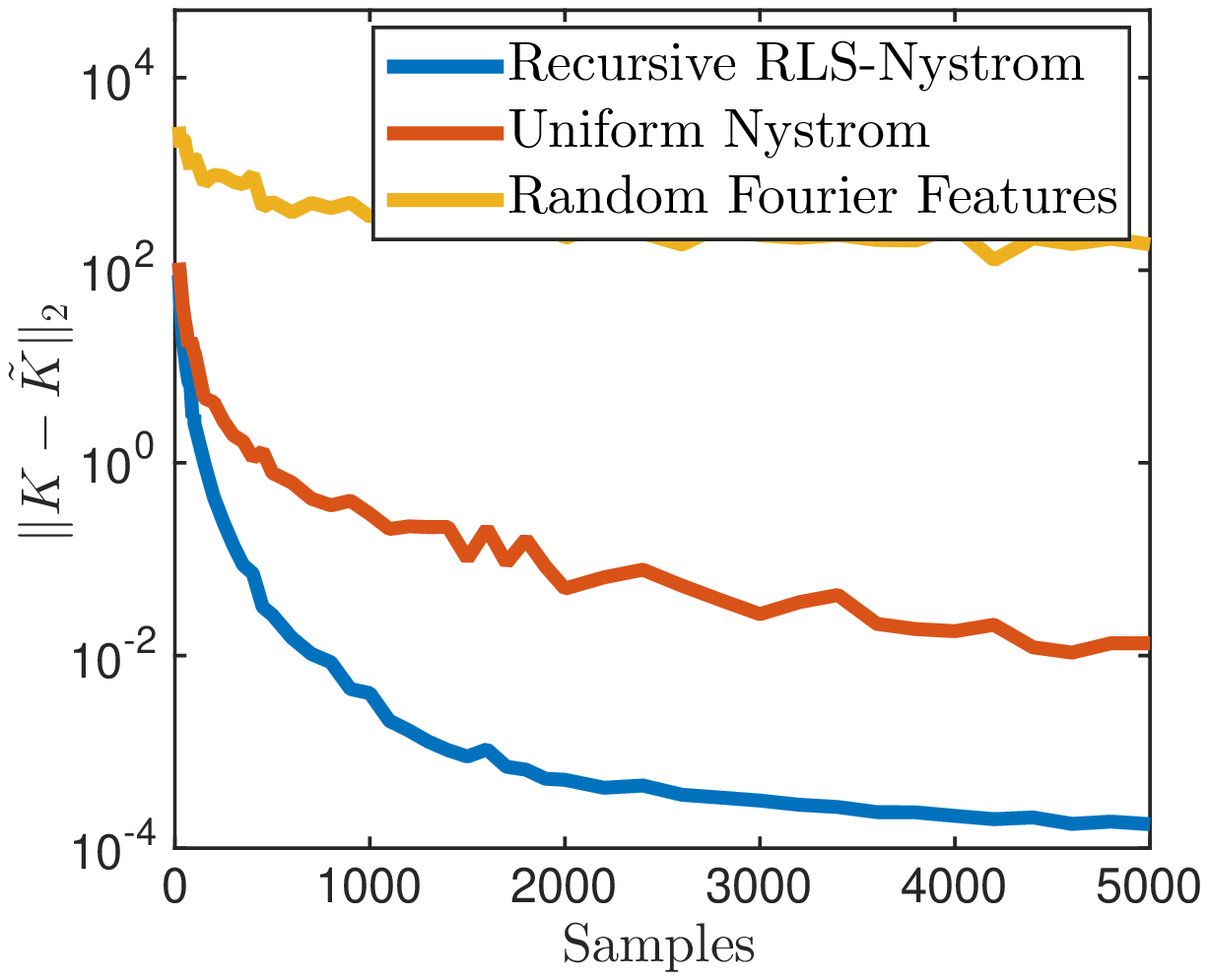} 
\caption{\texttt{Cod-RNA} }
%\label{fig1}w
\end{subfigure}
\begin{subfigure}{0.24\textwidth}
\centering
\includegraphics[width=\textwidth]{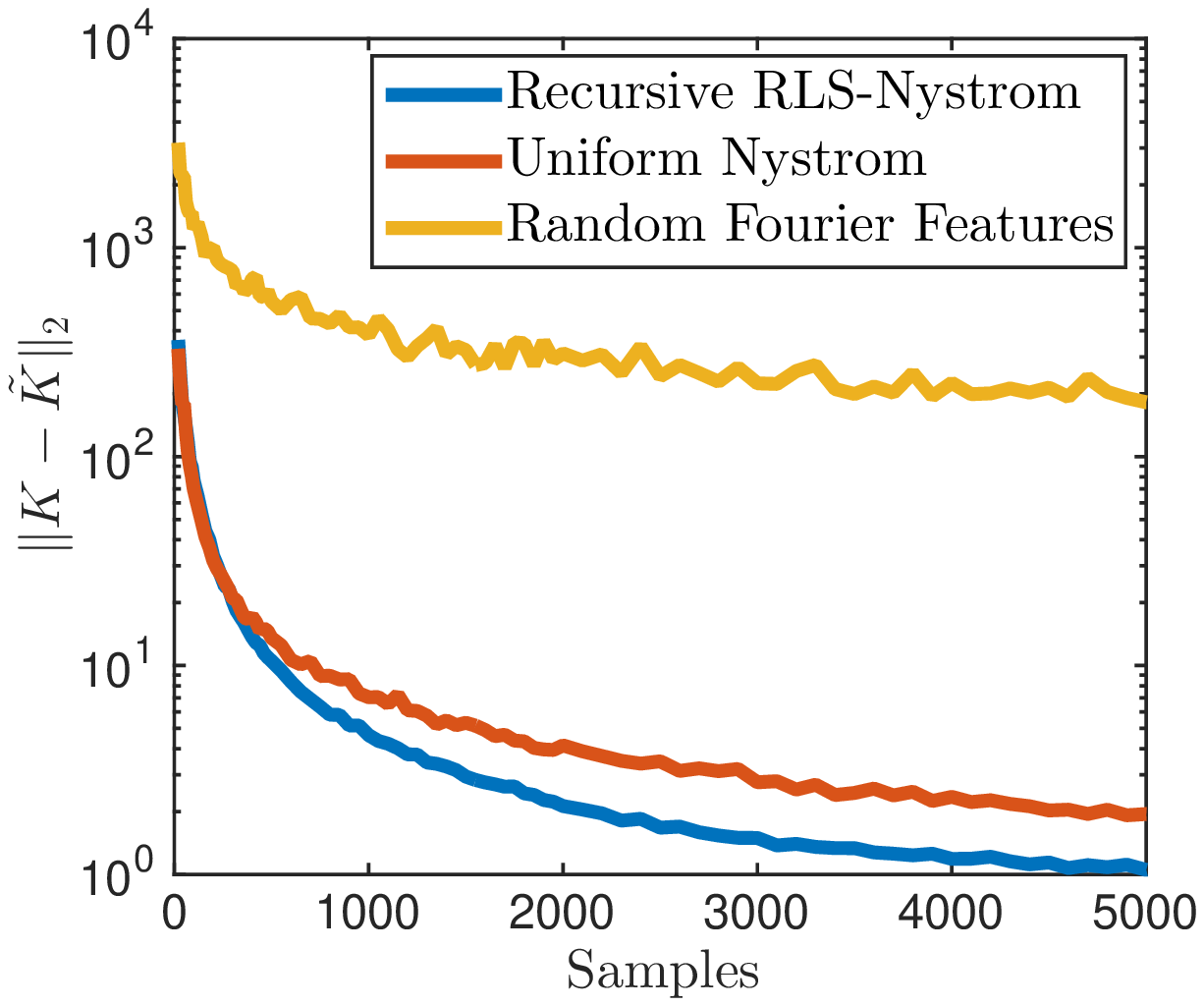} 
\caption{\texttt{YearPredictionMSD} }
%\label{fig1}
\end{subfigure}
\caption{For a given number of samples, Recursive RLS-Nystr{\"o}m yields approximations with lower error, measured by $\|\bv{K} - \bv{\tilde K}\|_2$. Error is plotted on a logarithmic scale, averaged over 10 trials.}
\label{fig:samples}
\end{figure}

Figure \ref{fig:samples} confirms that Recursive RLS-Nystr{\"o}m consistently obtains better kernel approximation error than the other methods. The advantage of Nystr{\"o}m over random Fourier features is substantial -- this is unsurprising as the Nystr{\"o}m methods are data dependent and based on data projection, as opposed to pointwise approximation of $\bv{K}$. 
%Although as we will see in Section \ref{subsec:learning} and as has been observed in other papers, random Fourier features still perform remarkably well in learning tasks despite their weak approximation of $\bv{K}$.
Even between the Nystr{\"o}m methods there is a substantial difference in kernel approximation, especially for large sample sizes. 

As we can see in Figure \ref{fig:runtimes}, with the exception of \texttt{YearPredictionMSD}, the better quality of the landmarks obtained with Recursive RLS-Nystr{\" o}m translates into runtime improvements. While the cost \emph{per sample} is higher for our method at $O(nd + ns)$ time versus $O(nd + s^2)$ for uniform Nystr{\"o}m and $O(nd)$ for random Fourier features, since RLS-Nystr{\"o}m requires fewer samples it more quickly obtains $\bv{\tilde K}$ with a given accuracy. $\bv{\tilde K}$ will also have lower rank, which can accelerate processing in downstream applications. For example, to achieve $\|\bv{K} - \bv{\tilde K}\|_2 \leq 1$ for the \texttt{Covertype} dataset, Recursive RLS-Nystr{\"o}m requires 650 samples in comparison to 3800 for uniform Nystr{\"o}m.

\begin{figure}[h!]
\centering
\begin{subfigure}{0.24\textwidth}
\centering
\includegraphics[width=\textwidth]{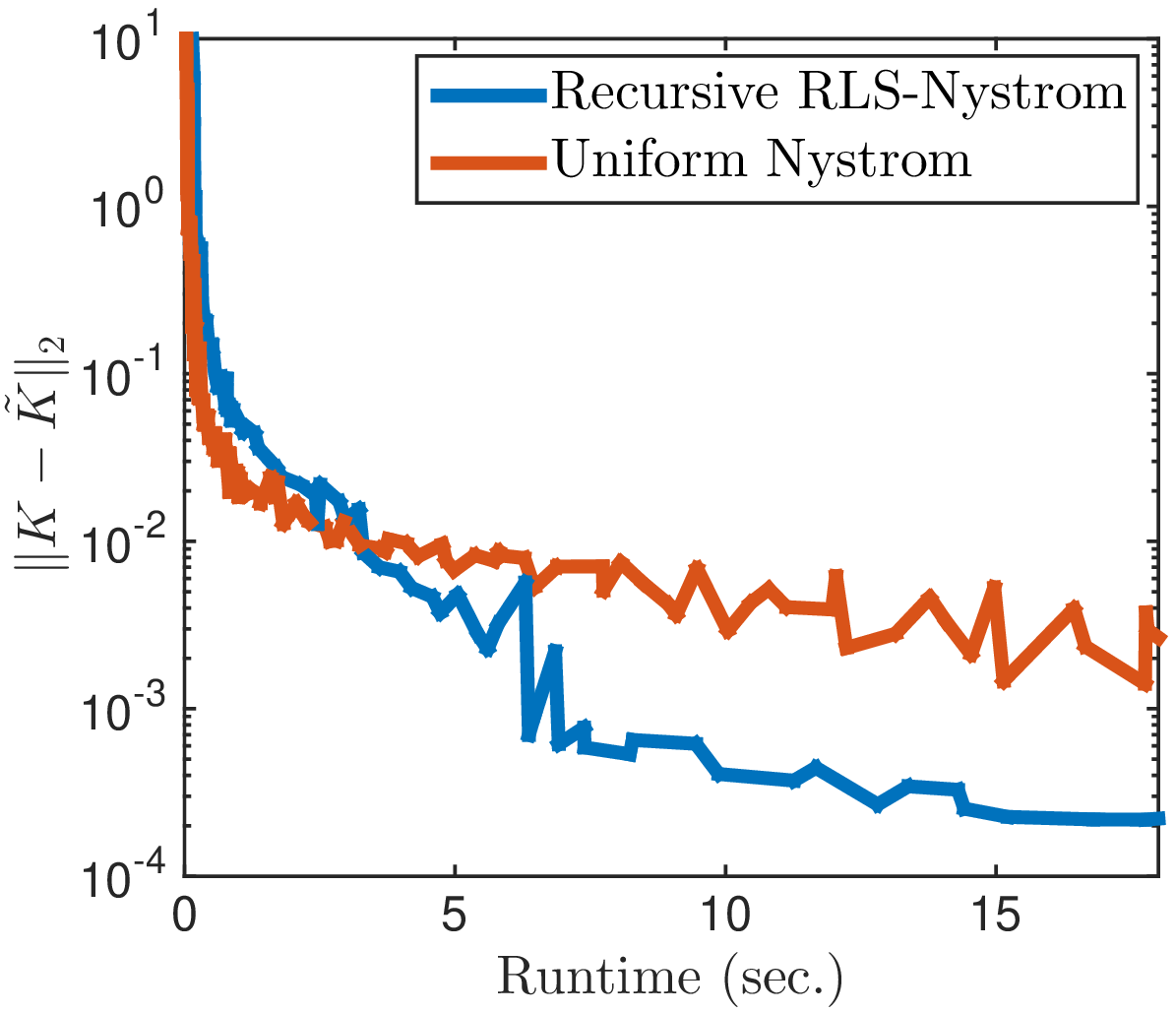} 
\caption{\texttt{Adult} }
%\label{fig1}
\end{subfigure}
\begin{subfigure}{0.24\textwidth}
\centering
\includegraphics[width=\textwidth]{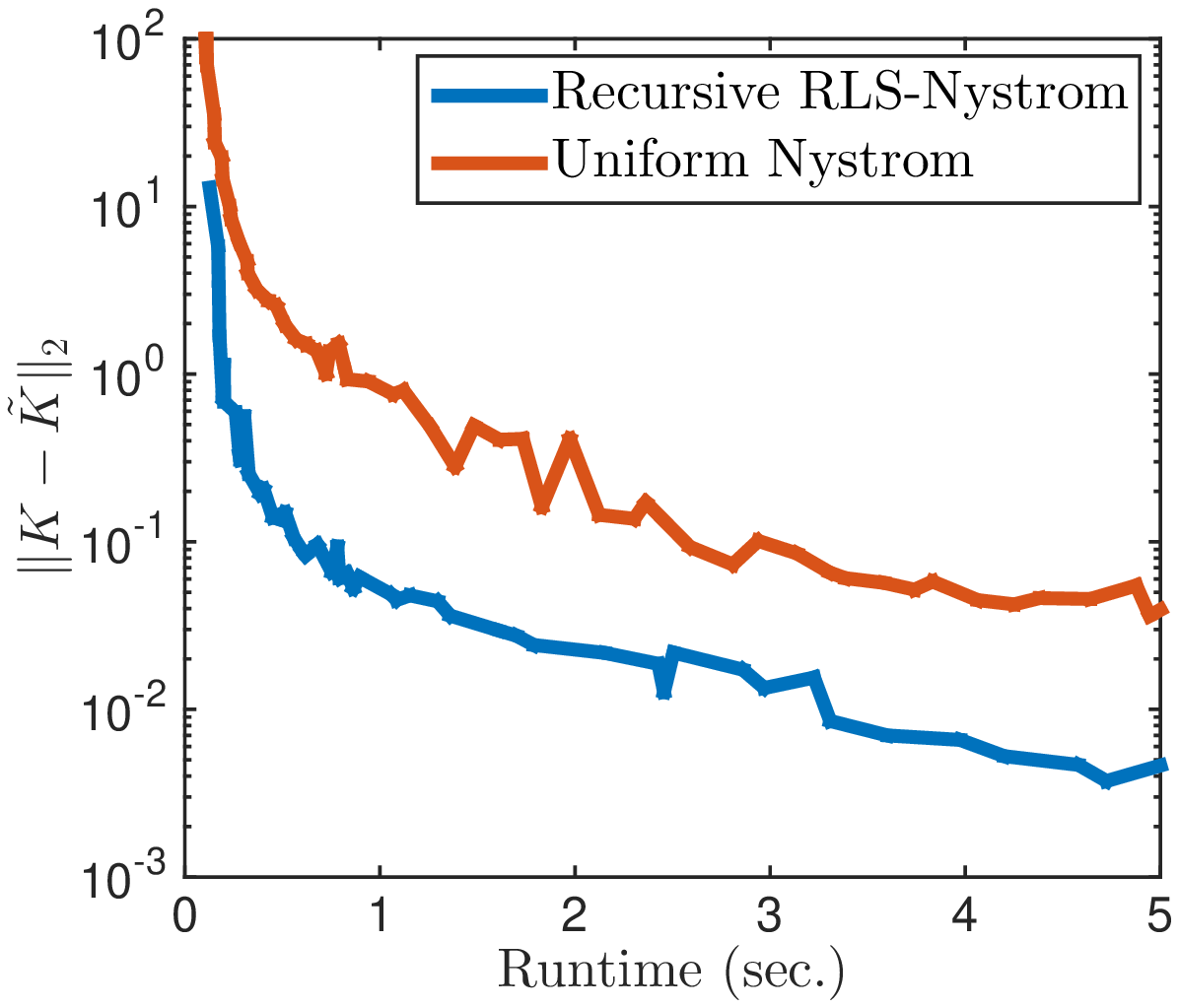} 
\caption{\texttt{Covertype} }
%\label{fig1}
\end{subfigure}
\begin{subfigure}{0.24\textwidth}
\centering
\includegraphics[width=\textwidth]{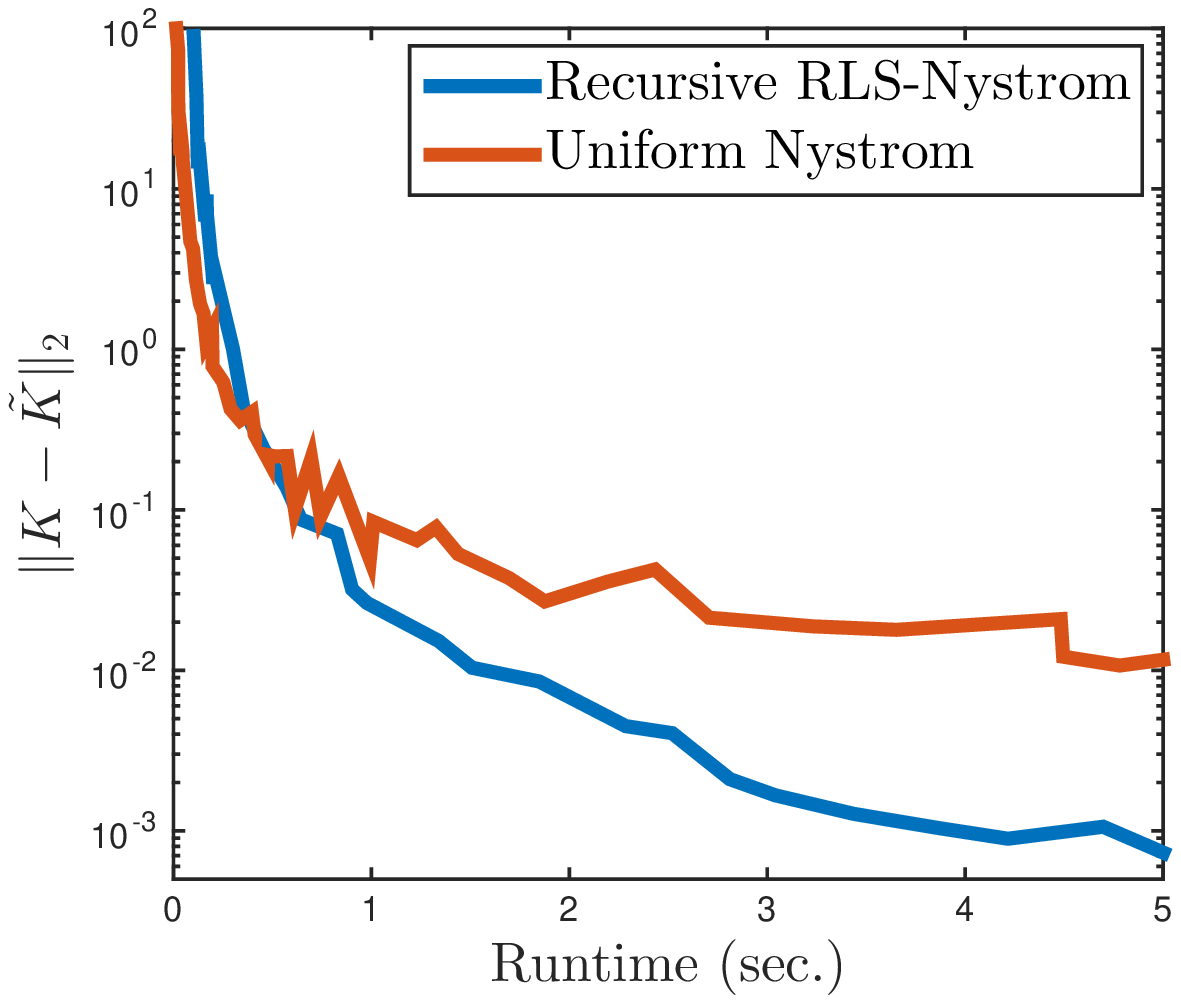} 
\caption{\texttt{Cod-RNA} }
\end{subfigure}
\begin{subfigure}{0.24\textwidth}
\centering
\includegraphics[width=\textwidth]{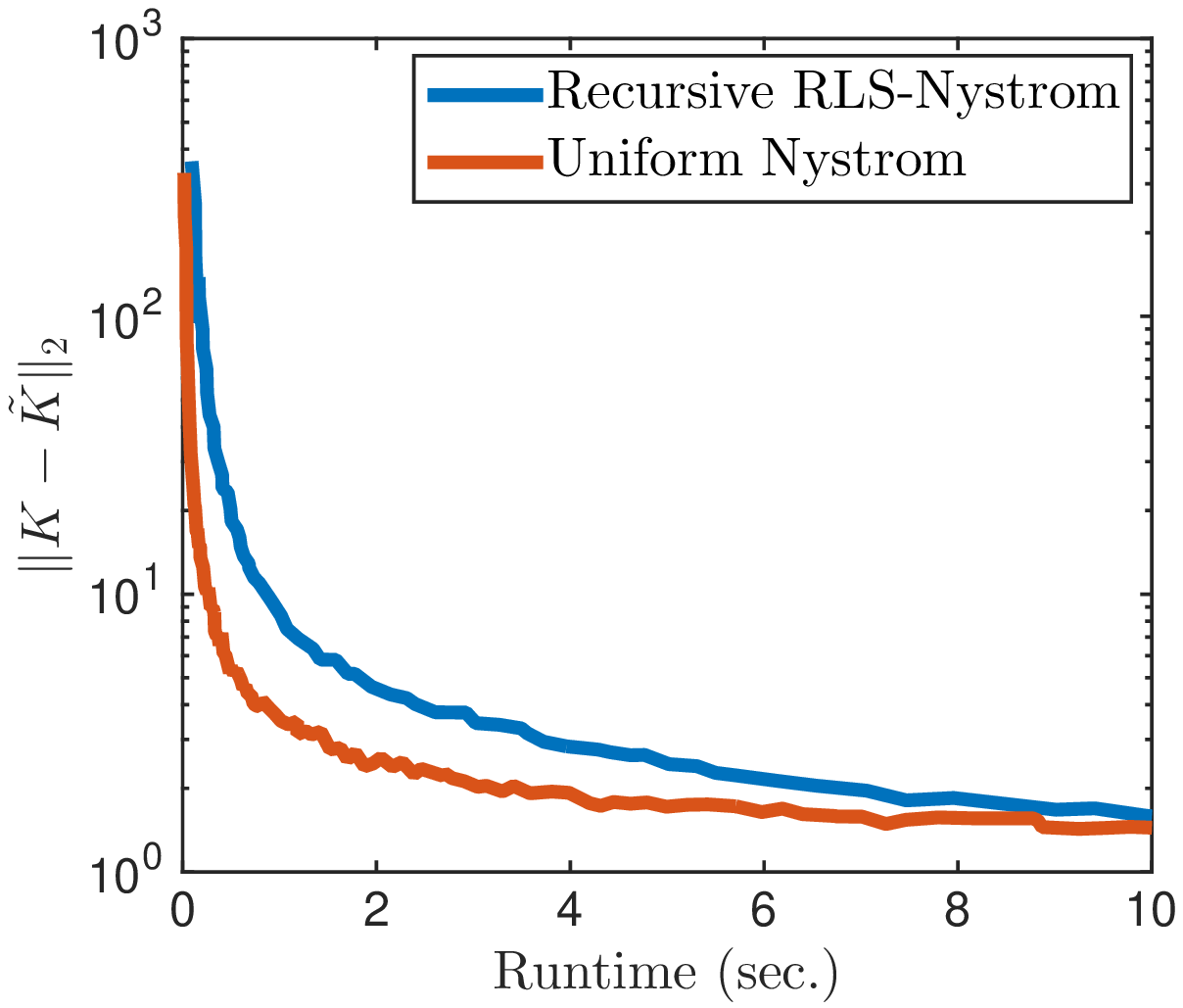} 
\caption{\texttt{YearPredictionMSD} }
\end{subfigure}
\caption{Especially for small error, Recursive RLS-Nystr{\"o}m typically obtains a fixed level of approximation faster than uniform sampling. It only underperformed uniform sampling for the \texttt{YearPredictionMSD} dataset. Results for random Fourier features are excluded from this plot: while the method is faster than Nystr{\"o}m, it never obtained high enough accuracy to be directly comparable. Error is plotted on a log scale, with results averaged over 10 trials.}
\label{fig:runtimes}
\end{figure}

\vspace{-1.25em}
\subsubsection{Accelerated recursive method}
While Recursive RLS-Nystr{\"o}m typically outperforms classic Nystr{\"o}m,
%There is one disadvantage to the tradeoff between sample quality and sample cost offered by recursive RLS-Nystr{\"o}m. It's not hard to
 on datasets with relatively uniform ridge leverage scores, such as \texttt{YearPredictionMSD}, it only narrowly beats uniform sampling in terms accuracy. As a result it incurs a higher runtime cost since it is slower per sample. %This is exactly what's happ on with the \texttt{YearPredictionMSD} dataset. In Figure \ref{fig:samples}, we see that recursive RLS-Nystr{\"o}m requires fewer samples, but the advantage is too small to translate into a runtime improvement in Figure \ref{fig:runtimes} -- uniform sampling remains faster.

To combat this issue we implement a simple heuristic modification of our algorithm. We note that  the final cost of computing the Nystr{\"o}m factors $\bv{K S}$ and $(\bv{S}^T\bv{K}\bv{S})^+$ is $O(ns + s^3)$ for both methods. Recursive RLS-Nystr{\"o}m is only slower because computing leverage scores at intermediate levels of recursion takes $O(ns^2)$ time (Step \ref{step5}, Algorithm \ref{halvingK}) . This cost can be improved by simply adjusting the regularization $\lambda$ to restrict the sample size on each recursive call to be $< s$. Specifically, we can balance runtimes by taking $\approx \sqrt{(ns + s^3)/n}$ samples on lower levels. 

Doing so improves our runtime, bringing the per sample cost down to approximately that of random Fourier features and uniform Nystr{\"o}m (Figure \ref{matchingPerSample}) while nearly maintaining the same approximation quality. For datasets such as \texttt{Covertype} in which Recursive RLS-Nystr{\"o}m performs significantly better than uniform sampling, so does the accelerated method (see Figure \ref{matchingError}). However, the performance of the accelerated method does not degrade when leverage scores are relatively uniform -- it still offers the best runtime to approximation quality tradeoff (Figure \ref{tradeoff}).

We note that further runtime improvements may be possible. Subsequent work extends fast ridge leverage score methods to distributed and streaming environments \cite{calandriello:hal-01482760}. Empirical evaluation of these techniques could lead to even more scalable, high accuracy Nystr{\"o}m methods. 
%
%Finally, we note that subsequent work extends fast ridge leverage score methods to distributed and streaming environments while maintaining the theoretical guarantees of RLS-Nystr{\"o}m \cite{calandriello:hal-01482760}. Empirical evaluation of this work could lead to even more scalable, high accuracy Nystr{\"o}m methods. 

\begin{figure}[h]
\centering
\begin{subfigure}{0.32\textwidth}
\centering
\includegraphics[width=\textwidth]{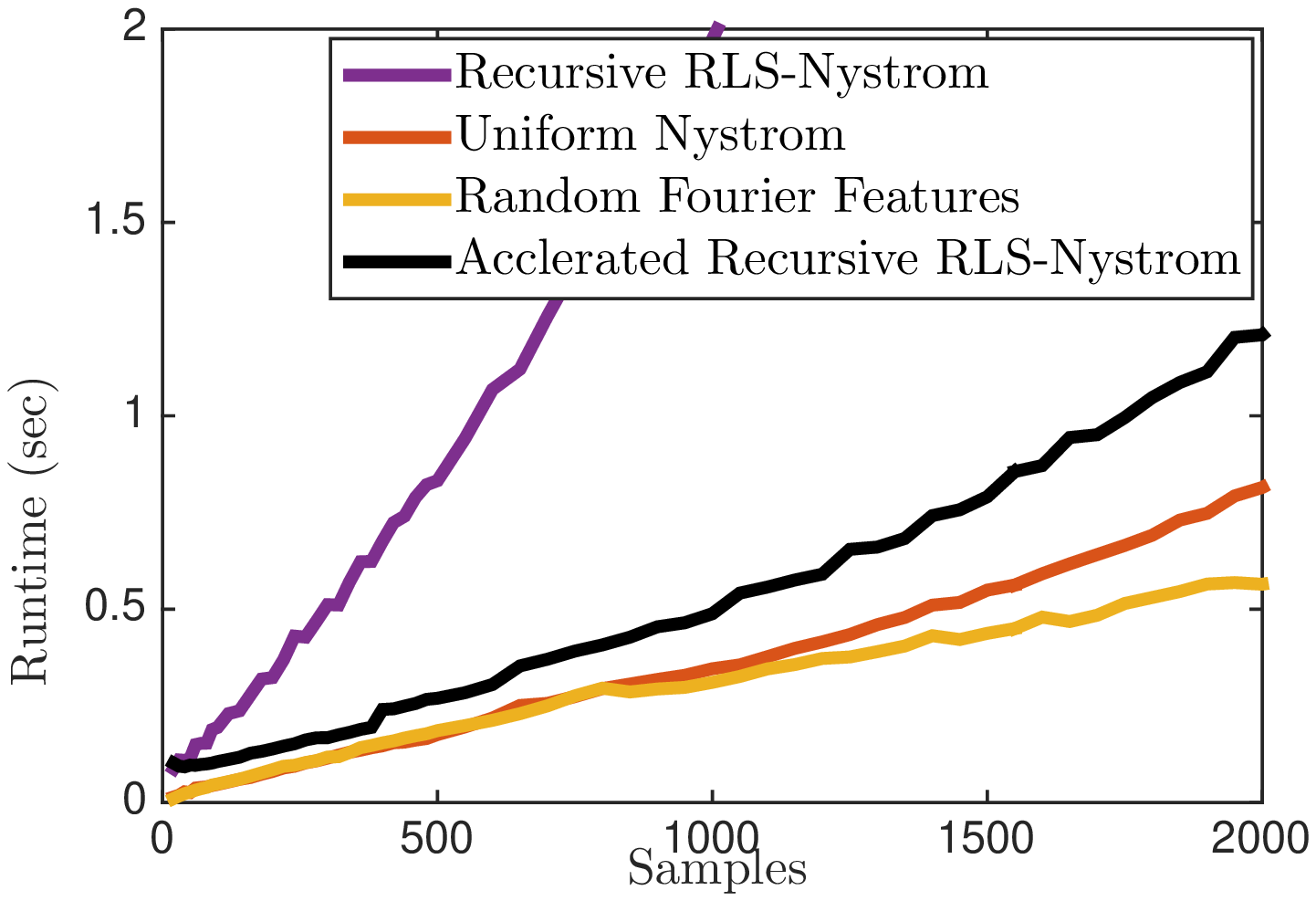} 
\caption{Runtimes for \texttt{Covertype}.\vspace{1em}}
\label{matchingPerSample}
\end{subfigure}
\begin{subfigure}{0.32\textwidth}
\centering
\includegraphics[width=\textwidth]{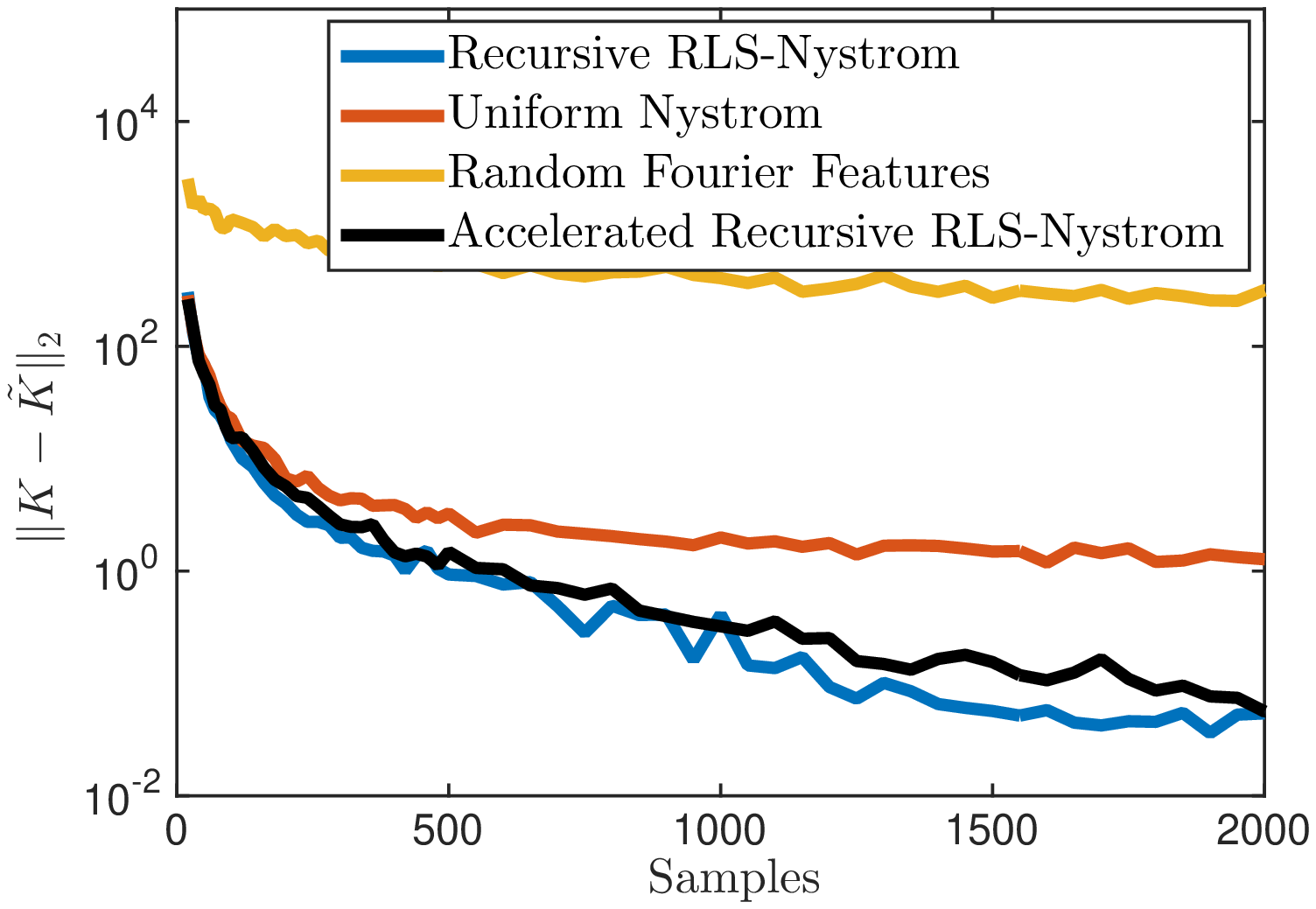} 
\caption{Errors for \texttt{Covertype}.\vspace{1em}}
\label{matchingError}
\end{subfigure}
\begin{subfigure}{0.32\textwidth}
\centering
\includegraphics[width=\textwidth]{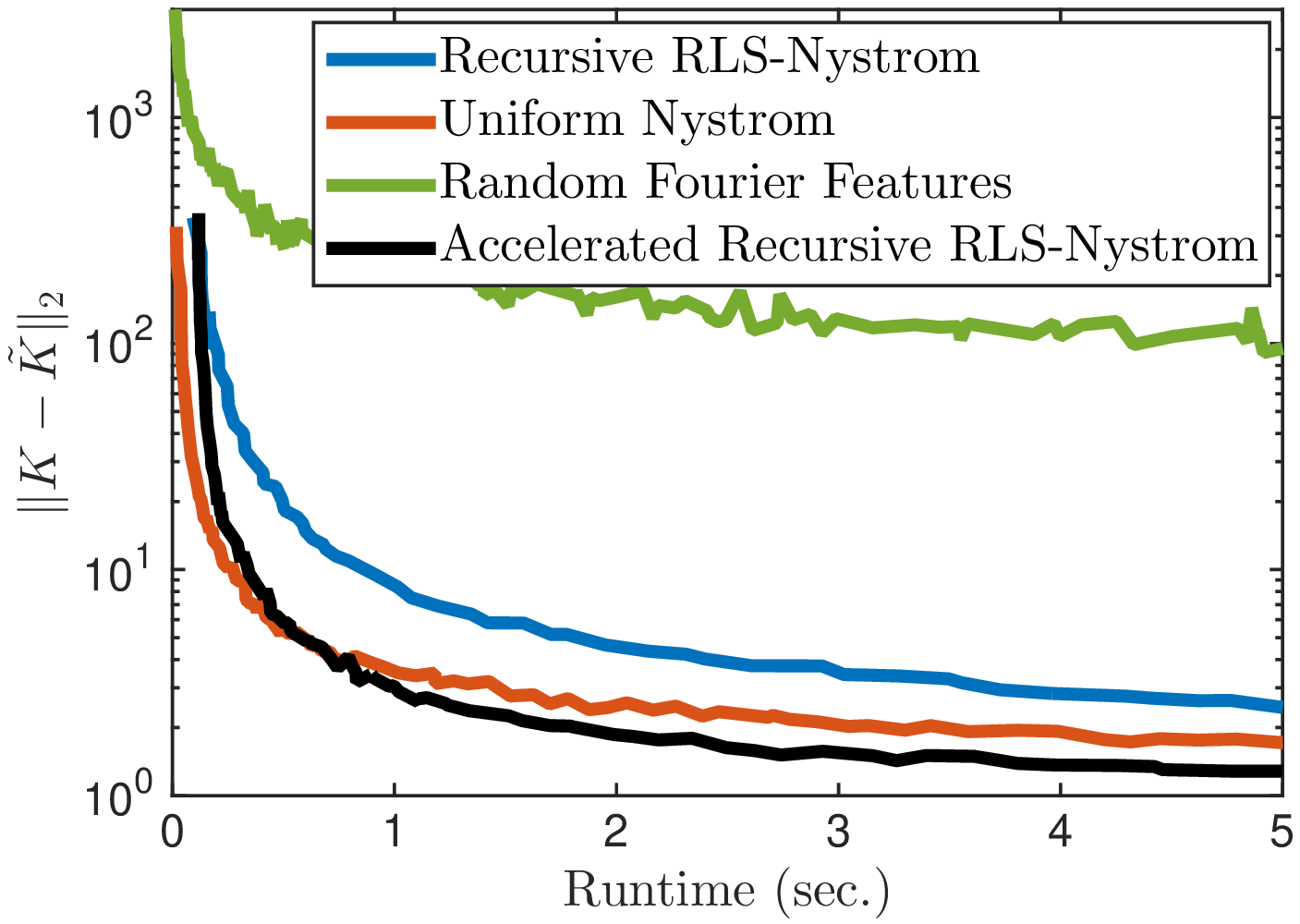} 
\caption{\centering Runtime/error tradeoff for \texttt{YearPredictionMSD}.}
\label{tradeoff}
\end{subfigure}
\caption{Our accelerated Recursive RLS-Nystr{\"o}m, which undersamples at intermediate recursive calls, nearly matches the \emph{per sample runtime} of random Fourier features and uniform Nystr{\"o}m while still providing approximation nearly as good as the standard Recursive RLS-Nystr{\"o}m. For datasets like  \texttt{YearPredictionMSD} with relatively uniform kernel leverage scores, the accelerated version offers the best runtime vs. approximation tradeoff. All results are averaged over 10 trials.}
\label{fig:runtimesFast}
\end{figure}

\subsection{Performance of Recursive RLS-Nystr{\"o}m for learning tasks}
\label{subsec:learning}

\begin{figure}[h!]
\centering
\begin{subfigure}{0.33\textwidth}
\centering
\includegraphics[width=\textwidth]{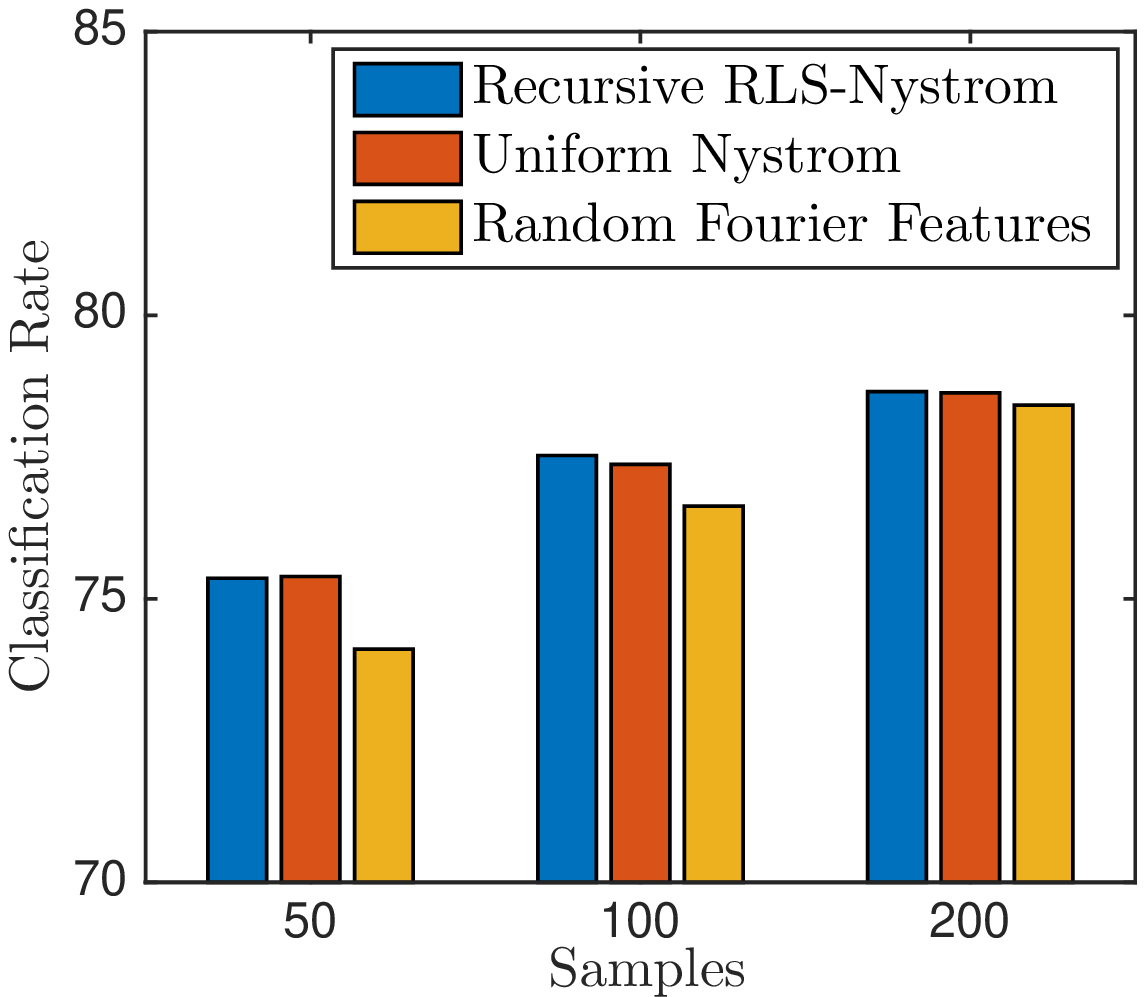} 
\caption{\texttt{Covertype} }
%\label{fig1}
\end{subfigure}
\begin{subfigure}{0.33\textwidth}
\centering
\includegraphics[width=\textwidth]{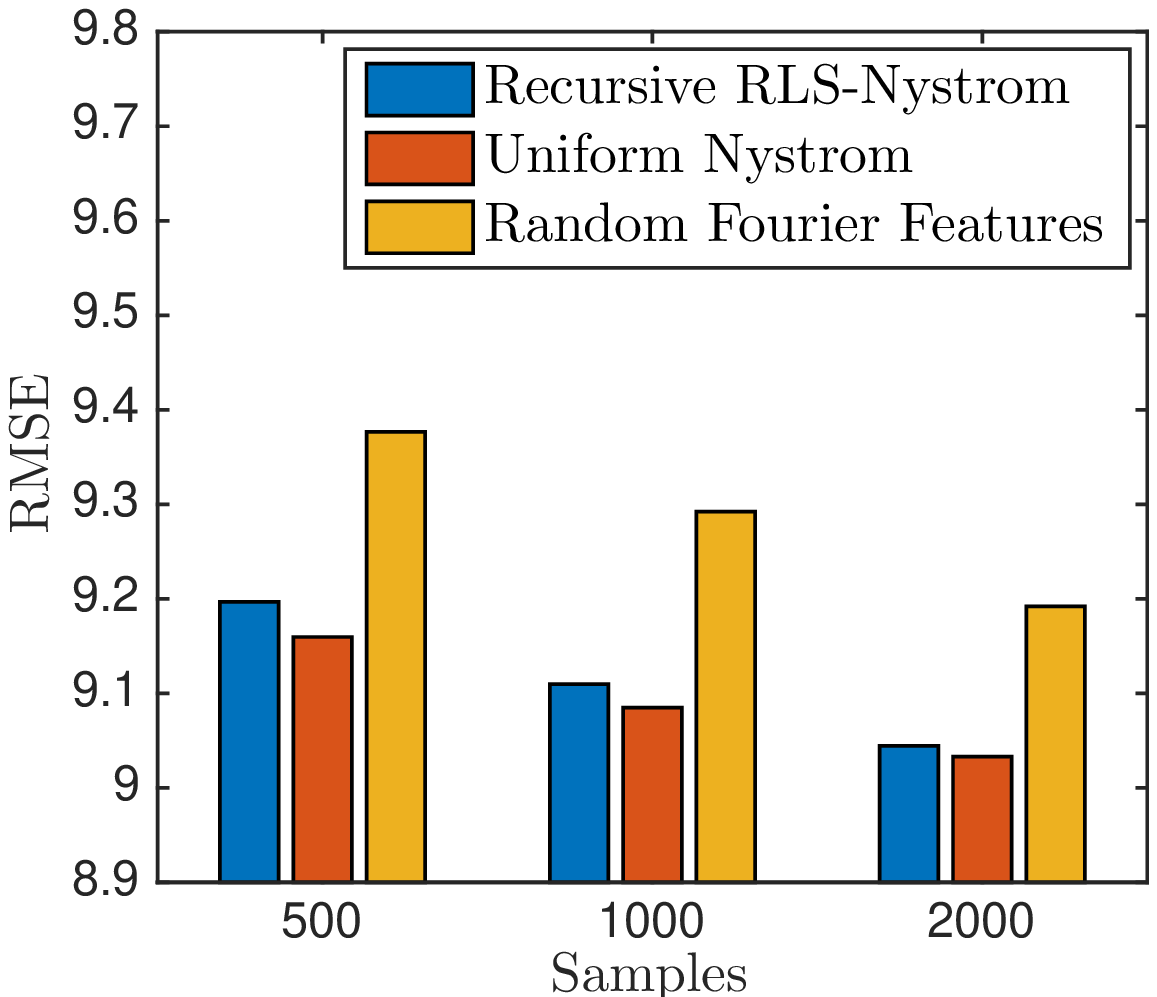} 
\caption{\texttt{YearPredictionMSD} }
%\label{fig1}
\end{subfigure}
\caption{Performance of kernel approximation methods for classification and clustering. For \texttt{Covertype}, classification error is measured in separating Class 2 from the remaining classes. For \texttt{YearPredictionMSD}, RMSE is for the unnormalized output. Regularization and kernel parameters are obtained via cross validation on training data. Test results are averaged over 10 trials with a fixed test set, as all three algorithms are randomized.}
\label{fig:stats}
\end{figure}

We conclude by verifying the usefulness of our kernel approximations in downstream learning tasks. We focus on \texttt{Covertype} and \texttt{YearPredictionMSD}, which each have approximately $n = 500,000$ data points. While full kernel methods do not scale in this regime, Recursive RLS-Nystr{\"o}m does since its runtime depends linearly on $n$. For example, on \texttt{YearPredictionMSD} the method requires $307$ sec. (averaged over $5$ trials) to build a $2,000$ landmark Nystr{\"o}m approximation for $463,716$ training points. Ridge regression using the approximate kernel then requires $208$ sec. for a total of $515$ sec. In comparison, the fastest method, random Fourier features, required $43$ sec. to build a rank $2,000$ kernel approximation and $222$ sec. for regression, for a total time of $265$ sec.

For \texttt{Covertype} we performed classification using the LIBLINEAR support vector machine library. For all sample sizes the SVM dominated runtime cost, so Recursive RLS-Nystr{\"o}m was only marginally slower than uniform Nystr{\"o}m and random Fourier features for a fixed sample size.

In terms of classification rate for \texttt{Covertype} and RMSE error for \texttt{YearPredictionMSD}, as can be seen in Figure \ref{fig:stats}, both Nystr{\"o}m methods outperform random features. However, we do not see much difference between the two Nystr{\"o}m methods. We leave open understanding why the significantly better kernel approximations discussed in Section \ref{sec:kernelPerf} do not necessarily translate to much better learning performance, or whether they would make a larger difference for other problems.
%
%\subsection{Future experimental considerations}
%In addition to developing a better empirical understanding of how performance in kernel approximation translates to downstream learning, we believe that there a number of interesting future avenues to explore in terms of implementing fast kernel methods with strong approximation guarantees. For example, very recent work develops fast ridge leverage score methods for distributed and low memory environments \cite{Calandriello:2016:ANM:3020948.3020956,calandriello:hal-01482760}. Implementations of these methods could provide additional acceleration for very large kernel problems. Furthermore, research on fast sampling of determinantal point processes has the potential to provide an alternative approach to provably accurate and efficient Nystr{\"o}m  approximation \cite{DBLP:conf/colt/AnariGR16, chengtaodpp}.

\section*{Acknowledgements}
We would like to thank Michael Mahoney for bringing the potential of ridge leverage scores to our attention and suggesting their possible approximation via iterative sampling schemes. We would also like to thank Michael Cohen for pointing out (and fixing) an error in our original manuscript and generally for his close collaboration in our work on leverage score sampling algorithms. Finally, thanks to Haim Avron for pointing our an error in our original analysis. 

\bibliographystyle{alpha}
\bibliography{linear_time_kernels}

\appendix

\section{Ridge leverage score sampling bounds}
Here we give the primary matrix concentration results used to bound the performance of ridge leverage score sampling in Theorems \ref{additiveErrorThm}, \ref{thm:main_algo_theorem}, and \ref{thm:main_algo_k}.
\label{sampling_proofs}
\begin{lemma}\label{bernstein}
For any $\lambda > 0$ and $\delta \in (0,1/8)$, given ridge leverage score approximations $\tilde l_i^{\lambda} \ge \l_i^{\lambda}$ for all $i$, let $p_i = \min \left \{1,16\tilde{l}_i^{\lambda} \log(\sum \tilde{l}_i^{\lambda}/\delta) \right \}$. Let $\bv{S} \in \mathbb{R}^{n \times s}$ be selected by sampling the standard basis vectors $\bv{e}_1,\ldots, \bv{e}_n$ each independently with probability $p_i$ and rescaling selected columns by $1/\sqrt{p_i}$. With probability $1-\delta$, $1/2 \cdot \sum_i p_i \leq s \leq 2\sum_i p_i$ and:
\begin{align}\label{bernsteinPrimaryBound}
\frac{1}{2}\bv{B}^T\bv{B} - \frac{1}{2} \lambda \bv{I} \preceq \bv{B}^T\bv{SS}^T\bv{B} \preceq \frac{3}{2}\bv{B}^T\bv{B} + \frac{1}{2} \lambda \bv{I},
\end{align}
\end{lemma}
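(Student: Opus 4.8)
Since $\lambda>0$, the matrix $\bv M \eqdef \bv B^T\bv B + \lambda\bv I$ is positive definite, and adding $\lambda\bv I$ to every term of \eqref{bernsteinPrimaryBound} shows that \eqref{bernsteinPrimaryBound} is equivalent to the relative‑error statement $\tfrac12\bv M \preceq \bv B^T\bv S\bv S^T\bv B + \lambda\bv I \preceq \tfrac32\bv M$. Conjugating by $\bv M^{-1/2}$ and using that $\bv M^{-1/2}\bv B^T\bv B\bv M^{-1/2} + \lambda\bv M^{-1} = \bv I$, this in turn is equivalent to $\norm{\bv E}_2 \le \tfrac12$, where $\bv E \eqdef \bv M^{-1/2}\bv B^T(\bv S\bv S^T - \bv I)\bv B\bv M^{-1/2}$. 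The key structural point is that $\bv S$ samples column $i$ with probability $p_i$ and rescales it by $1/\sqrt{p_i}$, so $\E[\bv S\bv S^T] = \sum_i p_i \cdot \tfrac1{p_i}\bv e_i\bv e_i^T = \bv I$ and hence $\bv E$ is mean zero. So the whole proof reduces to a matrix concentration bound on $\bv E$.

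\textbf{Setting up matrix Bernstein.} I would write $\bv E = \sum_{i=1}^n \bv X_i$ with $\bv X_i \eqdef \bigl(\tfrac{\eta_i}{p_i} - 1\bigr)\bv Y_i$, where $\bv Y_i \eqdef \bv M^{-1/2}\bv b_i\bv b_i^T\bv M^{-1/2}$ and $\eta_i\in\{0,1\}$ indicates whether $i$ was sampled, so the $\bv X_i$ are independent and mean zero. The relevant facts are: (i) $\sum_i\bv Y_i = \bv M^{-1/2}\bv B^T\bv B\bv M^{-1/2} \preceq \bv I$, and $\tr(\sum_i\bv Y_i) = \sum_i l_i^\lambda = d_\text{eff}^\lambda$; (ii) $\norm{\bv Y_i}_2 = l_i^\lambda$ and $\bv Y_i^2 = l_i^\lambda\bv Y_i$; and (iii) whenever $p_i<1$ we have $p_i = 16\tilde l_i^\lambda\log(\sum_j\tilde l_j^\lambda/\delta) \ge 16 l_i^\lambda\log(\sum_j\tilde l_j^\lambda/\delta)$, so $l_i^\lambda/p_i \le L \eqdef \tfrac{1}{16\log(\sum_j\tilde l_j^\lambda/\delta)}$, while if $p_i=1$ then $\bv X_i = 0$. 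From (i)–(iii): $\norm{\bv X_i}_2 \le (\tfrac1{p_i}-1)l_i^\lambda \le L$, and using $\mathrm{Var}(\eta_i/p_i) = (1-p_i)/p_i$ the matrix variance satisfies $\bv V \eqdef \sum_i\E[\bv X_i^2] = \sum_i\tfrac{(1-p_i)l_i^\lambda}{p_i}\bv Y_i \preceq L\sum_i\bv Y_i \preceq L\bv I$, so $\norm{\bv V}_2 \le L$ and $\tr(\bv V) \le L\, d_\text{eff}^\lambda \le L\sum_i\tilde l_i^\lambda$.

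\textbf{Applying the bound and the count.} Now I would invoke the intrinsic‑dimension version of the matrix Bernstein inequality \cite{tropp2015introduction}: the effective dimension of $\bv V$ is $\tr(\bv V)/\norm{\bv V}_2 = O(d_\text{eff}^\lambda) = O(\sum_i\tilde l_i^\lambda)$, and with $\norm{\bv V}_2,\norm{\bv X_i}_2 \le L = 1/(16\log(\sum_j\tilde l_j^\lambda/\delta))$, taking the deviation $t=\tfrac12$ drives the exponential factor down to $(\sum_j\tilde l_j^\lambda/\delta)^{-c}$ for a constant $c>1$ (the oversampling constant $16$ is chosen exactly so $c>1$), which dominates the $O(\sum_i\tilde l_i^\lambda)$ prefactor and gives $\Pr[\norm{\bv E}_2 > \tfrac12] \le \delta/2$; hence \eqref{bernsteinPrimaryBound} holds with probability $1-\delta/2$. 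Separately, $s = \sum_i\eta_i$ has mean $\sum_i p_i$ and a standard multiplicative Chernoff bound (the deterministic $p_i=1$ coordinates contribute exactly, the random part has expectation $\gtrsim\log(1/\delta)$) gives $\tfrac12\sum_i p_i \le s \le 2\sum_i p_i$ with probability $1-\delta/2$. A union bound finishes the proof.

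\textbf{Main obstacle.} The crux is obtaining the \emph{right} logarithmic factor. A black‑box application of matrix Bernstein costs a $\log n$ (or $\log\rank(\bv B)$) factor in the failure probability, which would force $p_i \propto l_i^\lambda\log(n/\delta)$ and break the $d_\text{eff}^\lambda$‑dependent sample complexity claimed in Theorem \ref{additiveErrorThm}. One must instead use the intrinsic‑dimension variant and show that the intrinsic dimension of the variance matrix $\bv V$ is $O(d_\text{eff}^\lambda) = O(\sum_i\tilde l_i^\lambda)$, so the surviving log factor matches the one baked into the definition of $p_i$. Carefully tracking constants so that $t=\tfrac12$ yields a strictly super‑linear power of $\sum_i\tilde l_i^\lambda/\delta$, and dealing with the degenerate regime where $d_\text{eff}^\lambda$ (equivalently $\sum_i\tilde l_i^\lambda$) is $O(1)$ — where one can argue the claim is essentially trivial since $\lambda$ is then comparable to $\norm{\bv K}_2$ — are the remaining fiddly points.
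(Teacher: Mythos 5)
Your proposal is correct and follows essentially the same route as the paper's proof: whiten by $(\bv{B}^T\bv{B}+\lambda\bv{I})^{-1/2}$, decompose into independent mean-zero rank-one summands with $\norm{\bv{X}_i}_2 \le 1/(16\log(\sum_j \tilde l_j^\lambda/\delta))$ via $l_i^\lambda/p_i$, bound the matrix variance by a majorant of trace $O(d_{\text{eff}}^\lambda)$, and invoke the intrinsic-dimension matrix Bernstein inequality (Theorem 7.3.1 of \cite{tropp2015introduction}) plus a scalar Chernoff bound for $s$. The only detail you gloss over — that the intrinsic dimension $\tr(\bv{Z})/\norm{\bv{Z}}_2$ of the variance majorant must be controlled even when $\norm{\bv{V}}_2 \ll L$ — is handled in the paper by padding the majorant with a unit diagonal entry so its spectral norm is exactly $L$, which matches the degenerate-regime caveat you already flag.
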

\begin{proof}
Let $\bv{B} = \bv{U} \bv{\Sigma} \bv{V}^T$ be the singular value decomposition of $\bv{B}$. By Definition \ref{leverageDef}:
\begin{align*}
l_i^{\lambda}  = \bv{b}_i^T \left(\bv{B}^T\bv{B} + \lambda \bv{I}\right)^{-1} \bv{b}_i
&= \bv{b}_i^T \left (\bv{V} \bv{\Sigma}^2 \bv{V}^T +  \lambda \bv{V} \bv{V}^T \right)^{-1} \bv{b}_i\\
&= \bv{b}_i^T \left (\bv{V} \bv{\bar \Sigma}^2 \bv{V}^T \right )^{-1} \bv{b}_i\\
&= \bv{b}_i^T \left (\bv{V} \bv{\bar \Sigma}^{-2} \bv{V}^T \right ) \bv{b}_i,
\end{align*}
where $\bv{\bar \Sigma}^2_{i,i} = \sigma^2_i(\bv{B}) + \lambda$.
For each $i \in 1,\ldots,n$ define the matrix valued random variable:
\begin{align*}
\bv{X}_i = 
\begin{cases}
\left (\frac{1}{p_i} -1 \right ) \bv{\bar \Sigma}^{-1} \bv{V}^T\bv{b}_i \bv{b}_i^T \bv{V} \bv{\bar \Sigma}^{-1} \text{ with probability } p_i\\
-\bv{\bar \Sigma}^{-1} \bv{V}^T\bv{b}_i \bv{b}_i^T \bv{V} \bv{\bar \Sigma}^{-1} \text{ with probability } (1-p_i)
\end{cases}
\end{align*}

Let $\bv{Y} = \sum_i \bv{X}_i$. We have $\E \bv{Y} = \bv{0}$. Furthermore, $\bv{B}^T \bv{S}\bv{S}^T\bv{B} = \bv{V}\bv{\bar \Sigma} \bv{Y} \bv{\bar \Sigma} \bv{V}^T + \bv{B}^T\bv{B}$. If we can show that $\norm{\bv{Y}}_2 \le \frac{1}{2}$, then since $\bv{V}\bv{\bar \Sigma}^2 \bv{V}^T = \bv{B}^T\bv{B} + \lambda \bv{I}$ this would give the desired bound:
\begin{align*}
\frac{1}{2}\bv{B}^T\bv{B} - \frac{1}{2} \lambda \bv{I} \preceq \bv{B}^T\bv{SS}^T\bv{B} \preceq \frac{3}{2}\bv{B}^T\bv{B} + \frac{1}{2} \lambda \bv{I}.
\end{align*}

To prove that $\norm{\bv{Y}}_2$ is small we use an intrinsic dimension matrix Bernstein inequality. This inequality will bound the deviation of $\bv{Y}$ from its expectation as long as we can bound each $\|\bv{X}_i\|_2$ and we can bound the matrix variance $\E(\bv{Y}^2)$.

\begin{theorem}[Theorem 7.3.1, \cite{tropp2015introduction}]
\label{mbernstein}
Let $\bv{X}_1, \ldots, \bv{X}_n$ be random symmetric matrices such that for all $i$, $\E\bv{X} = \bv{0}$ and $\|\bv{X}_i\|_2 \leq L$. Let $\bv{Y} = \sum_{i=1}^n \bv{X}_i$. As long we can bound the matrix variance:
\begin{align*}
\E(\bv{Y}^2) \preceq \bv{Z},
\end{align*}
then for for $t \geq \sqrt{\|\bv{Z}\|_2} + L/3$,
\begin{align*}
\Pr \left [\norm{\bv Y} \ge t \right ] &\le 4 \frac{\tr(\bv{Z})}{\|\bv{Z}\|_2}e^\frac{-t^2/2}{\|\bv{Z}\|_2 + Lt/3}.
\end{align*}
\end{theorem}

If $p_i = 1$ (i.e. $c \tilde l_i^{ \lambda}\log(\sum l_i^{\tilde \lambda}/\delta) \ge 1$) then $\bv{X}_i = \bv{0}$ so $\norm{\bv{X}_i}_2 = 0$. Otherwise, we use the fact that: 
\begin{align}\label{leverageScoreSpectralBound}\frac{1}{\tilde l_i^{\lambda}} \bv{b}_i \bv{b}_i^T \preceq \frac{1}{l_i^{\lambda}} \bv{b}_i \bv{b}_i^T \preceq \bv{B}^T\bv{B} + \lambda \bv{I}.
\end{align}
This follows because we can write any $\bv{x}$ as $\bv{x} = (\bv{B}^T\bv{B} + \lambda \bv{I})^{-1/2} \bv{y}$ for some $\bv{y}$. We can then write:
\begin{align*}
\bv{x}^T \bv{b}_i \bv{b}_i^T \bv{x} &= \bv{y}^T (\bv{B}^T\bv{B} + \lambda \bv{I})^{-1/2} \bv{b}_i \bv{b}_i^T (\bv{B}^T\bv{B} + \lambda \bv{I})^{-1/2} \bv{y}\\ &\le \norm{\bv{y}}_2^2 \cdot \norm{(\bv{B}^T\bv{B} + \lambda \bv{I})^{-1/2} \bv{b}_i \bv{b}_i^T (\bv{B}^T\bv{B} + \lambda \bv{I})^{-1/2}}_2.
\end{align*}
Since $(\bv{B}^T\bv{B} + \lambda \bv{I})^{-1/2} \bv{b}_i \bv{b}_i^T (\bv{B}^T\bv{B} + \lambda \bv{I})^{-1/2}$ is rank $1$, we have:
\begin{align}
\norm{(\bv{B}^T\bv{B} + \lambda \bv{I})^{-1/2} \bv{b}_i \bv{b}_i^T (\bv{B}^T\bv{B} + \lambda \bv{I})^{-1/2}}_2 &= \tr \left ((\bv{B}^T\bv{B} + \lambda \bv{I})^{-1/2} \bv{b}_i \bv{b}_i^T (\bv{B}^T\bv{B} +  \lambda \bv{I})^{-1/2} \right )\nonumber\\
&= \bv{b}_i^T (\bv{B}^T\bv{B} + \lambda \bv{I})^{-1} \bv{b}_i = l_i^{\lambda}\label{spectralNormMult}
\end{align}
where in the last step we use the cyclic property of the trace. Writing $\bv{y} = (\bv{B}^T\bv{B} + \lambda \bv{I})^{1/2} \bv{x}$ and plugging back into \eqref{spectralNormMult} gives:
\begin{align*}
\bv{x}^T \bv{b}_i \bv{b}_i^T \bv{x} \le \norm{\bv{y}}_2^2 \cdot l_i^{\lambda} = \bv{x}^T (\bv{B}^T\bv{B} + \lambda \bv{I}) \bv{x} \cdot l_i^{\lambda}.
\end{align*}
Rearranging and using that $\tilde{l}_i^\lambda \geq l_i^\lambda$ gives \eqref{leverageScoreSpectralBound}. With this bound in place we get:
\begin{align*}
\frac{1}{\tilde l_i^{\lambda}} \cdot \bv{\bar \Sigma}^{-1} \bv{V}^T\bv{b}_i \bv{b}_i^T \bv{V} \bv{\bar \Sigma}^{-1} \preceq \bv{\bar \Sigma}^{-1} \bv{V}^T\left(\bv{B}^T\bv{B} + \lambda \bv{I}\right) \bv{V} \bv{\bar \Sigma}^{-1} = \bv{I}.
\end{align*}
 So we have:
\begin{align*}
\bv{X}_i \preceq \frac{1}{p_i} \bv{\bar \Sigma}^{-1} \bv{V}^T\bv{b}_i \bv{b}_i^T \bv{V} \bv{\bar \Sigma}^{-1} \preceq \frac{\tilde l_i^{\lambda}}{p_i} \bv{I} = \frac{1}{16 \log \left (\sum l_i^{\tilde\lambda}/\delta \right )} \bv{I} \preceq \frac{1}{16 \log \left (\sum l_i^{\lambda}/\delta \right )} \bv{I}.
\end{align*}
Next we bound the variance of $\bv{Y}$. %Note that if $p_i \ge 1$ we j:
\begin{align}
\label{eq:var_bound2}
\E (\bv{Y}^2) = \sum \E (\bv{X}_i^2 ) &\preceq \sum \left [p_i \left (\frac{1}{p_i}-1\right )^2 + (1-p_i) \right ] \cdot \bv{\bar \Sigma}^{-1} \bv{V}^T\bv{b}_i \bv{b}_i^T \bv{V} \bv{\bar \Sigma}^{-2} \bv{V}^T\bv{b}_i \bv{b}_i^T \bv{V} \bv{\bar \Sigma}^{-1}\nonumber\\
&\preceq \sum \frac{1}{p_i} \cdot  l_i^{\tilde \lambda}\cdot \bv{\bar \Sigma}^{-1} \bv{V}^T\bv{b}_i \bv{b}_i^T \bv{V} \bv{\bar \Sigma}^{-1}
\preceq \frac{1}{16 \log \left (\sum l_i^{\lambda}/\delta \right )} \bv{\bar \Sigma}^{-1} \bv{V}^T\bv{B}^T\bv{B} \bv{V} \bv{\bar \Sigma}^{-1}\nonumber\\
&\preceq  \frac{1}{16 \log \left (\sum l_i^{\lambda}/\delta \right )} \bv \Sigma^2 \bv{\bar \Sigma}^{-2} \preceq  \frac{1}{16 \log \left (\sum l_i^{\lambda}/\delta \right )} \bv D.
\end{align}
where $\bv{D}_{1,1} = 1$ and $\bv{D}_{i,i} = (\bv{\Sigma}^2 \bv{\bar \Sigma}^{-2})_{i,i} = \frac{\sigma_i^2(\bv{B})}{\sigma_i^2(\bv{B}) + \lambda}$ for all $i \ge 2$. Note that $\norm{\bv D}_2 = 1$. 

Then applying Theorem \ref{mbernstein} with $\bv{Z} = \bv{D}/16 \log \left (\sum l_i^{\lambda}/\delta \right )$ we see that:
\begin{align}
\label{eq:almost_chernoff2}
\Pr \left [\norm{\bv Y}_2 \ge \frac{1}{2} \right ] &\le 4\tr(\bv D) e^{\frac{-1/8}{\frac{1}{16\log(\sum l_i^{\lambda}/\delta)}+ \frac{1}{192\log(\sum l_i^{\lambda}/\delta)}}}.
\end{align}
Then we observe that:
\begin{align*}
\tr(\bv{D}) \le 1 + \tr(\bv{\Sigma}^2 \bv{\bar \Sigma}^{-2}) = 1+ \tr \left (\bv{K}(\bv{K}+\lambda \bv{I})^{-1}\right ) = 1+ \sum_i l_i^{\lambda}.
\end{align*}
Plugging into \eqref{eq:almost_chernoff2}, establishes \eqref{bernsteinPrimaryBound}:
\begin{align*}
\Pr \left [\norm{\bv Y} \ge \frac{1}{2} \right ] &\le 4 \left (1 + \sum_i l_i^{\lambda} \right ) \cdot e^{- 2\log(\sum l_i^{\lambda}/\delta)} \le \delta/2.
\end{align*}
Note that here we make the extremely mild assumption that $\sum_i l_i^{\lambda} \ge 1$. If not, we can simply use a smaller $\lambda$ that makes this condition true, and will have $s = O(1)$. 

All that remains to show is that the sample size $s$ is bounded with high probability. If $p_i = 1$, we always sample $i$ so there is no variance in $s$. Let $S \subseteq [1,...,n]$ be the set of indices with $p_i < 1$. The expected number of points sampled from $S$ is $\sum_{i \in S} p_i = 16\log(\sum \tilde{l}_i^{\lambda}/\delta) \sum_{i \in S} \tilde l_i^{\lambda}$. Assume without loss of generality that $\sum_{i \in S} \tilde l_i^{\lambda} \ge 1$ -- otherwise can just increase our leverage score estimates and increase the expected sample size by at most $1$. Then, by a standard Chernoff bound, with probability at least $1-\delta/2$, 
$$\frac{1}{2} \cdot 16\log(\sum \tilde{l}_i^{\lambda}/\delta) \sum_{i \in S} \tilde{l}_i^{ \lambda} \le s \le 2\cdot16\log(\sum \tilde{l}_i^{\lambda}/\delta) \sum_{i \in S} \tilde{l}_i^{ \lambda}.$$
Union bounding over failure probabilities gives the lemma.
\end{proof}

Lemma \ref{bernstein} yields an easy corollary about sampling \emph{without rescaling} the columns in $\bv{S}$:
\begin{corollary}\label{bernstein2}

For any $\lambda > 0$ and $\delta \in (0,1/8)$, given ridge leverage score approximations $\tilde l_i^{\lambda} \ge \l_i^{\lambda}$ for all $i$, let $p_i = \min \left \{16\tilde{l}_i^{\lambda} \log(\sum \tilde{l}_i^{\lambda}/\delta), 1 \right \}$. Let $\bv{S} \in \mathbb{R}^{n \times s}$ be selected by sampling,  \textbf{but not rescaling}, the standard basis vectors $\bv{e}_1,\ldots, \bv{e}_n$ each independently with probability $p_i$. With probability $1-\delta$, $1/2 \cdot \sum_i p_i\leq s \leq 2\sum_i p_i$ and there exists some scaling factor $C >0$ such that
\begin{align}\label{bernsteinPrimaryBound}
\bv{B}^T\bv{B} \preceq C \cdot \bv{B}^T\bv{S}\bv{S}^T\bv{B} + \lambda \bv{I}.
\end{align}
\end{corollary}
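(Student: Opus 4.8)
The plan is to obtain this as an immediate consequence of Lemma \ref{bernstein}, by comparing the rescaled sampling matrix used there with the unrescaled one appearing here under the \emph{same} random choice of columns. First I would couple the two experiments: run the common random selection, obtaining a random index set $T \subseteq \{1,\ldots,n\}$, and let $\bv{S}$ have columns $\{\bv{e}_i : i \in T\}$ (unrescaled, as in the corollary) while $\bv{S}_w$ has columns $\{p_i^{-1/2}\bv{e}_i : i \in T\}$ (rescaled, as in Lemma \ref{bernstein}). Then $\bv{S}\bv{S}^T = \sum_{i\in T}\bv{e}_i\bv{e}_i^T$ and $\bv{S}_w\bv{S}_w^T = \sum_{i \in T} p_i^{-1}\bv{e}_i\bv{e}_i^T$, and since every $p_i \le 1$ each coefficient satisfies $p_i^{-1} \ge 1$, giving $\bv{S}\bv{S}^T \preceq \bv{S}_w\bv{S}_w^T \preceq p_{\min}^{-1}\bv{S}\bv{S}^T$, where $p_{\min} \eqdef \min_{i\in T} p_i > 0$ (any index with $p_i = 0$ is never selected). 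Conjugating by $\bv{B}$ preserves both orderings, so in particular $\bv{B}^T\bv{S}_w\bv{S}_w^T\bv{B} \preceq p_{\min}^{-1}\bv{B}^T\bv{S}\bv{S}^T\bv{B}$.

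Next I would invoke Lemma \ref{bernstein} directly: on an event of probability $1-\delta$ it yields simultaneously the column-count bound $\tfrac12\sum_i p_i \le s \le 2\sum_i p_i$ (which involves only the number of columns, hence is unaffected by rescaling and transfers verbatim) and the spectral bound $\tfrac12\bv{B}^T\bv{B} - \tfrac12\lambda\bv{I} \preceq \bv{B}^T\bv{S}_w\bv{S}_w^T\bv{B}$, equivalently $\bv{B}^T\bv{B} \preceq 2\,\bv{B}^T\bv{S}_w\bv{S}_w^T\bv{B} + \lambda\bv{I}$. Chaining this with the comparison from the previous paragraph gives $\bv{B}^T\bv{B} \preceq (2/p_{\min})\,\bv{B}^T\bv{S}\bv{S}^T\bv{B} + \lambda\bv{I}$, so the choice $C \eqdef 2/p_{\min}$ works; it is a finite positive number on this event.

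There is no real obstacle; the only thing warranting a remark is the degenerate case $T = \emptyset$, where $\bv{S}\bv{S}^T = \bv{0}$ and $p_{\min}$ is undefined. But the column-count bound then forces $\sum_i p_i = 0$, hence every $l_i^\lambda = 0$, hence $\bv{K} = \bv{B}^T\bv{B} = \bv{0}$, so the claimed inequality reads $\bv{0} \preceq \lambda\bv{I}$ and holds for any $C > 0$; alternatively, under the mild normalization $\sum_i l_i^\lambda \ge 1$ already invoked in the proof of Lemma \ref{bernstein} this event has probability $0$.
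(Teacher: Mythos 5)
Your proposal is correct and follows essentially the same route as the paper: invoke Lemma \ref{bernstein} for the reweighted sampling matrix, observe that since each $p_i \le 1$ the reweighted Gram matrix is dominated by $(\min_i p_i)^{-1}$ times the unweighted one, and rearrange to get $C = 2/\min_i p_i$. Your explicit coupling of the two experiments and your handling of the degenerate empty-sample case are slightly more careful than the paper's two-line argument, but the substance is identical.
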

\begin{proof}
By Lemma \ref{bernstein}, if we set $C' = \frac{1}{\min_i p_i}$ we have:
\begin{align*}
\frac{1}{2} \bv{B}^T\bv{B} - \frac{1}{2} \lambda \bv{I} &\preceq C' \cdot \bv{B}^T \bv{S}\bv{S}^T \bv{B}\\
\bv{B}^T\bv{B} &\preceq 2 C' \cdot \bv{B}^T \bv{S}\bv{S}^T \bv{B} + \lambda \bv{I}
\end{align*}
which gives the corollary by setting $C = 2C'$.
\end{proof}

\section{Projection-cost preserving kernel approximation}
\label{sec:pcp}
In addition to the basic spectral approximation guarantee of Theorem \ref{additiveErrorThm}, we prove that, with high probability, the RLS-Nystr{\" o}m method presented in Algorithm \ref{alg:highlevel} outputs an approximation $\bv{\tilde K}$ satisfying what is known as a \emph{projection-cost preservation guarantee}. This approximation also immediately holds for the efficient implementation of sampling in Algorithm \ref{halvingK}.
%This bound, which we prove next, will be essential in applying RLS-Nystr{\" o}m to ``fixed rank'' downstream learning problems like PCA or $k$-means clustering.

\begin{theorem}[Projection-cost preserving kernel approximation]
\label{pcpTheorem} Let $\lambda = \frac{\epsilon}{k}\sum_{i=k+1}^n \sigma_i(\bv{K})$. For any $\epsilon \in (0,1), \delta \in (0,1/8)$, RLS-Nystr{\" o}m returns an $\bv{S} \in \R^{n\times s}$ such that with probability $1-\delta$, $1/2 \sum_i p_i \leq s \leq 2\sum_i p_i$ and the approximation $\bv{\tilde K} =  \bv{KS} (\bv{S}\bv{K}\bv{S})^+ \bv{S}\bv{K}$ satisfies, for any rank $k$ orthogonal projection $\bv{X}$ and a positive constant $c$ independent of $\bv{X}$:
\begin{align}\label{actualPCPBound}
\tr(\bv{K} - \bv{X}\bv{K}\bv{X})\le \tr(\bv{\tilde K} - \bv{X}\bv{\tilde K}\bv{X}) + c \le (1+\epsilon)\tr(\bv{K} - \bv{X}\bv{K}\bv{X}).
\end{align}
When ridge leverage scores are computed exactly, $\sum_i p_i = O\left(\frac{k}{\epsilon}\log\frac{k}{\delta\epsilon}\right)$.
\end{theorem}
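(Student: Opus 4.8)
The plan is to reduce everything to the spectral approximation guarantee of Theorem~\ref{additiveErrorThm}. Write $\bv{E} \eqdef \bv{K} - \bv{\tilde K}$. First I would invoke Theorem~\ref{additiveErrorThm} (and the associated concentration bounds of Lemma~\ref{bernstein}) with ridge parameter $\lambda = \frac{\epsilon}{k}\sum_{i=k+1}^n\sigma_i(\bv{K})$: with probability $1-\delta$ the returned $\bv{S}$ satisfies $\tfrac12\sum_i p_i \le s \le 2\sum_i p_i$ and $\bv{0} \preceq \bv{E} \preceq \lambda\bv{I}$. I would then fix the constant in \eqref{actualPCPBound} to be $c \eqdef \tr(\bv{E}) = \tr(\bv{K}-\bv{\tilde K})$, which depends only on $\bv{K}$ and $\bv{S}$, and in particular is independent of $\bv{X}$ as required.

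The next step is to rewrite the middle quantity of \eqref{actualPCPBound}. Since $\bv{X}$ is an orthogonal projection, $\bv{X}^2=\bv{X}$, so by linearity and cyclicity of the trace,
\begin{align*}
\tr(\bv{\tilde K} - \bv{X}\bv{\tilde K}\bv{X}) + c
= \tr(\bv{K} - \bv{X}\bv{\tilde K}\bv{X})
= \tr(\bv{K} - \bv{X}\bv{K}\bv{X}) + \tr(\bv{X}\bv{E}\bv{X})
= \tr(\bv{K} - \bv{X}\bv{K}\bv{X}) + \tr(\bv{X}\bv{E}).
\end{align*}
Hence the sandwich in \eqref{actualPCPBound} is equivalent to the two inequalities $0 \le \tr(\bv{X}\bv{E}) \le \epsilon\,\tr(\bv{K} - \bv{X}\bv{K}\bv{X})$. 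The lower bound is immediate: $\bv{X}\succeq\bv{0}$ and $\bv{E}\succeq\bv{0}$, so $\tr(\bv{X}\bv{E}) = \tr(\bv{X}\bv{E}\bv{X}) \ge 0$. For the upper bound, $\bv{E}\preceq\lambda\bv{I}$ and $\tr(\bv{X})=k$ give $\tr(\bv{X}\bv{E}) = \tr(\bv{X}\bv{E}\bv{X}) \le \lambda\,\tr(\bv{X}\bv{X}) = \lambda k = \epsilon\sum_{i=k+1}^n\sigma_i(\bv{K})$. Finally, by the Eckart--Young / Ky Fan principle, every rank-$k$ orthogonal projection satisfies $\tr(\bv{X}\bv{K}\bv{X}) \le \sum_{i=1}^k\sigma_i(\bv{K})$, so $\tr(\bv{K}-\bv{X}\bv{K}\bv{X}) \ge \sum_{i=k+1}^n\sigma_i(\bv{K})$; combining with the previous line yields $\tr(\bv{X}\bv{E}) \le \epsilon\,\tr(\bv{K}-\bv{X}\bv{K}\bv{X})$, which completes \eqref{actualPCPBound}.

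For the sample-size claim, I would use the exact-scores case of Theorem~\ref{additiveErrorThm}, which gives $\sum_i p_i = O(d_\text{eff}^\lambda\log(d_\text{eff}^\lambda/\delta))$, so it suffices to show $d_\text{eff}^\lambda = O(k/\epsilon)$. Splitting $d_\text{eff}^\lambda = \sum_i \frac{\sigma_i(\bv{K})}{\sigma_i(\bv{K})+\lambda}$ at index $k$: the first $k$ summands contribute at most $k$, and the tail contributes at most $\frac{1}{\lambda}\sum_{i=k+1}^n\sigma_i(\bv{K}) = k/\epsilon$, so $d_\text{eff}^\lambda \le k + k/\epsilon \le 2k/\epsilon$ (this is the $\lambda = \tfrac{\epsilon}{k}\sum_{i>k}\sigma_i(\bv{K})$ version of Fact~\ref{ridgeScore2kBound}). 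Substituting gives $\sum_i p_i = O\!\left(\tfrac{k}{\epsilon}\log\tfrac{k}{\delta\epsilon}\right)$.

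There is no real obstacle here beyond Theorem~\ref{additiveErrorThm} plus standard trace and Ky Fan facts; the only genuine idea is the choice $c = \tr(\bv{K}-\bv{\tilde K})$ together with the observation that the projection identity collapses $\tr(\bv{\tilde K}-\bv{X}\bv{\tilde K}\bv{X}) + c$ into $\tr(\bv{K}-\bv{X}\bv{K}\bv{X})$ plus an error term $\tr(\bv{X}\bv{E})$ that can be bounded below by $0$ and above by $\lambda k$, and the latter is exactly $\epsilon$ times a lower bound on $\tr(\bv{K}-\bv{X}\bv{K}\bv{X})$.
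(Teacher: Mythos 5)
Your proposal is correct and follows essentially the same route as the paper's proof: both set $c = \tr(\bv{K}-\bv{\tilde K})$, reduce the sandwich to bounding $\tr(\bv{X}(\bv{K}-\bv{\tilde K})\bv{X})$ between $0$ and $\lambda k$ using $\bv{0}\preceq\bv{K}-\bv{\tilde K}\preceq\lambda\bv{I}$ from the spectral guarantee, invoke Eckart--Young to convert $\lambda k$ into $\epsilon\,\tr(\bv{K}-\bv{X}\bv{K}\bv{X})$, and bound $d_\text{eff}^\lambda = O(k/\epsilon)$ by splitting the sum at index $k$. The only cosmetic difference is that the paper factors $\bv{X}=\bv{Q}\bv{Q}^T$ and argues columnwise while you work directly with trace identities.
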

Intuitively, Theorem \ref{pcpTheorem} ensures that the distance from $\bv{\tilde K}$ to any low dimensional subspace closely approximates the distance from $\bv{K}$ to the subspace. Accordingly, $\bv{\tilde K}$ can be used in place of $\bv{K}$ to approximately solve low-rank approximation problems, both constrained (e.g. $k$-means clustering) and unconstrained (e.g. principal component analysis). See Theorems \ref{thm:app_kmeans} and \ref{thm:app_pca}.

\begin{proof}
Set $c = \tr(\bv{K}) - \tr(\bv{\tilde K})$, which is $\geq 0$ since $\bv{\tilde K} \preceq \bv{K}$ by Theorem \ref{additiveErrorThm}. By linearity of trace:
\begin{align*}
\tr(\bv{\tilde K} - \bv{X}\bv{\tilde K}\bv{X}) + c = \tr(\bv{K}) - \tr(\bv{X}\bv{\tilde K}\bv{X}).
\end{align*}
So to obtain \eqref{actualPCPBound} it suffices to show:
\begin{align}\label{lastTraceBound}
\tr(\bv{X}\bv{ K}\bv{X}) - \epsilon \tr(\bv{K} - \bv{X}\bv{K}\bv{X}) \le \tr(\bv{X}\bv{\tilde K}\bv{X}) \le \tr(\bv{X}\bv{ K}\bv{X}).
\end{align}
Since $\bv{X}$ is a rank $k$ orthogonal projection we can write $\bv{X} = \bv{Q} \bv{Q}^T$ where $\bv{Q} \in \mathbb{R}^{n \times k}$ has orthonormal columns. Applying the cyclic property of the trace, and the spectral bound of Theorem \ref{additiveErrorThm}:
\begin{align*}
\tr(\bv{X} \bv{\tilde K} \bv{X}) = \tr(\bv{Q}^T \bv{\tilde K} \bv{Q}) = \sum_{i=1}^k \bv{q}_i^T \bv{\tilde K} \bv{q}_i \le \sum_{i=1}^k \bv{q}_i^T \bv{ K} \bv{q}_i = \tr(\bv{Q}^T \bv{ K} \bv{Q}) = \tr(\bv{X} \bv{ K} \bv{X}).
\end{align*}
This gives us the upper bound of \eqref{lastTraceBound}. For the lower bound we apply Corollary \ref{main_cor}:
\begin{align}\label{almostTraceBound}
\tr(\bv{X} \bv{\tilde K} \bv{X}) = \sum_{i=1}^k \bv{q}_i^T \bv{\tilde K} \bv{q}_i \ge \sum_{i=1}^k \bv{q}_i^T \bv{ K} \bv{q}_i - k \epsilon \lambda = \tr(\bv{X} \bv{ K} \bv{X}) - k\epsilon \lambda.
\end{align}
Finally, $k\epsilon \lambda = \epsilon \sum_{i=k+1}^n \sigma_i(\bv{K}) \le \epsilon \tr(\bv{K}-\bv{X}\bv{K}\bv{X})$ since $\tr(\bv{K}) = \sum_{i=1}^n \sigma_i(\bv{K})$ and  $\tr(\bv{X}\bv{K}\bv{X}) \le \sum_{i=1}^k \sigma_i(\bv{K})$ by the Eckart-Young theorem. Plugging into \eqref{almostTraceBound} gives \eqref{lastTraceBound}, completing the proof.

%since for any $\bv{B}$ with $\bv{B}\bv{B}^T = \bv{K}$, 
%\begin{align*}
%\sum_{i=k+1}^n \sigma_i(\bv{K}) = \norm{\bv{B}-\bv{B}_k}_F^2 \le  \norm{\bv{B} - \bv{X}\bv{B}}_F^2 = \tr(\bv{K}-\bv{X}\bv{K}\bv{X}).
%\end{align*}

We conclude by showing that $s$ is not too large. As in the proof of Theorem \ref{additiveErrorThm}, $s \leq 2\sum_i p_i$ with probability $1-\delta$. When ridge leverage scores are computed exactly $\sum_i p_i \leq  16\sum l_i^{\lambda}\log(\sum l_i^{\lambda}/\delta)$.
\begin{align}
\sum_i l_i^{\lambda} &= \tr(\bv{K}(\bv{K} + \epsilon \left(\frac{1}{k}\sum_{i=k+1}^n \sigma_i(\bv{K}\right) \bv{I})^{-1}) \nonumber\\
&\le \frac{1}{\epsilon} \tr(\bv{K}(\bv{K} + \left(\frac{1}{k}\sum_{i=k+1}^n \sigma_i(\bv{K}\right)  \bv{I})^{-1})\nonumber\\
&= \frac{1}{\epsilon} \sum_{i=1}^n \frac{\sigma_i(\bv{K})}{\sigma_i(\bv{K}) + \frac{1}{k} \sum_{i=k+1}^n \sigma_i(\bv{K})}\nonumber\\
&= \frac{1}{\epsilon}\left( \sum_{i=1}^k \frac{\sigma_i(\bv{K})}{\sigma_i(\bv{K}) + \frac{1}{k} \sum_{i=k+1}^n \sigma_i(\bv{K})} + \sum_{i=k+1}^n \frac{\sigma_i(\bv{K})}{\sigma_i(\bv{K}) + \frac{1}{k} \sum_{i=k+1}^n \sigma_i(\bv{K})} \right )\nonumber\\
&\le \frac{1}{\epsilon} \left (k + \sum_{i=k+1}^n \frac{\sigma_i(\bv{K})}{\frac{1}{k}\sum_{i=k+1}^n \sigma_i(\bv{K})}\right ) = \frac{2k}{\epsilon}. \label{sum_to_k}
\end{align}
Accordingly, $\sum_i p_i =  32\frac{k}{\epsilon}\log\frac{k}{\delta\epsilon}$ as desired.
\end{proof}

\section{Applications to learning tasks}
\label{sec:apps}
In this section use our general approximation gaurantees from Theorems \ref{additiveErrorThm} and \ref{pcpTheorem} to prove that the kernel approximations given by RLS-Nystr{\"o}m sampling are sufficient for many downstream learning tasks. In other words, $\bv{\tilde K}$ can be used in place of $\bv{K}$ without sacrificing accuracy or statistical performance in the final computation.

\subsection{Kernel ridge regression}
\label{app:ridge_regression}
We begin with a standard formulation of the ubiquitous kernel ridge regression task \cite{scholkopf2002learning}. Given input data points $\bv{x}_1, \ldots, \bv{x}_n \in \R^d$ and labels $y_1, \ldots, y_n \in \R$ this problem asks us to solve:
\begin{align}
\label{eq:krr_min}
\bs{\alpha} \eqdef \argmin_{\bv{c} \in \R^n} \|\bv{K}\bv{c} - \bv{y}\|_2^2 + \lambda \bv{c}^T\bv{K} \bv{c},
\end{align}
which can be done in closed form by computing:
\begin{align*}
\bs{\alpha} = (\bv{K} + \lambda \bv{I})^{-1} \bv{y}.
\end{align*}
For prediction, when we're given a new input $\bv{x}$, we evaluate its label to be:
\begin{align}
\label{krr_predictor}
y = \sum_{i=1}^n \bv{\alpha}_i K(\bv{x}_i,\bv{x}).
\end{align}

\subsubsection{Approximate kernel ridge regression}
Naively, solving for $\bs{\alpha}$ exactly requires at least $O(n^2)$ time to compute $\bv{K}$, plus the cost of a direct or iterative matrix inversion algorithm. Prediction is also costly since it requires a kernel evaluation with all $n$ training points. These costs can be reduced significantly using Nystr{\"o}m approximation.

In particular, we first select landmark points and compute the kernel approximation $\bv{\tilde K} = \bv{K S} (\bv{S}^T\bv{KS})^+ \bv{S}^T\bv{ K}$. We can then compute an approximate set of coefficients:
\begin{align}
\bs{\tilde \alpha} \eqdef (\bv{\tilde K} + \lambda \bv{I})^{-1} \bv{y}.
\end{align}
With a direct matrix inversion, doing so only takes $O(ns^2)$ time when our sampling matrix $\bv{S}\in\R^{n\times s}$ selects $s$ landmark points. This is a significant improvement on the $O(n^3)$ time required to invert the full kernel. Additionally, the cost of multiplying by $\bv{\tilde K} + \lambda \bv{I}$, which determines the cost of most iterative regression solvers, is reduced, from $O(n^2)$ to $O(ns)$.

To predict a label for a new $\bv{x}$, we first compute its kernel product with all of our landmark points. Specifically, let $\bv{x}^{(1)}, \ldots, \bv{x}^{(s)}$ be the landmarks selected by $\bv{S}$'s columns. Define $\bv{w} \in \R^s$ as:
\begin{align*}
\bv{w}_i \eqdef K(\bv{x}^{(i)},\bv{x}).
\end{align*}
and let

\begin{align}
\label{approx_krr_predictor}
y = \bv{w}^T(\bv{S}^T\bv{KS})^+\bv{S}^T\bv{K}\bs{\tilde \alpha}.
\end{align}
Computationally, it makes sense to precompute $(\bv{S}^T\bv{KS})^+\bv{S}^T\bv{K}\bs{\tilde \alpha}$. Then the cost of prediction is just $s$ kernel evaluations to compute $\bv{w}$, plus $s$ additional operations to multiply $\bv{w}^T$ by $(\bv{S}^T\bv{KS})^+\bv{S}^T\bv{K}\bs{\tilde \alpha}$.

This approach is the standard way of applying Nystr{\" o}m approximation to the ridge regression problem and there are a number of ways to evaluate its performance. Beyond directly bounding minimization error for \eqref{eq:krr_min} (see e.g. \cite{relativeErrorRidge,yang2015randomized,divideandconquer}), one particularly natural approach is to consider how the statistical risk of the estimator output by our approximate ridge regression routine compares to that of the exactly computed estimator. 

\subsubsection{Relative error bound on statistical risk}
To evaluate statistical risk we consider a \emph{fixed design} setting which has been especially -popular \cite{bach2012sharp,alaoui2015fast, chengtaodpp,paul015column}. Note that more complex statistical models can be analyzed as well \cite{Hsu2014,rudi2015less}. In this setting, we assume that our observed labels $\bv{y} = [y_1, \ldots, y_n]$ represent underlying true labels $\bv{z} = [z_1, \ldots, z_n]$ perturbed with noise. For simplicity, we assume uniform Gaussian noise with variance $\sigma^2$, but more general noise models can be handled with essentially the same proof \cite{bach2012sharp}. In particular, our modeling assumption is that:
\begin{align*}
y_i = z_i + \eta_i
\end{align*}
where $\eta_i \sim N(0,\sigma^2)$.

Following \cite{bach2012sharp} and \cite{alaoui2015fast}, we want to bound the expected in sample risk of our estimator for $\bv{z}$, which is computed using the noisy measurements $\bv{y} = \bv{z} + \bs{\eta}$. For exact kernel ridge regression, we can check from \eqref{krr_predictor} that this estimator is equal to $\bv{K}\bs{\alpha}$. The risk $\mathcal{R}$ is:
\begin{align*}
\mathcal{R} &\eqdef \E_{\bs{\eta}} \|\bv{K}(\bv{K} + \lambda \bv{I})^{-1} (\bv{z} + \bs{\eta}) - \bv{z}\|_2^2
\\&=
\|\left(\bv{K}(\bv{K} + \lambda \bv{I})^{-1} - \bv{I}\right) \bv{z}\|_2^2 + \E_{\bs{\eta}} \|\bv{K}(\bv{K} + \lambda \bv{I})^{-1}\bs{\eta}\|_2^2
\\&=
\lambda^2 \bv{z}^T(\bv{K} + \lambda \bv{I})^{-2}\bv{z} + \sigma^2 \tr(\bv{K}^2(\bv{K} + \lambda\bv{I})^{-2}).
\end{align*}
The two terms that compose $\mathcal{R}$ are referred to as the bias and variance terms of the risk:
\begin{align*}
\text{bias}(\bv{K})^2 &\eqdef \lambda^2 \bv{z}^T(\bv{K} + \lambda \bv{I})^{-2}\bv{z} \\
\text{variance}(\bv{K}) &\eqdef \sigma^2 \tr(\bv{K}^2(\bv{K} + \lambda\bv{I})^{-2}).
\end{align*}

For approximate kernel ridge regression, it follows from \eqref{approx_krr_predictor} that our predictor for $\bv{z}$ is $\bv{\tilde K}\bs{\tilde \alpha}$. Accordingly, the risk of the approximate estimator, $\mathcal{\tilde R}$ is equal to:
\begin{align*}
\mathcal{\tilde R} = \text{bias}(\bv{\tilde K})^2 + \text{variance}(\bv{\tilde K})
\end{align*}
We're are ready to prove our main theorem on kernel ridge regression.

\begin{theorem}[Kernel Ridge Regression Risk Bound]
\label{thm:app_ridge}
Suppose $\bv{\tilde K}$ is computed using RLS-Nystr\"{o}m with approximation parameter $\epsilon\lambda$ and failure probability $\delta \in (0,1/8)$. Let $\bs{\tilde \alpha} = (\bv{\tilde K} + \lambda \bv{I})^{-1} \bv{y}$ and let $\bv{\tilde K}\bs{\tilde \alpha}$ be our estimator for $\bv{z}$ computed with the approximate kernel. With probability $1-\delta$:
\begin{align*}
\mathcal{\tilde R} \leq (1+3\epsilon) \mathcal{R}.
\end{align*}
By Theorem \ref{thm:main_algo_theorem}, Algorithm \ref{halvingFixed} can compute $\bv{\tilde K}$ with just $O(n s)$ kernel evaluations and $O(n s^2 )$ computation time, with $s = O\left(\frac{d_\text{eff}^\lambda}{\epsilon}\log\frac{d_{\text{eff}}^\lambda}{\delta\epsilon}\right)$.
\end{theorem}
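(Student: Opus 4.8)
The plan is to work on the $1-\delta$ probability event guaranteed by Corollary~\ref{main_cor}: since $\bv{\tilde K}$ is built by RLS-Nystr\"om with ridge parameter $\epsilon\lambda$, on this event $\bv{\tilde K}\preceq\bv{K}\preceq\bv{\tilde K}+\epsilon\lambda\bv{I}$, so $\bv{0}\preceq\bv{K}-\bv{\tilde K}\preceq\epsilon\lambda\bv{I}$ (hence $\norm{\bv{K}-\bv{\tilde K}}_2\le\epsilon\lambda$) and trivially $\norm{(\bv{\tilde K}+\lambda\bv{I})^{-1}}_2\le1/\lambda$. Since $\mathcal{\tilde R}=\text{bias}(\bv{\tilde K})^2+\text{variance}(\bv{\tilde K})$ and $\mathcal{R}=\text{bias}(\bv{K})^2+\text{variance}(\bv{K})$, it suffices to show on this event that $\text{bias}(\bv{\tilde K})^2\le(1+3\epsilon)\,\text{bias}(\bv{K})^2$ and $\text{variance}(\bv{\tilde K})\le\text{variance}(\bv{K})$, then add and apply a union bound with the failure event of Corollary~\ref{main_cor}.

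For the bias term I would argue at the level of vectors rather than through the Loewner order, because the naive step ``$\bv{\tilde K}\preceq\bv{K}\preceq\bv{\tilde K}+\epsilon\lambda\bv{I}$ controls $(\bv{\tilde K}+\lambda\bv{I})^{-2}$'' is not valid: squaring does not respect $\preceq$. Instead, set $\bv{u}\eqdef(\bv{K}+\lambda\bv{I})^{-1}\bv{z}$ and write $\bv{K}+\lambda\bv{I}=(\bv{\tilde K}+\lambda\bv{I})+(\bv{K}-\bv{\tilde K})$, which gives the resolvent identity $(\bv{\tilde K}+\lambda\bv{I})^{-1}\bv{z}=\bv{u}+(\bv{\tilde K}+\lambda\bv{I})^{-1}(\bv{K}-\bv{\tilde K})\bv{u}$. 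A triangle inequality together with the two operator-norm bounds above yields $\norm{(\bv{\tilde K}+\lambda\bv{I})^{-1}\bv{z}}_2\le(1+\epsilon)\norm{\bv{u}}_2$, and multiplying by $\lambda^2$ gives $\text{bias}(\bv{\tilde K})^2\le(1+\epsilon)^2\,\text{bias}(\bv{K})^2\le(1+3\epsilon)\,\text{bias}(\bv{K})^2$ (using $\epsilon<1$). I expect this bias bound to be the main obstacle, precisely because of the squaring issue: one must pick the right algebraic identity rather than just manipulate the semidefinite ordering.

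For the variance term, observe that $\text{variance}(\bv{M})=\sigma^2\tr\!\big((\bv{M}(\bv{M}+\lambda\bv{I})^{-1})^2\big)$ since $\bv{M}$ commutes with $(\bv{M}+\lambda\bv{I})^{-1}$, and that $\bv{M}\mapsto\bv{M}(\bv{M}+\lambda\bv{I})^{-1}=\bv{I}-\lambda(\bv{M}+\lambda\bv{I})^{-1}$ is operator monotone on PSD matrices, so $\bv{\tilde K}\preceq\bv{K}$ gives $\bv{0}\preceq\bv{\tilde K}(\bv{\tilde K}+\lambda\bv{I})^{-1}\preceq\bv{K}(\bv{K}+\lambda\bv{I})^{-1}$. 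Combining this with the elementary fact that $\tr(\bv{A}^2)\le\tr(\bv{B}^2)$ whenever $\bv{0}\preceq\bv{A}\preceq\bv{B}$ --- which follows from $\tr(\bv{B}^2)-\tr(\bv{A}^2)=2\tr\!\big(\bv{A}(\bv{B}-\bv{A})\big)+\tr\!\big((\bv{B}-\bv{A})^2\big)\ge0$ and the nonnegativity of the trace of a product of PSD matrices --- gives $\text{variance}(\bv{\tilde K})\le\text{variance}(\bv{K})$. Adding the two estimates yields $\mathcal{\tilde R}\le(1+3\epsilon)\,\text{bias}(\bv{K})^2+\text{variance}(\bv{K})\le(1+3\epsilon)\,\mathcal{R}$. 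Finally, the stated resource bounds follow directly from Theorem~\ref{thm:main_algo_theorem} applied with ridge parameter $\epsilon\lambda$, with the sample-size estimate $s=O\!\big((d_\text{eff}^\lambda/\epsilon)\log(d_\text{eff}^\lambda/(\delta\epsilon))\big)$ coming from Corollary~\ref{main_cor}.
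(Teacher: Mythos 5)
Your proposal is correct and follows essentially the same route as the paper's proof: the same bias/variance decomposition, the same resolvent identity $(\bv{\tilde K}+\lambda\bv{I})^{-1}\bv{z}=\bv{u}+(\bv{\tilde K}+\lambda\bv{I})^{-1}(\bv{K}-\bv{\tilde K})\bv{u}$ for the bias term (your direct submultiplicativity bound $\|(\bv{\tilde K}+\lambda\bv{I})^{-1}(\bv{K}-\bv{\tilde K})\|_2\le\epsilon$ is in fact marginally cleaner than the paper's squaring argument), and the same monotonicity of the variance under $\bv{\tilde K}\preceq\bv{K}$, which the paper verifies eigenvalue-by-eigenvalue rather than via your operator-monotonicity plus trace inequality. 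No gaps.
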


In other words, replacing $\bv{K}$ with the approximation $\bv{\tilde K}$ is provably sufficient for obtaining a $1+\Theta(\epsilon)$ quality solution to the downstream task of ridge regression.

\begin{proof}
The proof follows that of Theorem 1 in \cite{alaoui2015fast}. First we show that:
\begin{align}
\label{eq:bias_bound}
 \text{bias}(\bv{\tilde K}) \leq (1+\epsilon) \text{bias}(\bv{K}).
\end{align}
At first glance this might appear trivial as Theorem \ref{additiveErrorThm} easily implies that 
\begin{align*}
(\bv{\tilde K} + \lambda\bv{I})^{-1} \preceq (1+\epsilon)(\bv{K} + \lambda\bv{I})^{-1}
\end{align*}
However, this statement \emph{does not imply} that
\begin{align*}
(\bv{\tilde K} + \lambda\bv{I})^{-2} \preceq (1+\epsilon)^2(\bv{K} + \lambda\bv{I})^{-2}
\end{align*}
since $(\bv{\tilde K} + \lambda\bv{I})^{-1}$ and $(\bv{K} + \lambda\bv{I})^{-1}$ do not necessarily commute. Instead we proceed: 
\begin{align}
\frac{1}{\lambda}\text{bias}(\bv{\tilde K}) &= \|(\bv{\tilde K} + \lambda \bv{I})^{-1}\bv{z}\|_2 \nonumber
\\&\leq
\|(\bv{K} + \lambda\bv{I})^{-1}\bv{z}\|_2 + \|(\bv{\tilde K} + \lambda \bv{I})^{-1}\bv{z} - (\bv{K} + \lambda \bv{I})^{-1}\bv{z}\|_2 &\text{(triangle inequality)} \nonumber
\\&= 
\|(\bv{K} + \lambda\bv{I})^{-1}\bv{z}\|_2 + \|(\bv{\tilde K} + \lambda \bv{I})^{-1} [(\bv{K}+\lambda \bv{I}) - (\bv{\tilde K} + \lambda \bv{I})](\bv{K} + \lambda \bv{I})^{-1}\bv{z}\|_2 \nonumber
\\&=
\|(\bv{K} + \lambda\bv{I})^{-1}\bv{z}\|_2 + \|(\bv{\tilde K} + \lambda \bv{I})^{-1} (\bv{K} - \bv{\tilde K})(\bv{K} + \lambda \bv{I})^{-1}\bv{z}\|_2 \nonumber
\\&\leq
\|(\bv{K} + \lambda\bv{I})^{-1}\bv{z}\|_2 + \|(\bv{\tilde K} + \lambda \bv{I})^{-1} (\bv{K} - \bv{\tilde K})\|_2\|(\bv{K} + \lambda \bv{I})^{-1}\bv{z}\|_2 &\text{(submultiplicativity)} \nonumber
\\&=
\frac{1}{\lambda}\text{bias}(\bv{K})\left(1 + \|(\bv{\tilde K} + \lambda \bv{I})^{-1} (\bv{K} - \bv{\tilde K})\|_2\right) \label{almost_bias_bound}.
\end{align}
So we just need to bound $\|(\bv{\tilde K} + \lambda \bv{I})^{-1} (\bv{K} - \bv{\tilde K})\|_2 \leq \epsilon$.
First note that, by Theorem \ref{additiveErrorThm}, Corollary \ref{main_cor},
\begin{align*}
\bv{K} - \bv{\tilde K} \preceq \bv \epsilon \lambda \bv{I}
\end{align*}
and since $(\bv{K} - \bv{\tilde K})$ and $\bv{I}$ \emph{commute}, it follows that
\begin{align}
\label{eq:sqr_bound}
(\bv{K} - \bv{\tilde K})^2 \preceq \bv \epsilon^2 \lambda^2 \bv{I}.
\end{align}
Accordingly,
\begin{align*}
\|(\bv{\tilde K} + \lambda \bv{I})^{-1} (\bv{K} - \bv{\tilde K})\|^2_2 &= \|(\bv{\tilde K} + \lambda \bv{I})^{-1} (\bv{K} - \bv{\tilde K})^2(\bv{\tilde K} + \lambda \bv{I})^{-1}\|_2 
\\&\leq 
\epsilon^2\lambda^2\|(\bv{\tilde K} + \lambda \bv{I})^{-2}\|_2 
\\&\leq \epsilon^2\lambda^2\frac{1}{\lambda^2} = \epsilon^2.
\end{align*}
So $\|(\bv{\tilde K} + \lambda \bv{I})^{-1} (\bv{K} - \bv{\tilde K})\|_2 \leq \epsilon$ as desired and plugging into \eqref{almost_bias_bound} we have shown \eqref{eq:bias_bound}, that $\text{bias}(\bv{\tilde K}) \leq (1+\epsilon) \text{bias}(\bv{K})$.
We next show that:
\begin{align}
\label{var_bound}
 \text{variance}(\bv{\tilde K}) \leq  \text{variance}(\bv{K}),
\end{align}
where $\text{variance}(\bv{K}) = \sigma^2 \tr(\bv{K}^2(\bv{K} + \lambda \bv{I})^{-2}) =  \sigma^2 \sum_{i=1}^n \left(\frac{\sigma_i(\bv{K})}{\sigma_i(\bv{K}) + \lambda}\right)^2$. Since $\bv{\tilde K} \preceq \bv{K}$ by Theorem \ref{additiveErrorThm}, $\sigma_i(\bv{\tilde K}) \leq \sigma_i(\bv{K})$ for all $i$. It follows that, for every $i$,
\begin{align*}
\frac{\sigma_i(\bv{\tilde K})}{\sigma_i(\bv{\tilde K}) + \lambda} \leq \frac{\sigma_i(\bv{K})}{\sigma_i(\bv{K}) + \lambda}.
\end{align*}
This in turn implies that 
\begin{align*}
\sum_{i=1}^n \left(\frac{\sigma_i(\bv{\tilde K})}{\sigma_i(\bv{\tilde K}) + \lambda}\right)^2  \leq \sum_{i=1}^n \left(\frac{\sigma_i(\bv{K})}{\sigma_i(\bv{K}) + \lambda}\right)^2,
\end{align*}
which gives \eqref{var_bound}.
%Let $\sigma_i(\cdot)$ denote the $i^\text{th}$ singular value of a matrix. \eqref{var_bound} follows easily from noting that 
%\begin{align*}
% \text{variance}(\bv{\tilde K}) = \tr(\bv{\tilde K}(\bv{\tilde K} + \lambda \bv{I})^{-1})  &= \sum_i \sigma_i^2(\bv{\tilde K}(\bv{\tilde K} + \lambda \bv{I})^{-1}) 
%\\&= \sum_i \frac{\sigma_i^2(\bv{\tilde K})}{\sigma_i^2(\bv{\tilde K}) + \lambda}
%\\&\leq \sum_i \frac{\sigma_i^2(\bv{K})}{\sigma_i^2(\bv{K}) + \lambda} = \text{variance}(\bv{K}).
%\end{align*}
%The last step holds because, from Theorem \ref{additiveErrorThm}, $\sigma_i^2(\bv{\tilde K}) \leq \sigma_i^2(\bv{K})$ for all $i$ and $\frac{x}{x+\lambda}$ increases as $x$ increases. 
Combining \eqref{var_bound} and \eqref{eq:bias_bound} we conclude that, for $\epsilon < 1$,
\begin{align*}
\mathcal{R}(\hat{f}_\bv{\tilde K}) \leq (1+\epsilon)^2 \mathcal{R}(\hat{f}_\bv{K}) \leq (1+3\epsilon)\mathcal{R}(\hat{f}_\bv{K}).
\end{align*}
\end{proof}

\subsection{Kernel $k$-means}
\label{sec:kmeans}
Kernel $k$-means clustering asks us to partition $\bv{x}_1,\ldots,\bv{x}_n$, into $k$ cluster sets, $\{C_1, \ldots, C_k\}$. Let $\bs{\mu}_i = \frac{1}{|C_i|} \sum_{\bv{x}_j \in C_i} \phi(\bv{x}_j)$ be the centroid of the vectors in $C_{i}$ after mapping to kernel space. %Let $C(x_j)$ denote the cluster that $x_j$ is assigned to.
The goal is to choose $\{C_1, \ldots, C_k\}$ which minimize the objective:
\begin{align}
\sum_{i=1}^k \sum_{\bv{x}_{j}\in C_i}\|\phi(\bv{x}_{j}) - \bs{\mu}_i\|_\mathcal{F}^2 %= \sum_{j=1}^n \|\phi(x_{j}) - \bs{\mu}_{C(x_j)}\|_\mathcal{F}^2.
\end{align}
It is well known that this optimization problem can be rewritten as a \emph{constrained} low-rank approximation problem (see e.g. \cite{BoutsidisMD09} or \cite{cohen2015dimensionality}). In particular, for any clustering $C = \{C_1, \ldots, C_k\}$ we can define a rank $k$ orthonormal matrix $\bv{C} \in \mathbb{R}^{n \times k}$ called the cluster indicator matrix for $C$. $\bv{C}_{i,j} = 1/\sqrt{|C_j|}$ if $\bv{x}_i$ is assigned to $C_j$ and $\bv{C}_{i,j} =0$ otherwise. $\bv{C}^T\bv{C} = \bv{I}$, so $\bv{C}\bv{C}^T$ is a rank $k$ projection matrix.
Furthermore, it's not hard to check that:
\begin{align}
\label{eq:kmeans_trace}
\sum_{i=1}^k \sum_{\bv{x}_{j}\in C_i}\|\phi(\bv{x}_{j}) - \bs{\mu}_i\|_\mathcal{F}^2 = \tr\left(\bv{K} - \bv{C}\bv{C}^T\bv{K}\bv{C}\bv{C}^T\right).
\end{align}
Informally, if we work with the kernalized data matrix $\bv{\Phi}$, \eqref{eq:kmeans_trace} is equivalent to
\begin{align*}
\|\bv{\Phi} - \bv{C}\bv{C}^T\bv{\Phi}\|_F^2.
\end{align*}
Regardless, it's clear that solving kernel $k$-means is equivalent to solving:
\begin{align}
\min_{\bv{C} \in \mathcal{S}} \tr\left(\bv{K} - \bv{C}\bv{C}^T\bv{K}\bv{C}\bv{C}^T\right)
\end{align}
where $\mathcal{S}$ is the set of all rank $k$ cluster indicator matrices. From this formulation, we easily obtain:
\begin{theorem}[Kernel $k$-means Approximation Bound]
\label{thm:app_kmeans}
Let $\bv{\tilde K}$ be computed by RLS-Nystr\"{o}m with $\lambda  = \frac{\epsilon}{k}\sum_{i=k+1}^n \sigma_i(\bv{K})$ and $\delta \in (0,1/8)$.
Let $\bv{\tilde C}^*$ be the optimal cluster indicator matrix for $\bv{\tilde K}$ and let $\bv{\tilde C}$ be an approximately optimal cluster indicator matrix satisfying:
\begin{align*}
\tr\left(\bv{\tilde K} - \bv{\tilde C}\bv{\tilde C}^T\bv{\tilde K}\bv{\tilde C}\bv{\tilde C}^T\right) \leq (1+\gamma)\tr\left(\bv{\tilde K} - \bv{\tilde C}^*\bv{\tilde C}^{*T}\bv{\tilde K}\bv{\tilde C}^*\bv{\tilde C}^{*T}\right).
\end{align*}
Then, if $\bv{C}^*$ is the optimal cluster indicator matrix for $\bv{K}$:
\begin{align*}
\tr\left(\bv{K} - \bv{\tilde C}\bv{\tilde C}^T\bv{K}\bv{\tilde C}\bv{\tilde C}^T\right) \leq (1+\gamma)(1+\epsilon)\tr\left(\bv{K} - \bv{C}^*\bv{C}^{*T}\bv{K}\bv{C}^*\bv{C}^{*T}\right)
\end{align*}
By Theorem \ref{thm:main_algo_k}, Algorithm \ref{halvingK} can compute $\bv{\tilde K}$ with $O(n s)$ kernel evaluations and $O(n s^2 )$ computation time, with $s = O\left(\frac{k}{\epsilon}\log\frac{k}{\delta\epsilon}\right)$.
\end{theorem}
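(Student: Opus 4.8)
The plan is to obtain Theorem~\ref{thm:app_kmeans} directly from the projection-cost preservation guarantee of Theorem~\ref{pcpTheorem}, via the standard template showing that a projection-cost preserving sketch suffices for constrained low-rank approximation problems such as $k$-means. First I would invoke the reformulation \eqref{eq:kmeans_trace}: for any rank-$k$ cluster indicator matrix $\bv{C}$, the matrix $\bv{C}\bv{C}^T$ is a rank-$k$ orthogonal projection and the kernel $k$-means cost of the clustering $C$ with respect to a kernel matrix $\bv{M}$ equals $\tr(\bv{M} - \bv{C}\bv{C}^T\bv{M}\bv{C}\bv{C}^T)$. Thus the three quantities appearing in the theorem are exactly the $k$-means costs of $\bv{\tilde C}$, $\bv{\tilde C}^*$, and $\bv{C}^*$, measured against $\bv{K}$ or $\bv{\tilde K}$. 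Since $\lambda = \frac{\epsilon}{k}\sum_{i=k+1}^n\sigma_i(\bv{K})$ is exactly the parameter required by Theorem~\ref{pcpTheorem}, that theorem supplies a single additive constant $c \ge 0$, independent of the projection, with
$$\tr(\bv{K} - \bv{X}\bv{K}\bv{X}) \le \tr(\bv{\tilde K} - \bv{X}\bv{\tilde K}\bv{X}) + c \le (1+\epsilon)\tr(\bv{K} - \bv{X}\bv{K}\bv{X})$$
for every rank-$k$ orthogonal projection $\bv{X}$.

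The argument is then a four-step sandwich. (i) Apply the left inequality above with $\bv{X} = \bv{\tilde C}\bv{\tilde C}^T$ to pass from the cost of $\bv{\tilde C}$ against $\bv{K}$ to its cost against $\bv{\tilde K}$ plus $c$. (ii) Use the hypothesis that $\bv{\tilde C}$ is a $(1+\gamma)$-approximate minimizer for $\bv{\tilde K}$; since $c \ge 0$ and $\gamma \ge 0$ we may absorb the constant, writing $(1+\gamma)\tr(\bv{\tilde K} - \bv{\tilde C}^*\bv{\tilde C}^{*T}\bv{\tilde K}\bv{\tilde C}^*\bv{\tilde C}^{*T}) + c \le (1+\gamma)\bigl(\tr(\bv{\tilde K} - \bv{\tilde C}^*\bv{\tilde C}^{*T}\bv{\tilde K}\bv{\tilde C}^*\bv{\tilde C}^{*T}) + c\bigr)$. (iii) Use that $\bv{\tilde C}^*$ is optimal for $\bv{\tilde K}$ and that $\bv{C}^*$ is itself a feasible rank-$k$ cluster indicator matrix, so $\tr(\bv{\tilde K} - \bv{\tilde C}^*\bv{\tilde C}^{*T}\bv{\tilde K}\bv{\tilde C}^*\bv{\tilde C}^{*T}) + c \le \tr(\bv{\tilde K} - \bv{C}^*\bv{C}^{*T}\bv{\tilde K}\bv{C}^*\bv{C}^{*T}) + c$. (iv) Apply the right inequality of the displayed bound with $\bv{X} = \bv{C}^*\bv{C}^{*T}$ to conclude $\tr(\bv{\tilde K} - \bv{C}^*\bv{C}^{*T}\bv{\tilde K}\bv{C}^*\bv{C}^{*T}) + c \le (1+\epsilon)\tr(\bv{K} - \bv{C}^*\bv{C}^{*T}\bv{K}\bv{C}^*\bv{C}^{*T})$. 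Chaining (i)--(iv) yields $\tr(\bv{K} - \bv{\tilde C}\bv{\tilde C}^T\bv{K}\bv{\tilde C}\bv{\tilde C}^T) \le (1+\gamma)(1+\epsilon)\tr(\bv{K} - \bv{C}^*\bv{C}^{*T}\bv{K}\bv{C}^*\bv{C}^{*T})$, which is the claimed bound.

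The only place requiring care — and the main (mild) obstacle — is step (ii): the additive slack $c$ from projection-cost preservation is not scale invariant, so one must be deliberate about multiplying it by $(1+\gamma)$ rather than leaving it outside, and must verify that the \emph{same} $c$ produced by Theorem~\ref{pcpTheorem} is valid simultaneously for both projections $\bv{\tilde C}\bv{\tilde C}^T$ and $\bv{C}^*\bv{C}^{*T}$ (it is, since $c = \tr(\bv{K}) - \tr(\bv{\tilde K})$ depends only on $\bv{K}$ and $\bv{\tilde K}$, not on the projection). Everything else is routine monotonicity of the trace under $\bv{\tilde K} \preceq \bv{K}$ together with the optimality definitions. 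Finally, the runtime and sample-size claims are immediate from Theorem~\ref{thm:main_algo_k} applied with this choice of $\lambda$, which gives $s = O\!\left(\frac{k}{\epsilon}\log\frac{k}{\delta\epsilon}\right)$ and hence $O(ns)$ kernel evaluations and $O(ns^2)$ additional runtime.
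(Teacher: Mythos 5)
Your proposal is correct and follows essentially the same four-step sandwich as the paper's own proof: projection-cost preservation (Theorem \ref{pcpTheorem}) applied to $\bv{\tilde C}\bv{\tilde C}^T$, the $(1+\gamma)$-approximation hypothesis, optimality of $\bv{\tilde C}^*$ for $\bv{\tilde K}$, and projection-cost preservation applied to $\bv{C}^*\bv{C}^{*T}$. In fact your bookkeeping of the additive constant $c$ is slightly more careful than the paper's: you correctly carry the factor $(1+\gamma)$ on $c$ into the final application of Theorem \ref{pcpTheorem}, whereas the paper's chain includes an intermediate step replacing $(1+\gamma)c$ by $c$ ``since $c\geq 0$,'' which as written goes in the wrong direction and is unnecessary.
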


In other words, if we find an optimal set of clusters for our approximate kernel matrix, those clusters will provide a $(1+\epsilon)$ approximation to the original kernel $k$-means problem. Furthermore, if we only solve the kernel $k$-means problem approximately on $\bv{\tilde K}$, i.e. with some approximation factor $(1+\gamma)$, we will do nearly as well on the original problem. This flexibility allows for the use of $k$-means approximation algorithms (since the problem is NP-hard to solve exactly).
\begin{proof} The proof is almost immediate from our bounds on RLS-Nystr\"{o}m:
\begin{align*}
\tr\left(\bv{K} - \bv{\tilde C}\bv{\tilde C}^T\bv{K}\bv{\tilde C}\bv{\tilde C}^T\right) &\leq \tr\left(\bv{\tilde K} - \bv{\tilde C}\bv{\tilde C}^T\bv{\tilde K}\bv{\tilde C}\bv{\tilde C}^T\right) + c&\text{(Theorem \ref{pcpTheorem}})
\\&\leq 
(1+\gamma)\tr\left(\bv{\tilde K} - \bv{\tilde C}^*\bv{\tilde C}^{*T}\bv{\tilde K}\bv{\tilde C}^*\bv{\tilde C}^{*T}\right) + (1+\gamma)c  &\text{(by assumption)}
\\&\leq 
(1+\gamma)\tr\left(\bv{\tilde K} - \bv{C}^*\bv{C}^{*T}\bv{\tilde K}\bv{C}^*\bv{C}^{*T}\right) +  (1+\gamma)c&\text{(optimality of $\bv{\tilde C}^*$ )}
\\&\leq 
(1+\gamma)\tr\left(\bv{\tilde K} - \bv{C}^*\bv{C}^{*T}\bv{\tilde K}\bv{C}^*\bv{C}^{*T}\right) + c&\text{(since $c \geq 0$)}
\\&\leq 
(1+\gamma)(1+\epsilon)\tr\left(\bv{K} - \bv{\tilde C}^*\bv{C}^{*T}\bv{K}\bv{C}^*\bv{C}^{*T}\right). &\text{(Theorem \ref{pcpTheorem})}
\end{align*}
\end{proof}

\subsection{Kernel principal component analysis}
We consider the standard formulation of kernel principal component analysis (PCA) presented in \cite{Scholkopf:1999}.  The goal is to find principal components \emph{in the kernel space} $\mathcal{F}$ that capture as much variance in the kernelized data as possible. In particular, if we work informally with the kernelized data matrix $\bs{\Phi}$, we want to find a matrix $\bv{Z}_k$ containing $k$ orthonormal columns such that:
\begin{align*}
\bs{\Phi}\bs{\Phi}^T - (\bs{\Phi}\bv{Z}_k\bv{Z}_k^T)(\bs{\Phi}\bv{Z}_k\bv{Z}_k^T)^T
\end{align*} 
is as small as possible. In other words, if we project $\bs{\Phi}$'s rows to the $k$ dimensional subspace spanned by $\bv{V}_k$'s columns and then recompute our kernel, we want the approximate kernel to be close to the original. %Notice that $\bs{\Phi}\bs{\Phi}^T = \bv{K}$ and $(\bs{\Phi}\bv{Z}_k\bv{Z}_k^T)(\bs{\Phi}\bv{Z}_k\bv{Z}_k^T)^T = \bs{\Phi}\bv{Z}_k\bv{Z}_k^T\bs{\Phi}^T$ so this problem is similar to Nystr{\"o}m approximation except that we strictly constrain the rank of our projection to $k$.

We focus in particular on minimizing PCA error according to the metric:
\begin{align}
\label{kernelpcaproblem}
\tr\left(\bs{\Phi}\bs{\Phi}^T - (\bs{\Phi}\bv{Z}_k\bv{Z}_k^T)(\bs{\Phi}\bv{Z}_k\bv{Z}_k^T)^T\right) = \|\bs{\Phi} - \bs{\Phi}\bv{Z}_k\bv{Z}_k^T\|_F^2,
\end{align}
which is standard in the literature \cite{TCS060, avron2014subspace}. As with $f$ in kernel ridge regression, to solve this problem we cannot write down $\bv{Z}_k$ explicitly for most kernel functions. However, the optimal $\bv{Z}_k$ always lies in the column span of $\bs{\Phi}^T$, so we can implicitly represent it by constructing a matrix $\bv{X} \in \R^{n\times k}$ such that $\bs{\Phi}^T\bv{X} = \bv{Z}_k$. It is then easy to compute the projection of any new data vector onto the span of $\bv{Z}_k$ (the typical objective of principal component analysis) since we can multiply by $\bs{\Phi}^T\bv{X}$ using the kernel function.

By the Eckart-Young theorem the optimal $\bv{Z}_k$ contains the top $k$ row principal components of $\bs{\Phi}$. Accordingly, if we write the singular value decomposition $\bs{\Phi} = \bv{U}\bs{\Sigma}\bv{V}^T$ we want to set $\bv{X} = \bv{U}_k\bs{\Sigma}_k^{-1}$, which can be computed from the SVD of $\bv{K} = \bv{U}\bs{\Sigma}^2\bv{U}^T$. $\bv{Z}_k$ will equal $\bv{V}_k$ and \eqref{kernelpcaproblem} reduces to:
\begin{align}
\tr(\bv{K} - \bs{\Phi}\bv{V}_k\bv{V}_k^T\bs{\Phi}) &= \tr(\bv{K} - \bv{V}_k\bv{V}_k^T\bv{K}) & \text{(cyclic property)}\nonumber\\ \label{kpcaopt}
&= \sum_{i=k+1}^n \sigma_i(\bv{K}).
\end{align}

\begin{theorem}[Kernel PCA Approximation Bound]
\label{thm:app_pca}
Let $\bv{\tilde K}$ be computed by RLS-Nystr\"{o}m with $\lambda  = \frac{\epsilon}{k}\sum_{i=k+1}^n \sigma_i(\bv{K})$ and $\delta \in (0,1/8)$. From $\bv{\tilde K}$ we can compute a matrix $\bv{X}\in \R^{s \times k}$ such that if we set $\bv{Z} = \bs{\Phi}^T\bv{S}\bv{X}$, with probability $1-\delta$:
\begin{align*}
\|\bs{\Phi} - \bs{\Phi}\bv{Z}\bv{Z}^T\|_F^2 \leq (1+2\epsilon) \|\bs{\Phi} - \bs{\Phi}\bv{V}_k\bv{V}_k^T\|_F^2 = (1+2\epsilon)\sum_{i=k+1}^n \sigma_i(\bv{K}) .
\end{align*}
By Theorem \ref{thm:main_algo_k}, Algorithm \ref{halvingK} can compute $\bv{\tilde K}$ with $O(n s)$ kernel evaluations and $O(n s^2 )$ computation time, with $s = O\left(\frac{k}{\epsilon}\log\frac{k}{\delta\epsilon}\right)$.
\end{theorem}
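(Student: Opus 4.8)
The plan is to reduce the claim to the projection‑cost preservation guarantee of Theorem \ref{pcpTheorem}, in the same spirit as the $k$-means bound (Theorem \ref{thm:app_kmeans}), once the PCA directions recoverable from $\bv{\tilde K}$ are correctly parametrized. Throughout one works formally with a matrix $\bv{B}\in\R^{n\times n}$, $\bv{B}\bv{B}^T=\bv{K}$, in place of $\bs{\Phi}$, exactly as elsewhere in the paper.

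First I would fix the candidate subspace. Since $\bv{\tilde K}=\bv{\tilde\Phi}\bv{\tilde\Phi}^T$ with $\bv{\tilde\Phi}=\bs{\Phi}\bv{P}_\bv{S}$, every right singular vector of $\bv{\tilde\Phi}$ lies in $\colspan(\bs{\Phi}^T\bv{S})=\text{range}(\bv{P}_\bv{S})$, so the matrix $\bv{Z}$ whose columns are the top‑$k$ right singular vectors of $\bv{\tilde\Phi}$ can be written $\bv{Z}=\bs{\Phi}^T\bv{S}\bv{X}$ for some $\bv{X}\in\R^{s\times k}$; moreover $\bv{Z}$ has orthonormal columns, so $\bv{Z}\bv{Z}^T$ is the orthogonal projection onto this subspace. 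I would then note that $\bv{X}$ is computable from the Nystr\"om factors alone: set $\bv{M}=\bv{K}\bv{S}(\bv{S}^T\bv{K}\bv{S})^{+/2}$, so $\bv{\tilde K}=\bv{M}\bv{M}^T$ and $\bv{\tilde\Phi}=\bv{M}\bv{R}^T$ where $\bv{R}=\bs{\Phi}^T\bv{S}(\bv{S}^T\bv{K}\bv{S})^{+/2}$ satisfies $\bv{R}\bv{R}^T=\bv{P}_\bv{S}$; take the top‑$k$ right singular vectors $\bv{W}$ of $\bv{M}$ via the $s\times s$ Gram matrix $\bv{M}^T\bv{M}$ (which only needs $(\bv{K}\bv{S})^T(\bv{K}\bv{S})$, formed in $O(ns^2)$ time), and put $\bv{X}=(\bv{S}^T\bv{K}\bv{S})^{+/2}\bv{W}$. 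As usual $\bv{Z}$ need never be formed; downstream one only multiplies new kernel‑space vectors $\bs{\phi}(\bv{x})^T\bv{S}$ by $\bv{X}$.

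Next I would convert the feature‑space error into a trace in $\bv{\tilde K}$. The key identity is $\bs{\Phi}\bv{Z}=\bs{\Phi}\bv{P}_\bv{S}\bv{Z}=\bv{\tilde\Phi}\bv{Z}$, since $\bv{Z}$ lies in the range of $\bv{P}_\bv{S}$. As $\bv{Z}$ is orthonormal, $\|\bs{\Phi}-\bs{\Phi}\bv{Z}\bv{Z}^T\|_F^2=\tr(\bv{K})-\|\bs{\Phi}\bv{Z}\|_F^2=\tr(\bv{K})-\|\bv{\tilde\Phi}\bv{Z}\|_F^2$, and since $\bv{Z}$ holds the top‑$k$ right singular vectors of $\bv{\tilde\Phi}$ (Eckart--Young), $\|\bv{\tilde\Phi}\bv{Z}\|_F^2=\sum_{i=1}^k\sigma_i(\bv{\tilde K})$. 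Writing $c=\tr(\bv{K})-\tr(\bv{\tilde K})\ge 0$ (as in the proof of Theorem \ref{pcpTheorem}, using $\bv{\tilde K}\preceq\bv{K}$), this gives $\|\bs{\Phi}-\bs{\Phi}\bv{Z}\bv{Z}^T\|_F^2=c+\sum_{i>k}\sigma_i(\bv{\tilde K})=c+\tr(\bv{\tilde K}-\bv{Y}^*\bv{\tilde K}\bv{Y}^*)$, where $\bv{Y}^*\in\R^{n\times n}$ is the rank‑$k$ orthogonal projection onto the top eigenspace of $\bv{\tilde K}$.

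Finally I would invoke Theorem \ref{pcpTheorem} with $\lambda=\frac{\epsilon}{k}\sum_{i>k}\sigma_i(\bv{K})$ and failure probability $\delta$. Let $\bv{Y}$ be the rank‑$k$ projection onto the top eigenspace of $\bv{K}$, which by Eckart--Young minimizes $\tr(\bv{K}-\bv{X}\bv{K}\bv{X})$ over rank‑$k$ projections, with value $\sum_{i>k}\sigma_i(\bv{K})$. Using optimality of $\bv{Y}^*$ for $\bv{\tilde K}$ and then the upper bound of \eqref{actualPCPBound} applied to the fixed projection $\bv{Y}$,
\[
c+\tr(\bv{\tilde K}-\bv{Y}^*\bv{\tilde K}\bv{Y}^*)\;\le\;c+\tr(\bv{\tilde K}-\bv{Y}\bv{\tilde K}\bv{Y})\;\le\;(1+\epsilon)\tr(\bv{K}-\bv{Y}\bv{K}\bv{Y})\;=\;(1+\epsilon)\sum_{i>k}\sigma_i(\bv{K}),
\]
which is $\le(1+2\epsilon)\sum_{i>k}\sigma_i(\bv{K})$, as claimed; the sample‑size bound $s=O\!\left(\frac{k}{\epsilon}\log\frac{k}{\delta\epsilon}\right)$ and the $O(ns)$‑evaluation, $O(ns^2)$‑time cost are inherited from Theorem \ref{pcpTheorem} and Theorem \ref{thm:main_algo_k}. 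I expect the only real obstacle to be the first step — verifying that the best recoverable PCA subspace is precisely the top‑$k$ right singular space of $\bv{\tilde\Phi}$, that it has the form $\bs{\Phi}^T\bv{S}\bv{X}$, and that $\bv{X}$ is extractable from the Nystr\"om factors — since once the error is written as $c+\tr(\bv{\tilde K}-\bv{Y}^*\bv{\tilde K}\bv{Y}^*)$ the bound is a one‑line consequence of projection‑cost preservation.
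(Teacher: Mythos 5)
Your proof is correct, and it takes a cleaner route than the paper's while resting on the same two pillars: the projection-cost preservation guarantee of Theorem \ref{pcpTheorem} and the observation that the recoverable subspace lies in the range of $\bv{P}_\bv{S}$. The paper reduces kernel PCA to an unconstrained low-rank approximation problem via the reparametrization $\bv{Z}_k = \bs{\Phi}^T\bv{K}^{-1/2}\bv{W}$, applies projection-cost preservation to get a $(1+\epsilon)$ bound for the \emph{uncomputable} candidate $\bs{\Phi}^T\bv{K}^{-1/2}\bv{\tilde W}$, and then substitutes the computable candidate $\bv{P}_\bv{S}\bs{\Phi}^T\bv{\tilde K}^{-1/2}\bv{\tilde W}$, paying an extra additive $\epsilon\sum_{i>k}\sigma_i(\bv{K})$ via the spectral bound $\bv{K}-\bv{\tilde K}\preceq\epsilon\lambda\bv{I}$ --- hence the $(1+2\epsilon)$ in the statement. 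Your candidate subspace is in fact the same one (since $\bv{P}_\bv{S}\bs{\Phi}^T = \bv{\tilde\Phi}^T$, the paper's candidate $\bv{\tilde\Phi}^T\bv{\tilde K}^{-1/2}\bv{\tilde W}$ is exactly the top-$k$ right singular space of $\bv{\tilde\Phi}$), but by evaluating its cost exactly as $c + \sum_{i>k}\sigma_i(\bv{\tilde K}) = c+\tr(\bv{\tilde K}-\bv{Y}^*\bv{\tilde K}\bv{Y}^*)$ and invoking \eqref{actualPCPBound} once for the fixed projection onto the top eigenspace of $\bv{K}$, you obtain the sharper factor $(1+\epsilon)$ and avoid the $\bv{K}^{-1/2}$ detour entirely (which, as a bonus, sidesteps the paper's slight abuse of notation in writing inverse square roots of singular matrices). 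Your explicit construction of $\bv{X}$ from the Nystr\"om factors via $\bv{M}=\bv{K}\bv{S}(\bv{S}^T\bv{K}\bv{S})^{+/2}$ is also correct and fits the stated $O(ns^2)$ budget. The only caveats --- shared with the paper's own proof --- are the degenerate case $\rank(\bv{\tilde K})<k$, where the ``top-$k$ right singular vectors'' must be padded arbitrarily within the range of $\bv{P}_\bv{S}$, and the informal treatment of $\bs{\Phi}$, which should formally be replaced by a square root $\bv{B}$ as elsewhere in the paper.
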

Note that $\bv{S}$ is the sampling matrix used to construct $\bv{\tilde K}$. $\bv{Z} = \bs{\Phi}^T\bv{S}\bv{X}$ can be applied to vectors (in order to project onto the approximate low-rank subspace) using only $s$ kernel evaluations.
\begin{proof}
Re-parameterizing $\bv{Z}_k = \bs{\Phi}^T\bv{Y}$, we see that minimizing \eqref{kernelpcaproblem} is equivalent to minimizing
\begin{align*}
\tr(\bv{K} - \bv{K}\bv{Y}\bv{Y}^T\bv{K})
\end{align*}
over $\bv{Y} \in \R^{n\times k}$ such that $(\bs{\Phi}^T\bv{Y})^T\bs{\Phi}^T\bv{Y} = \bv{Y}^T\bv{K}\bv{Y}  = \bv{I}$. Then we re-parameterize again by writing $\bv{Y} = \bv{K}^{-1/2}\bv{W}$ where $\bv{W}$ is an $n\times k$ matrix with orthonormal columns. Using linearity and cyclic property of the trace, we can write:
\begin{align*}
\tr(\bv{K} - \bv{K}\bv{Y}\bv{Y}^T\bv{K}) = \tr(\bv{K}) - \tr(\bv{Y}^T\bv{K}\bv{K}\bv{Y})
= \tr(\bv{K}) - \tr(\bv{W}^T\bv{K}\bv{W}) 
= \tr(\bv{K}) - \tr(\bv{W}\bv{W}^T\bv{K}\bv{W}\bv{W}^T).
\end{align*}
So, we have reduced our problem to a low-rank approximation problem that looks exactly like the $k$-means problem from Section \ref{sec:kmeans}, except without constraints.

Accordingly, following the same argument as Theorem \ref{thm:app_kmeans}, if we find $\bv{\tilde W}$ minimizing:
\begin{align*}
\tr(\bv{\tilde K}) - \tr(\bv{\tilde W}\bv{\tilde W}^T\bv{\tilde K}\bv{\tilde W}\bv{\tilde W}^T),
\end{align*}
then:
\begin{align*}
\tr(\bv{K}) - \tr(\bv{\tilde W}\bv{\tilde W}^T\bv{K}\bv{\tilde W}\bv{\tilde W}^T) \leq (1+\epsilon) \left[\min_{\bv{W}}\tr(\bv{K}) - \tr(\bv{W}\bv{W}^T\bv{K}\bv{W}\bv{W}^T)\right] = (1+\epsilon) \sum_{i=k+1}^n \sigma_i(\bv{K}) .
\end{align*}
$\bv{\tilde W}$ can be taken to equal the top $k$ eigenvectors of $\bv{\tilde K}$, which can be found in $O(n\cdot s^2)$ time. 

However, we are not quite done. Thanks to our re-parameterization this bound guarantees that $\bs{\Phi}^T\bv{K}^{-1/2}\bv{\tilde W}$ is a good set of approximate kernel principal components for $\bv{\Phi}$. Unfortunately, $\bs{\Phi}^T\bv{K}^{-1/2}\bv{\tilde W}$ cannot be represented efficiently (it requires computing $\bv{K}^{-1/2}$) and projecting new vectors to $\bs{\Phi}^T\bv{K}^{-1/2}\bv{\tilde W}$ would require $n$ kernel evaluations to multiply by $\bs{\Phi}^T$.

Instead, recalling the definition of $\bv{P}_\bv{S} = \bv{\Phi}^T\bv{S} (\bv{S}^T\bv{K}^T \bv{S})^+ \bv{S}^T\bv{\Phi}$ from Section \ref{nystromPrelim}, we suggest using the approximate principal components:
\begin{align*}
\bv{P}_\bv{S}\bs{\Phi}^T\bv{\tilde K}^{-1/2}\bv{\tilde W}.
\end{align*}
Clearly $\bv{P}_\bv{S}\bs{\Phi}^T\bv{\tilde K}^{-1/2}\bv{\tilde W}$ is orthonormal because: 
\begin{align*}
(\bv{P}_\bv{S}\bs{\Phi}^T\bv{\tilde K}^{-1/2}\bv{\tilde W})^T\bv{P}_\bv{S}\bs{\Phi}^T\bv{\tilde K}^{-1/2}\bv{\tilde W} &= \bv{\tilde W}^T\bv{\tilde K}^{-1/2}\bs{\Phi}^T\bv{P}_\bv{S}\bs{\Phi}\bv{\tilde K}^{-1/2}\bv{\tilde W} \\
&= \bv{\tilde W}^T\bv{I}\bv{\tilde W} = \bv{I}.
\end{align*}
We will argue that it is offers nearly as a good of a solution as $\bs{\Phi}^T\bv{K}^{-1/2}\bv{\tilde W}$. Specifically, substituting into \eqref{kernelpcaproblem} gives a value of:
\begin{align*}
\tr(\bv{K} - \bs{\Phi}\bv{P}_\bv{S}\bs{\Phi}^T\bv{\tilde K}^{-1/2}\bv{\tilde W}\bv{\tilde W}^T\bv{\tilde K}^{-1/2}\bs{\Phi}\bv{P}_\bv{S}\bs{\Phi}^T) &= \tr(\bv{K}) - \tr(\bv{\tilde W}\bv{\tilde W}^T\bv{\tilde K}^{-1/2}\bs{\Phi}\bv{P}_\bv{S}\bs{\Phi}^T\bs{\Phi}\bv{P}_\bv{S}\bs{\Phi}^T\bv{\tilde K}^{-1/2}) \\
&= \tr(\bv{K}) - \tr(\bv{\tilde W}\bv{\tilde W}^T\bv{\tilde K}^{-1/2}\bv{\tilde K}^2\bv{\tilde K}^{-1/2}) \\
&= \tr(\bv{K}) - \tr(\bv{\tilde W}\bv{\tilde W}^T\bv{\tilde K}).\\ 
\end{align*}
Compare this to the value obtained from $\bs{\Phi}^T\bv{K}^{-1/2}\bv{\tilde W}$:
\begin{align}
\label{extra_error_bound}
&\left[\tr(\bv{K}) - \tr(\bv{\tilde W}\bv{\tilde W}^T\bv{K}\bv{\tilde W}\bv{\tilde W}^T)\right] - \left[\tr(\bv{K}) - \tr(\bv{\tilde W}\bv{\tilde W}^T\bv{\tilde K}\bv{\tilde W}\bv{\tilde W}^T)\right] \nonumber \\
&= \tr\left(\bv{\tilde W}\bv{\tilde W}^T(\bv{K}-\bv{\tilde K})\right) 
= \tr\left(\bv{\tilde W}^T(\bv{K}-\bv{\tilde K})\bv{\tilde W}\right) 
= \sum_{i=1}^k \bv{\tilde w}_i^T(\bv{K}-\bv{\tilde K})\bv{\tilde w}_i
\leq k \frac{\epsilon}{k}\sum_{i=k+1}^n \sigma_i(\bv{K}).
\end{align}
The last step follows from Theorem \ref{additiveErrorThm} which guarantees that $(\bv{K}-\bv{\tilde K})\preceq \epsilon \lambda \bv{I}$. Recall that we set $\lambda = \frac{\epsilon}{k}\sum_{i=k+1}^n \sigma_i(\bv{K})$ and each column $\bv{\tilde w}_i$ of $\bv{\tilde W}$ has unit norm.

We conclude that the cost obtained by $\bv{P}_\bv{S}\bs{\Phi}^T\bv{\tilde K}^{-1/2}\bv{\tilde W}$ is bounded by:
\begin{align*}
\tr(\bv{K} - \bs{\Phi}\bv{P}_\bv{S}\bs{\Phi}^T\bv{\tilde K}^{-1/2}\bv{\tilde W}\bv{\tilde W}^T\bv{\tilde K}^{-1/2}\bs{\Phi}\bv{P}_\bv{S}\bs{\Phi}^T) &\leq \tr(\bv{K}) - \tr(\bv{\tilde W}\bv{\tilde W}^T\bv{K}\bv{\tilde W}\bv{\tilde W}^T) + \epsilon\sum_{i=k+1}^n \sigma_i(\bv{K}) \\
&\leq (1+2\epsilon)\sum_{i=k+1}^n \sigma_i(\bv{K}).
\end{align*}
This gives the result. Notice that $\bv{P}_\bv{S}\bs{\Phi}^T\bv{\tilde K}^{-1/2}\bv{\tilde W} = \bv{\Phi}^T\bv{S} (\bv{S}^T\bv{K}^T \bv{S})^+ \bv{S}^T\bv{\Phi}\bs{\Phi}^T\bv{\tilde K}^{-1/2}\bv{\tilde W}$ so, if we set:
\begin{align*}
\bv{X} = (\bv{S}^T\bv{K}^T \bv{S})^+ \bv{S}^T\bv{\tilde K}^{1/2}\bv{\tilde W},
\end{align*}
our solution can be represented as $\bv{Z} = \bv{\Phi}^T\bv{S}\bv{X}$ as desired.
\end{proof}

\subsection{Kernel canonical correlation analysis}
We briefly discuss a final application to canonical correlation analysis (CCA) that follows from applying our spectral  approximation guarantee of Theorem \ref{additiveErrorThm} to recent work in \cite{wang2016column}.

Consider $n$ pairs of input points $(\bv{x}_1,\bv{y}_1),...,(\bv{x}_n, \bv{y}_n) \in (\mathcal{X},\mathcal{Y})$ along with two positive semidefinite kernels, $K_x: \mathcal{X} \times \mathcal{X} \rightarrow \mathbb{R}$ and $K_y: \mathcal{Y} \times \mathcal{Y} \rightarrow \mathbb{R}$. Let $\mathcal{F}_x$ and $\mathcal{F}_y$ and $\phi_x: \mathcal{X} \rightarrow \mathcal{F}_x$ and $\phi_y: \mathcal{Y} \rightarrow \mathcal{F}_y$ be the Hilbert spaces and feature maps associated with these kernels. Let $\bv{\Phi}_x$ and $\bv{\Phi}_y$ denote the kernelized $\mathcal{X}$ and $\mathcal{Y}$ inputs respectively and $\bv{K}_x$ and $\bv{K}_y$ denote the associated kernel matrices. %Finally, let $\bar \phi_{x}(\bv{x}_i) = \phi_{x}(\bv{x}_i)  - \frac{1}{n}\sum_{j=1}^n \phi(\bv{x}_j)$ and $\bar \phi_{y}(\bv{y}_i) = \phi_{y}(\bv{y}_i)  - \frac{1}{n}\sum_{j=1}^n \phi(\bv{y}_j)$ denote the centered feature maps, and $\bv{\bar \Phi}_x$, $\bv{\bar \Phi}_y$,  $\bv{\bar K}_x$ and $\bv{\bar K}_y$ denote the associated centered kernelized input sets and kernel matrices.

We consider standard regularized kernel CCA, following the presentation in \cite{wang2016column}. The goal is to compute coefficient vectors $\bs{\alpha}^x$ and $\bs{\alpha}^y$ such that $\bv{f}_x^* = \sum_{i=1}^n \bs{\alpha}^x_i \phi_x(\bv{x}_i)$ and $\bv{f}_y^* = \sum_{i=1}^n \bs{\alpha}^y_i \phi_y(\bv{y}_i)$ satisfy:
\begin{align*}
(\bv{f}_x^*,\bv{f}_y^*) &= \argmax_{\bv{f}_x \in \mathcal{F}_x, \bv{f}_y \in \mathcal{F}_y} \bv{f}_x^T\bv{ \Phi}_x^T \bv{ \Phi}_y \bv{f}_y^*\\
&\text{subject to}\\
\bv{f}_x^T\bv{ \Phi}^T_x \bv{ \Phi}_x \bv{f}_x &+ \lambda_x \norm{\bv{f}_x}_{\mathcal{F}_x}^2 = 1\\
\bv{f}_y^T\bv{ \Phi}_y^T \bv{ \Phi}_y \bv{f}_y &+ \lambda_y \norm{\bv{f}_y}_{\mathcal{F}_y}^2 = 1
\end{align*}
In \cite{wang2016column}, the kernelized points are centered to their means. For simplicity we ignore centering, but note that \cite{wang2016column} shows how bounds for the uncentered problem carry over to the centered one.

It can be shown that $\bs{\alpha}^x = (\bv{ K}_x + \lambda_x \bv{I})^{-1} \bs{\beta}^x$ and $\bs{\alpha}^y = (\bv {K}_y + \lambda_y \bv{I})^{-1} \bs{\beta}^y$ where $\bs{\beta}^x$ and $\bs{\beta}^y$ are the top left and right singular vectors respectively of 
\begin{align*}
\bv{T} = (\bv{ K}_x + \lambda_x \bv{I})^{-1} \bv{ K}_x \bv{ K}_y (\bv{ K}_y + \lambda_y \bv{I})^{-1}.
\end{align*}
The optimum value of the above program will be equal to $\sigma_1(\bv{T})$.

\cite{wang2016column} shows that if $\bv{\tilde K}_x$ and $\bv{\tilde K}_y$ satisfy:
\begin{align*}
\bv{\tilde K}_x \preceq \bv{K}_x \preceq \bv{\tilde K}_x + \epsilon\lambda_x\bv{I} \\
\bv{\tilde K}_y \preceq \bv{K}_y \preceq \bv{\tilde K}_y + \epsilon\lambda_x\bv{I}
\end{align*}
then if $\bs{\tilde \alpha}^x$ and $\bs{\tilde \alpha}^y$ are computed using these approximations, the achieved objective function value will be within  $\epsilon$ of optimal (see their Lemma 1 and Theorem 1). So we have:

%We show how to recover this bound using our additive error kernel embedding bound on RLS-Nystr{\"o}m. Note that \cite{wang2016column} also gives bounds on the out of sample performance of the approximate kernel CCA, however we just focus on the in sample bounds.

\begin{theorem}[Kernel CCA Approximation Bound]
\label{thm:app_cca}
Suppose $\bv{\tilde K}_x$ and $\bv{\tilde K}_y$ are computed by RLS-Nystr\"{o}m with approximation parameters $\epsilon\lambda_x$ and $\epsilon\lambda_y$ and failure probability $\delta \in (0,1/8)$. If we solve for $ \bs{\tilde \alpha}^x$ and $\bs{\tilde \alpha}^y$, the approximate canonical correlation will be within an additive $\epsilon$ of the true canonical correlation $\sigma_1(\bv{T})$.

By Theorem \ref{thm:main_algo_theorem}, Algorithm \ref{halvingFixed} can compute $\bv{\tilde K}_x$ and $\bv{\tilde K}_y$ with $O(ns_x + ns_y)$ kernel evaluations and $O(ns_x^2 + ns_y^2)$ computation time, with $s_x = O\left(\frac{d_{\text{eff}}^{\lambda_x}}{\epsilon}\log\frac{d_{\text{eff}}^{\lambda_x}}{\delta\epsilon}\right)$ and $s_y = O\left(\frac{d_{\text{eff}}^{\lambda_y}}{\epsilon}\log\frac{d_{\text{eff}}^{\lambda_y}}{\delta\epsilon}\right)$.
\end{theorem}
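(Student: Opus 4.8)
The plan is to obtain Theorem \ref{thm:app_cca} as a composition of three ingredients already available to us: the spectral approximation guarantee of RLS-Nystr{\"o}m, the black-box reduction of \cite{wang2016column}, and the efficiency bound of Theorem \ref{thm:main_algo_theorem}. First I would run \algoname{RecursiveRLS-Nystr{\"o}m} on $\bv{x}_1,\ldots,\bv{x}_n$ with kernel $K_x$ and ridge parameter $\epsilon\lambda_x$, and (independently) on $\bv{y}_1,\ldots,\bv{y}_n$ with kernel $K_y$ and ridge parameter $\epsilon\lambda_y$, in each case with failure probability $\delta/2$. By Theorem \ref{thm:main_algo_theorem} together with the argument of Corollary \ref{main_cor}, each run produces a sampling matrix whose associated Nystr{\"o}m approximation satisfies, with probability $1-\delta/2$,
$\bv{\tilde K}_x \preceq \bv{K}_x \preceq \bv{\tilde K}_x + \epsilon\lambda_x\bv{I}$ and $\bv{\tilde K}_y \preceq \bv{K}_y \preceq \bv{\tilde K}_y + \epsilon\lambda_y\bv{I}$; a union bound gives both simultaneously with probability $1-\delta$.

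Next I would invoke \cite{wang2016column} (their Lemma 1 and Theorem 1), whose hypotheses are exactly the two spectral sandwiches displayed in the excerpt. That result states that if $\bs{\tilde\alpha}^x = (\bv{\tilde K}_x + \lambda_x\bv{I})^{-1}\bs{\tilde\beta}^x$ and $\bs{\tilde\alpha}^y = (\bv{\tilde K}_y + \lambda_y\bv{I})^{-1}\bs{\tilde\beta}^y$ are formed from the top left/right singular vectors of the approximate operator $\bv{\tilde T} = (\bv{\tilde K}_x + \lambda_x\bv{I})^{-1}\bv{\tilde K}_x\bv{\tilde K}_y(\bv{\tilde K}_y + \lambda_y\bv{I})^{-1}$, then the objective value $\bv{f}_x^T\bv{\Phi}_x^T\bv{\Phi}_y\bv{f}_y$ achieved by the induced functions is within additive $\epsilon$ of the optimum $\sigma_1(\bv{T})$. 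Following \cite{wang2016column}, I would remark that their analysis is stated for mean-centered kernel matrices, and that the reduction from the uncentered to the centered problem is carried out there; since RLS-Nystr{\"o}m applies verbatim to the centered kernel matrices, this causes no difficulty. This gives the claimed additive-$\epsilon$ accuracy of the approximate canonical correlation.

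Finally, for the resource bounds I would apply Theorem \ref{thm:main_algo_theorem} to each of the two runs. The $x$-side run uses $O(ns_x)$ kernel evaluations and $O(ns_x^2)$ additional time with $s_x = O\!\left(d_{\text{eff}}^{\epsilon\lambda_x}(\bv{K}_x)\log(d_{\text{eff}}^{\epsilon\lambda_x}(\bv{K}_x)/\delta)\right)$; using $d_{\text{eff}}^{\epsilon\lambda_x} \le d_{\text{eff}}^{\lambda_x}/\epsilon$ (which follows from $(\bv{K}_x + \epsilon\lambda_x\bv{I})^{-1} \preceq \tfrac{1}{\epsilon}(\bv{K}_x + \lambda_x\bv{I})^{-1}$, as in the proof of Corollary \ref{main_cor}) gives $s_x = O\!\left(\frac{d_{\text{eff}}^{\lambda_x}}{\epsilon}\log\frac{d_{\text{eff}}^{\lambda_x}}{\delta\epsilon}\right)$, and symmetrically for $s_y$. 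Adding the two runs yields $O(ns_x + ns_y)$ kernel evaluations and $O(ns_x^2 + ns_y^2)$ additional computation time, as stated. I do not expect a serious obstacle: the only points needing care are the union bound over the two independent RLS-Nystr{\"o}m runs (hence the failure probability $\delta/2$ in each) and citing the centered-versus-uncentered reduction of \cite{wang2016column} accurately; both are routine.
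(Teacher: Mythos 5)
Your proposal is correct and follows essentially the same route as the paper: obtain the two spectral sandwiches $\bv{\tilde K}_x \preceq \bv{K}_x \preceq \bv{\tilde K}_x + \epsilon\lambda_x\bv{I}$ and $\bv{\tilde K}_y \preceq \bv{K}_y \preceq \bv{\tilde K}_y + \epsilon\lambda_y\bv{I}$ from Theorem \ref{thm:main_algo_theorem} and Corollary \ref{main_cor}, then invoke Lemma 1 and Theorem 1 of \cite{wang2016column} as a black box for the additive-$\epsilon$ correlation guarantee, with the sample-size bounds coming from $d_{\text{eff}}^{\epsilon\lambda} \le d_{\text{eff}}^{\lambda}/\epsilon$. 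Your explicit $\delta/2$ union bound over the two independent runs and the centering remark are minor points the paper leaves implicit, but nothing of substance differs.
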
 

\section{Additional proofs}\label{sec:additional}
\subsection{Effective dimension bound}
\label{additional_proofs}
\begin{lemma}\label{decreasingScore2} For any $\bv{W} \in \mathbb{R}^{n \times p}$ with $\bv{WW}^T \preceq \bv{I}$, 
\begin{align*}
\sum_{i=1}^n l_i^\lambda (\bv{W}^T\bv{K}\bv{W}) \le \sum_{i=1}^n l_i^\lambda (\bv{K}),
\end{align*}
\vspace{-.5em}
or equivalently, by Fact \ref{sum_equal_deff},
\begin{align*}
d_\text{eff}^\lambda(\bv{W}^T\bv{K}\bv{W}) \leq d_\text{eff}^\lambda(\bv{K}).
\end{align*}

\end{lemma}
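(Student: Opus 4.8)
The plan is to reduce the whole statement to a single monotonicity fact about the scalar functional $\bv{A} \mapsto \tr(\bv{A}(\bv{A}+\lambda\bv{I})^{-1})$ on the positive semidefinite cone. First I would apply Fact \ref{sum_equal_deff} to restate the claim: since $\sum_i l_i^\lambda(\bv{M}) = \tr(\bv{M}(\bv{M}+\lambda\bv{I})^{-1}) = d_\text{eff}^\lambda(\bv{M})$, it suffices to show $d_\text{eff}^\lambda(\bv{W}^T\bv{K}\bv{W}) \le d_\text{eff}^\lambda(\bv{K})$. The two matrices have different sizes, so the next step is to move both traces into a common $n\times n$ space via the factorization $\bv{K} = \bv{B}\bv{B}^T$. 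Writing $\bv{D} \eqdef \bv{W}^T\bv{B}$, we have $\bv{W}^T\bv{K}\bv{W} = \bv{D}\bv{D}^T$, and the push-through identity $\bv{M}\bv{M}^T(\bv{M}\bv{M}^T+\lambda\bv{I})^{-1} = \bv{M}(\bv{M}^T\bv{M}+\lambda\bv{I})^{-1}\bv{M}^T$ (exactly the manipulation verified in the text following Definition \ref{leverageDef}), combined with the cyclic property of the trace, yields
\[
d_\text{eff}^\lambda(\bv{K}) = \tr(\bv{B}^T\bv{B}(\bv{B}^T\bv{B}+\lambda\bv{I})^{-1}), \qquad d_\text{eff}^\lambda(\bv{W}^T\bv{K}\bv{W}) = \tr(\bv{D}^T\bv{D}(\bv{D}^T\bv{D}+\lambda\bv{I})^{-1}),
\]
where now $\bv{B}^T\bv{B}$ and $\bv{D}^T\bv{D} = \bv{B}^T\bv{W}\bv{W}^T\bv{B}$ are both $n\times n$ positive semidefinite matrices.

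From here the hypothesis $\bv{W}\bv{W}^T \preceq \bv{I}$ immediately gives $\bv{D}^T\bv{D} = \bv{B}^T(\bv{W}\bv{W}^T)\bv{B} \preceq \bv{B}^T\bv{B}$, so the lemma follows once I establish: for symmetric $\bv{0}\preceq\bv{A}\preceq\bv{A}'$ of the same dimension and $\lambda>0$, $\tr(\bv{A}(\bv{A}+\lambda\bv{I})^{-1}) \le \tr(\bv{A}'(\bv{A}'+\lambda\bv{I})^{-1})$. To prove this, use that $\bv{A}$ commutes with $\bv{A}+\lambda\bv{I}$ to write $\bv{A}(\bv{A}+\lambda\bv{I})^{-1} = \bv{I} - \lambda(\bv{A}+\lambda\bv{I})^{-1}$, so $\tr(\bv{A}(\bv{A}+\lambda\bv{I})^{-1}) = n - \lambda\tr((\bv{A}+\lambda\bv{I})^{-1})$; then $\bv{A}\preceq\bv{A}'$ implies $\bv{A}+\lambda\bv{I}\preceq\bv{A}'+\lambda\bv{I}$, hence $(\bv{A}'+\lambda\bv{I})^{-1}\preceq(\bv{A}+\lambda\bv{I})^{-1}$ by antitonicity of matrix inversion on positive definite matrices, and taking traces gives $\tr((\bv{A}'+\lambda\bv{I})^{-1}) \le \tr((\bv{A}+\lambda\bv{I})^{-1})$, which flips the sign on the subtracted term and delivers the claimed inequality. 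Applying this with $\bv{A} = \bv{D}^T\bv{D}$ and $\bv{A}' = \bv{B}^T\bv{B}$ finishes the proof.

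Every step here is routine; the one place to be careful is the reduction in the first paragraph — one must not try to compare the $p\times p$ matrix $\bv{W}^T\bv{K}\bv{W}$ with the $n\times n$ matrix $\bv{K}$ directly, but instead route both effective dimensions through the common factor $\bv{B}$ using the trace-cyclicity form of the leverage-score identity. As a small byproduct (used in the proof of Theorem \ref{thm:main_algo_theorem}), I would also record the elementary observation that, writing $d_\text{eff}^\lambda(\bv{K}) = \sum_i \sigma_i(\bv{K})/(\sigma_i(\bv{K})+\lambda)$ from the SVD, if $d_\text{eff}^\lambda(\bv{K}) < 1/2$ then in particular $\sigma_1(\bv{K})/(\sigma_1(\bv{K})+\lambda) < 1/2$, i.e. $\lambda > \sigma_1(\bv{K}) = \|\bv{K}\|_2$.
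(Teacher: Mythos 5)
Your proof is correct, and it follows the paper's overall strategy: both arguments factor $\bv{K} = \bv{B}\bv{B}^T$, pass from the $p\times p$ matrix $\bv{W}^T\bv{K}\bv{W} = \bv{W}^T\bv{B}\bv{B}^T\bv{W}$ to the $n\times n$ matrix $\bv{B}^T\bv{W}\bv{W}^T\bv{B}$ via the equality of nonzero spectra of $\bv{Y}\bv{Y}^T$ and $\bv{Y}^T\bv{Y}$, and then exploit $\bv{B}^T\bv{W}\bv{W}^T\bv{B} \preceq \bv{B}^T\bv{B}$, which follows from $\bv{W}\bv{W}^T \preceq \bv{I}$ by conjugation. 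The two proofs diverge only in the final monotonicity step. The paper deduces $\sigma_i(\bv{W}^T\bv{K}\bv{W}) \le \sigma_i(\bv{K})$ for each $i$ (implicitly via Weyl's monotonicity principle for eigenvalues of PSD matrices under the Loewner order) and then sums the scalar-monotone quantities $\sigma/(\sigma+\lambda)$. You instead prove monotonicity of $\bv{A} \mapsto \tr\left(\bv{A}(\bv{A}+\lambda\bv{I})^{-1}\right)$ directly on the PSD cone, via the identity $\bv{A}(\bv{A}+\lambda\bv{I})^{-1} = \bv{I} - \lambda(\bv{A}+\lambda\bv{I})^{-1}$ together with operator antitonicity of inversion and monotonicity of the trace. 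Your route has the minor advantage of never comparing individual eigenvalues, resting instead on the more elementary facts that $\bv{M} \preceq \bv{N} \implies \bv{N}^{-1} \preceq \bv{M}^{-1}$ (already used elsewhere in the paper) and $\bv{M} \preceq \bv{N} \implies \tr(\bv{M}) \le \tr(\bv{N})$; the paper's route keeps the explicit formula $d_\text{eff}^\lambda = \sum_i \sigma_i/(\sigma_i+\lambda)$ front and center, which is the form it reuses in the surrounding arguments (e.g.\ Fact \ref{ridgeScore2kBound} and the $d_{\text{eff}} \ge 1/2$ remark you also record at the end, correctly).
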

\begin{proof}
By Definition \ref{leverageDef}, $l_i^\lambda = \left(\bv{K}(\bv{K} + \lambda \bv{I})^{-1}\right)_{i,i}$ so 
\begin{align*}
\sum_{i=1}^n l_i^\lambda (\bv{K}) = \tr\left(\bv{K}(\bv{K} + \lambda \bv{I})^{-1}\right) = \sum_{i=1}^n \frac{\sigma_i(\bv{K})}{\sigma_i(\bv{K}) + \lambda}.
\end{align*}
Take any matrix $\bv{B} \in \R^{n\times n}$ such that $\bv{B}\bv{B}^T = \bv{K}$. Note that for any matrix $\bv{Y}$, $\sigma_i(\bv{Y}\bv{Y}^T) = \sigma_i(\bv{Y}^T\bv{Y})$ for any non-zero singular values. Accordingly,
\begin{align*}
\sigma_i(\bv{W}^T \bv{K} \bv{W}) = \sigma_i(\bv{W}^T \bv{B}\bv{B}^T  \bv{W}) = \sigma_i(\bv{B}^T  \bv{W}\bv{W}^T \bv{B}) \leq \sigma_i(\bv{B}^T\bv{B}) =\sigma_i(\bv{B}\bv{B}^T) = \sigma_i(\bv{K})  
\end{align*}
The $\leq$ step follows from $\bv{WW}^T \preceq \bv{I}$ so $\bv{B}^T  \bv{W}\bv{W}^T \bv{B} \preceq \bv{B}^T \bv{B}$. We thus have:
\begin{align*}
\sum_{i=1}^n l_i^\lambda (\bv{W}^T\bv{K}\bv{W}) = \sum_{i=1}^p \frac{\sigma_i(\bv{W}^T\bv{K}\bv{W})}{\sigma_i(\bv{W}^T\bv{K}\bv{W}) + \lambda} \le \sum_{i=1}^n \frac{\sigma_i(\bv{K})}{\sigma_i(\bv{K}) + \lambda} = \sum_{i=1}^n l_i^\lambda (\bv{K}),
\end{align*}
giving the lemma.
\end{proof}

\subsection{Proof of Theorem \ref{thm:main_algo_k}: fixed sample size guarantees}
We now prove Theorem \ref{thm:main_algo_k}, which gives the approximation and runtime guarantees for our fixed sample size algorithm, Algorithm \ref{halvingK}. The theorem follows from the recursive invariant:
\begin{theorem}\label{thm:halvingK} With probability $1-3\delta$, Algorithm \ref{halvingK} performs $O(n s)$ kernel evaluations, runs in $O(n s^2 )$ time, and for any integer $k$ with $s \ge c k \log(2k/\delta)$ returns $\bv{S}$ satisfying, for any $\bv{B}$ with $\bv{B}\bv{B}^T = \bv{K}$:
\begin{align}\label{fixedKBound}
\frac{1}{2}(\bv{B}^T\bv{B} + \lambda \bv{I}) \preceq (\bv{B}^T\bv{S}\bv{S}^T \bv{B} + \lambda \bv{I}) \preceq \frac{3}{2} (\bv{B}^T\bv{B} + \lambda \bv{I})
\end{align}
for $\lambda = \frac{1}{k}\sum_{i=k+1}^n \sigma_i(\bv{K})$.
\end{theorem}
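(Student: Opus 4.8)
The argument should mirror the proof of Theorem~\ref{thm:halvingFixed} almost verbatim, inducting on the input size $m$; the only genuinely new work is controlling the data-dependent ridge estimate $\tilde\lambda$ that Algorithm~\ref{halvingK} computes from $\bv{\hat S}^T\bv K\bv{\hat S}$ (there is no analogue in Algorithm~\ref{halvingFixed}, where $\lambda$ is an input). In the small-input case $m\le s$, Algorithm~\ref{halvingK} returns $\bv S=\bv I_{m\times m}$, which makes \eqref{fixedKBound} an equality, uses no kernel evaluations, and runs in $O(m)\le O(ms^2)$ time, with probability $1$. In the inductive step $m>s$: since $m>s\ge ck\log(2k/\delta)$, for $c$ large enough $m$ exceeds the threshold needed for the multiplicative Chernoff bound of \eqref{number_of_samples_bound}, so with probability $1-\delta$ the half-sample $\bar{\mathcal S}$ has between $1$ and $0.56m$ elements, and I may apply the inductive hypothesis to the recursive call (Step~\ref{recursive_call2}). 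This produces $\bv{\tilde S}$, with column count kept $\le 2s$ via Fact~\ref{ridgeScore2kBound} and Lemma~\ref{decreasingScore2}, satisfying \eqref{fixedKBound} for the principal submatrix $\bv K_{\bar{\mathcal S}}=\bv{\bar S}^T\bv K\bv{\bar S}$ with \emph{its own} ridge value $\lambda_{\bar{\mathcal S}}=\frac1{k_{\bar{\mathcal S}}}\sum_{i>k_{\bar{\mathcal S}}}\sigma_i(\bv K_{\bar{\mathcal S}})$, together with the stated kernel-evaluation and runtime bounds for that call.

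\textbf{From $\bv{\tilde S}$ to the output $\bv S$.} Setting $\bv{\hat S}=\bv{\bar S}\bv{\tilde S}$, using $\bv B^T\bv{\hat S}\bv{\hat S}^T\bv B=\bv B^T\bv{\bar S}\bv{\tilde S}\bv{\tilde S}^T\bv{\bar S}^T\bv B$ and the elementary fact that a $(\tfrac12,\tfrac32)$-spectral approximation of $\bv A+\nu\bv I$ upgrades to the same approximation of $\bv A+\mu\bv I$ for every $\mu\ge\nu$, the recursive guarantee yields
\[
\tfrac12\bigl(\bv B^T\bv{\bar S}\bv{\bar S}^T\bv B+\mu\bv I\bigr)\preceq \bv B^T\bv{\hat S}\bv{\hat S}^T\bv B+\mu\bv I\preceq \tfrac32\bigl(\bv B^T\bv{\bar S}\bv{\bar S}^T\bv B+\mu\bv I\bigr)\qquad\text{for all }\mu\ge\lambda_{\bar{\mathcal S}}.
\]
Combined with Lemma~\ref{uniformSampling} applied to the uniform half-sample $\bv{\bar S}$ (which with probability $1-\delta$ compares $\bv B^T\bv{\bar S}\bv{\bar S}^T\bv B+\mu\bv I$ to $\bv B^T\bv B+\mu\bv I$), this shows $\bv{\hat S}$ is an adequate sketch to read off $\mu$-ridge leverage score over-estimates of $\bv K$ for every $\mu\ge\lambda_{\bar{\mathcal S}}$. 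Once one establishes that the target $\lambda=\frac1k\sum_{i>k}\sigma_i(\bv K)$ is (up to a constant) at least $\lambda_{\bar{\mathcal S}}$ and that $\tilde\lambda=\frac1k\sum_{i>k}\sigma_i(\bv{\hat S}^T\bv K\bv{\hat S})$ lies within a constant factor of $\lambda$, then exactly as in the derivation of \eqref{second_level_appox} (now via Lemma~\ref{michaels_lemma} applied with $\tilde\lambda$) the $\tilde l_i^\lambda$ produced in Step~\ref{step5} are honest over-estimates of the true $\lambda$-ridge leverage scores of $\bv K$ with $\sum_i p_i=O\bigl(d_{\text{eff}}^\lambda\log(d_{\text{eff}}^\lambda/\delta)\bigr)=O(k\log(k/\delta))$ by Fact~\ref{ridgeScore2kBound}; the extra prefactor $5$ in Step~\ref{step5} (versus $\tfrac32$ in Algorithm~\ref{halvingFixed}) is present precisely to absorb these constant-factor losses, the $(\tfrac12,\tfrac32)\!\to\!(\tfrac23,2)$ loss under matrix inversion, and the $\log(2k/\delta)$-versus-$\log(\sum_i\tilde l_i^\lambda/\delta)$ mismatch. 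Feeding these $p_i$ into the matrix Bernstein bound of Lemma~\ref{bernstein} (with ridge parameter $\lambda$) then gives, with probability $1-\delta$, both \eqref{fixedKBound} for the weighted output $\bv S$ and $\tfrac12\sum_i p_i\le s'\le 2\sum_i p_i$, so $s'\le 2s$ for $c$ large. The per-level cost is $O(ns+s^3)=O(ns^2)$ and $O(ns)$ kernel evaluations (Lemma~\ref{michaels_lemma}), so the recursion telescopes to $\sum_j(0.56)^j\,O(ms^2)=O(ms^2)$; a union bound over the $O(1)$ failure events (sample-size concentration, Lemma~\ref{uniformSampling} on $\bv{\bar S}$, the recursive call, the final Lemma~\ref{bernstein}) gives total probability $1-3\delta$. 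Finally, Theorem~\ref{thm:main_algo_k} follows from \eqref{fixedKBound} exactly as Theorem~\ref{thm:main_algo_theorem} followed from Theorem~\ref{thm:halvingFixed}, since discarding the column weights does not change $\bv{\tilde K}$.

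\textbf{Main obstacle.} I expect the delicate step to be the simultaneous control of $\lambda_{\bar{\mathcal S}}$ and $\tilde\lambda$ against the target $\lambda$. One half is routine: $\sigma_i(\bv K_{\bar{\mathcal S}})\le\sigma_i(\bv K)$ by Cauchy interlacing (or Lemma~\ref{decreasingScore2}), and the displayed spectral bound on $\bv{\hat S}$ together with Weyl's inequality pins $\sigma_i(\bv{\hat S}^T\bv K\bv{\hat S})$ to within a constant of $\sigma_i(\bv K)$ on the tail that matters. The subtlety is that the recursive call is invoked with failure probability $\delta/3$, so the cutoff integer $k_{\bar{\mathcal S}}$ defined inside it can be slightly smaller than $k$, which in the worst case would make $\lambda_{\bar{\mathcal S}}$ larger than $\lambda$; ruling this out requires using that $c$ is a sufficiently large universal constant, so that $k$ changes by only a $1+o(1)$ factor per recursion level, and then checking that the slack built into the prefactor $5$ and into the $s'\le 2s$ budget dominates the remaining constant. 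Everything else is the proof of Theorem~\ref{thm:halvingFixed} with $\lambda$ replaced throughout by the estimate $\tilde\lambda$.
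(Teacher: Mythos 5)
Your architecture is the same as the paper's: induct on input size exactly as in Theorem \ref{thm:halvingFixed}, use Lemma \ref{uniformSampling} for the uniform half-sample, absorb the loss from replacing $\bv{\bar S}$ by $\bv{\hat S}$ and the exact ridge by $\tilde \lambda$ into the oversampling factor $5$, finish with Lemma \ref{bernstein}, and control the sample count via Fact \ref{ridgeScore2kBound}. The one step that would fail as literally stated is your control of $\tilde\lambda$: you ask that $\tilde\lambda$ lie within a constant factor of $\lambda$ and claim Weyl's inequality pins $\sigma_i(\bv{\hat S}^T\bv{K}\bv{\hat S})$ to within a constant of $\sigma_i(\bv{K})$ on the tail. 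Only one direction holds: $\sigma_i(\bv{\bar S}^T\bv{K}\bv{\bar S})\le\sigma_i(\bv{K})$ (Lemma \ref{decreasingScore2}), so $\lambda'\eqdef\frac1k\sum_{i>k}\sigma_i(\bv{\bar S}^T\bv{K}\bv{\bar S})\le\lambda$, but a uniform half-sample can lose most of the spectral tail, so no lower bound on $\lambda'$ (hence on $\tilde\lambda$) in terms of $\lambda$ is available. None is needed: the paper works entirely with the one-sided chain $\lambda'\le\tilde\lambda\le 3\lambda'$ (extracted from the recursive guarantee) together with $\lambda'\le\lambda$; sampling by over-estimates of the $\lambda'$-ridge scores yields \eqref{fixedKBound} at ridge $\lambda'$, which upgrades to ridge $\lambda\ge\lambda'$ by exactly the monotonicity fact you state, and the sample count is bounded not via $d_{\text{eff}}^\lambda(\bv{K})$ directly but by rerunning the reweighting argument of Lemma \ref{uniformSampling} with the data-dependent ridge of the reweighted matrix and applying Fact \ref{ridgeScore2kBound} to that matrix. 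With this (local, easy) repair your proof coincides with the paper's; your observation that the recursive call's failure probability $\delta/3$ shrinks the set of admissible $k$ is a genuine subtlety that the paper's own proof glosses over.
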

\begin{proof}
Assume by induction that after forming $\bv{\bar S}$ via uniformly sampling, the recursive call to Algorithm \ref{halvingK} returns $\bv{\tilde S}$ such that $\bv{\hat S} = \bv{\bar S} \cdot \bv{\tilde S}$ satisfies:
\begin{align}\label{primeprimeBound}
\frac{1}{2}(\bv{B}^T\bv{\bar S}\bv{\bar S}^T\bv{B} + \lambda' \bv{I}) \preceq (\bv{B}^T\bv{\hat S}\bv{\hat S}^T \bv{B} + \lambda' \bv{I}) \preceq \frac{3}{2} (\bv{B}^T\bv{\bar S}\bv{\bar S}^T\bv{B} + \lambda' \bv{I}).
\end{align}
where $\lambda' =\frac{1}{k} \sum_{i=k+1}^n \sigma_i(\bv{\bar S}^T \bv{K} \bv{\bar S}).$ This implies that $\tilde \lambda = \frac{1}{k}\sum_{i=k+1}^n \sigma_i(\bv{\hat S}^T\bv{K}\bv{\hat S})$ satisfies:
\begin{align*}
\frac{1}{2k} \left ( \sum_{i=k+1}^n \sigma_i(\bv{\bar S}^T\bv{K}\bv{\bar S})+ k \lambda' \right) &\le \tilde \lambda \le \frac{3}{2k} \left ( \sum_{i=k+1}^n \sigma_i(\bv{\bar S}^T\bv{K}\bv{\bar S})+ k \lambda' \right)\\
 \lambda' &\le \tilde \lambda \le 3 \lambda'.
\end{align*}
Combining with \eqref{primeprimeBound} we have:
\begin{align*}
\frac{1}{2}(\bv{B}^T\bv{\bar S}\bv{\bar S}^T\bv{B} + \lambda' \bv{I}) \preceq (\bv{B}^T\bv{\hat S}\bv{\hat S}^T \bv{B} +  \tilde \lambda \bv{I}) \preceq \frac{9}{2}(\bv{B}^T\bv{\bar S}\bv{\bar S}^T\bv{B} + \lambda' \bv{I}).
\end{align*}

So, for all $i$, $\tilde l_i^{\lambda}$ (which is computed using $(\bv{B}^T\bv{\hat S}\bv{\hat S}^T \bv{B} + \tilde \lambda \bv{I})$ and oversampling factor $5$ in Step \ref{step5} of Algorithm \ref{halvingK}) is at least as large as the approximate leverage score computed using $\bv{\bar S}$ instead of $\bv{\hat S}$. If we sample by these scores, by Lemma \ref{uniformSampling} and Lemma \ref{bernstein} we will have with probability  $1-\delta$:
\begin{align*}
\frac{1}{2}(\bv{B}^T\bv{B} + \lambda' \bv{I}) \preceq (\bv{B}^T\bv{S}\bv{S}^T \bv{B} + \lambda' \bv{I}) \preceq \frac{3}{2} (\bv{B}^T\bv{B} + \lambda' \bv{I})
\end{align*}
which implies \eqref{fixedKBound} since $\lambda' \le \lambda$ since $\norm{\bv{\bar S}}_2 \le 1$ so $\sigma_i(\bv{\bar S}^T \bv{K} \bv{\bar S}) \le \sigma_i(\bv{K})$ for all $i$.

It just remains to show that we do not sample too many points. This can be shown using a similar reweighting argument to that used in the fixed $\lambda$ case in Lemma \ref{uniformSampling}. Full details appear in Lemma 13 of \cite{cohen2015ridge}. When forming the reweighting matrix $\bv{W}$, decreasing $\bv{W}_{i,i}$ will decrease $\sum_{i=k+1}^n \sigma_i(\bv{W}\bv{K}\bv{W})$ and hence will decrease $\lambda$. However, it is not hard to show that the $i^\text{th}$ ridge leverage score will still decrease. So we can find $\bv{W}$ giving a uniform ridge leverage score upper bound of $\alpha$. Let $\lambda ' = \sum_{i=k+1}^n \sigma_i(\bv{W}\bv{K}\bv{W})$. 

Using the same argument as Lemma \ref{uniformSampling}, we can bound the sum of estimated sampling probabilities by $64\log(\sum l_i^{\lambda'}(\bv{W}\bv{K}\bv{W}) / \delta) \cdot \sum l_i^{\lambda'}(\bv{W}\bv{K}\bv{W}) \le s/5$ by Fact \ref{ridgeScore2kBound} if we set $c$ large enough. The runtime and failure probability analysis is identical to that of Algorithm \ref{halvingFixed} (Theorem \ref{thm:halvingFixed}) -- the only extra step is computing $\tilde \lambda$ which can be done in $O(s^3)$ time via an SVD of $\bv{\hat S}^T \bv{K} \bv{\hat S}$.
\end{proof}

\begin{proof}[Proof of Theorem \ref{thm:main_algo_k}] The theorem follows immediately since Theorem \ref{thm:halvingK} guarantees that in the final level of recussion $\bv{K}$ is sampled by overestimates of its $\lambda$-ridge leverage scores.
The runtime bound follows from Theorem \ref{thm:halvingK} and the fact that it is possible to compute $\bv{KS}$ using $O(n  s)$ kernel evaluations and $(\bv{S}^T\bv{K}\bv{S})^+$ using $O(ns^2 + s^3) = O(n s^2)$ additional time.
\end{proof}

\end{document}